\definecolor{pearThree}{HTML}{E74C3C}
\definecolor{pearcomp}{HTML}{B97E29}
\definecolor{pearDark}{HTML}{2980B9}
\definecolor{pearDarker}{HTML}{1D2DEC}
\newtheorem{definition}{Definition}
\newtheorem{assumption}{Assumption}
\newtheorem{proposition}{Proposition}
\newtheorem{corollary}{Corollary}
\newtheorem{theorem}{Theorem}
\newtheorem{lemma}{Lemma}
\newcommand{\wb}[1]{\overline{#1}}
\newcommand{\wt}[1]{\widetilde{#1}}
\newcommand{\wh}[1]{\widehat{#1}}
\newcommand{\R}{\mathbb{R}}
\newcommand{\E}{\mathbb{E}}
\renewcommand{\P}{\mathbb{P}}
\newcommand{\prob}[1]{\mathbb{P}\left\{#1\right\}}
\newcommand{\indi}[1]{\mathds{1}\left\{#1\right\}}
\newcommand{\tr}{\mathsf{T}}
\newcommand{\argmax}{\operatornamewithlimits{argmax}}
\renewcommand{\S}{\mathcal{S}} 
\newcommand{\A}{\mathcal{A}} 
\newcommand{\M}{\mathcal{M}} 
\newcommand{\alt}{\Lambda(\M)}
\renewcommand{\O}{\mathcal{O}} 
\newcommand{\weakopt}{\Pi^\star}
\newcommand{\strongopt}{\Pi_{\O}^\star}
\title{A Fully Problem-Dependent Regret Lower Bound for Finite-Horizon MDPs}
\author[1]{Andrea Tirinzoni}
\author[2]{Matteo Pirotta}
\author[2]{Alessandro Lazaric}
\affil[1]{Inria Lille}
\affil[2]{Facebook AI Research}
\date{}
\begin{document}

\maketitle

\doparttoc 
\faketableofcontents 

\begin{abstract}
We derive a novel asymptotic problem-dependent lower-bound for regret minimization in finite-horizon tabular Markov Decision Processes (MDPs). While, similar to prior work (e.g., for ergodic MDPs), the lower-bound is the solution to an optimization problem, our derivation reveals the need for an additional constraint on the visitation distribution over state-action pairs that explicitly accounts for the dynamics of the MDP. We provide a characterization of our lower-bound through a series of examples illustrating how different MDPs may have significantly different complexity. 1) We first consider a ``difficult'' MDP instance, where the novel constraint based on the dynamics leads to a larger lower-bound (i.e., a larger regret) compared to the classical analysis. 2) We then show that our lower-bound recovers results previously derived for specific MDP instances. 3) Finally, we show that, in certain ``simple'' MDPs, the lower bound is considerably smaller than in the general case and it does not scale with the minimum action gap at all. We show that this last result is attainable (up to $poly(H)$ terms, where $H$ is the horizon) by providing a regret upper-bound based on policy gaps for an optimistic algorithm.
\end{abstract}

\section{Introduction}\label{sec:introduction}
There has been a recent surge of interest for problem-dependent analyses of reinforcement learning (RL) algorithms, both in the context of best policy identification~\citep[e.g.,][]{zanette2019almost,marjani2021adaptive} and regret minimization~\citep[e.g.,][]{simchowitz2019non,he2020logarithmic,Yang2021qlearnin,xu2021fine}.
Before this recent trend, problem-dependent bounds were limited to regret minimization in average-reward Markov decision processes (MDPs)~\citep[e.g.][]{burnetas1997optimal,tewari2007opotimistic,jaksch2010near,ok2018exploration}. Notably, \citet{burnetas1997optimal} derived the first problem-dependent asymptotic lower bound for regret minimization in \emph{ergodic} average-reward MDPs and designed an algorithm matching this fundamental limit. Their lower bound was successively extended by~\citet{ok2018exploration} to structured MDPs. However, these results remain restricted to ergodic MDPs, where the need of exploration is limited to the action space since states are repeatedly visited under any policy.

In finite-horizon MDPs, the literature has focused on deriving problem-dependent ``worst-case'' lower bounds for regret minimization~\citep{simchowitz2019non, xu2021fine} with no state reachability assumption (i.e., the counterpart of ergodicity for finite-horizon MDPs). These results are simultaneously \emph{i)} problem-dependent since they scale with instance-specific quantities (e.g., the action-gaps); \emph{ii)} ``worst-case'' since they are derived only for ``hard'' instances.
Notably, \citet{xu2021fine} proved that there exists a specific MDP such that any consistent algorithm must suffer a regret depending on the inverse of the minimum gap and derived an algorithm with matching regret upper bound.

Despite these results, ``fully'' problem-dependent lower bounds are still missing, i.e., bounds that depend on the properties of any given MDP, instead of relying on specific worst-case instances. In this paper, we take a step in this direction by deriving the first ``fully'' problem-dependent asymptotic regret lower bound for finite-horizon MDPs. Our lower bound generalizes existing results and provides new insights on the ``true'' complexity of exploration in this setting. 
Similarly to average-reward MDPs, our lower-bound is the solution to an optimization problem, but it does not require any assumption on state reachability. Our derivation reveals the need for a constraint on the visitation distribution over state-action pairs that explicitly accounts for the dynamics of the MDP. Interestingly, we show examples where this constraint is crucial to obtain tight lower-bounds and to match existing results derived for specific MDP instances. Finally, we show that, in certain ``simple'' MDPs, the lower bound is considerably smaller than in the general case and it does not scale with the minimum action-gap. We show that this result is attainable (up to $poly(H)$ terms) by providing a novel regret upper-bound for an optimistic algorithm.

\paragraph{Existing asymptotic problem-dependent lower bounds.}

In \emph{ergodic} average-reward MDPs, \citet{burnetas1997optimal,tewari2007opotimistic,ok2018exploration} showed that the optimal problem-dependent regret scales roughly as $O(\sum_{s,a} \frac{\log T}{\Delta(s,a)})$, where $\Delta(s,a)$ is the sub-optimality gap (i.e., the advantage) of action $a$ in state $s$ and $T$ is the number of learning steps.\footnote{More precisely, it scales with a sum of local complexity measures which are related to the sub-optimality gaps~\citep{tewari2007opotimistic}.} This bound has the same shape as the optimal problem-dependent regret in contextual bandits~\citep[e.g.,][]{lattimore2020bandit}. In finite-horizon MDPs, \citet{simchowitz2019non} first showed that the sum of the inverse gaps is a loose lower bound for a specific family of optimistic algorithms, which in the worst-case may suffer from a regret of at least $\Omega(\frac{S}{\Delta_{\min}}\log K)$, where $S$ is the number of states, $K$ is the number of episodes, and $\Delta_{\min}$ is the minimum sub-optimality gap. \citet{xu2021fine} later refined this result showing that there exists a ``hard'' MDP where any consistent algorithm suffers a regret proportional to $\Omega(\frac{SA}{\Delta_{\min}}\log K)$. In such an instance, this bound is exponentially (in $S$) larger than the sum of inverse gaps and it is proportional to $\Omega(\frac{Z_{\mathrm{mul}}}{\Delta_{\min}}\log K)$, where $Z_{\mathrm{mul}}$ is the total number of optimal actions in states where the optimal action is not unique. This suggests that the number of non-unique optimal actions may be key to characterize the ``worst-case'' complexity in finite-horizon MDPs.
\section{Preliminaries}\label{sec:preliminaries}
We consider a time-inhomogeneous finite-horizon MDP $\M := (\S, \A, \{p_h,q_h\}_{h\in[H]}, p_0, H)$~\citep{puterman1994markov}, where $\S$ is a finite set of $S$ states, $\A$ is a finite set of $A$ actions, $p_h : \S\times\A \rightarrow \P(\S)$ and $q_h : \S\times\A \rightarrow \P(\R)$ are the transition probabilities and the reward distribution at stage $h\in [H] := \{1,\ldots,H\}$, $p_0 \in \P(\S)$ is the initial state distribution, and $H$ is the horizon.\footnote{$\P(\Omega)$ denotes the set of probability measures over a set $\Omega$.} We denote by $r_h(s,a)$ the expected reward after taking action $a$ in state $s$. A (deterministic) Markov policy $\pi = \{\pi_h\}_{h\in[H]} \in \Pi$ is a sequence of mappings $\pi_h : \S \rightarrow \A$. Let $\Pi$ be the set of such policies.
Executing a policy $\pi$ on $\M$ yields random trajectories $(s_1, a_1, y_1, \dots, s_H,a_H,y_H)$, where $s_1 \sim p_0$, $a_h = \pi_h(s_h)$, $s_{h+1} \sim p_h(s_h,a_h)$, and $y_{h} \sim q_h(s_h,a_h)$. We denote by $\mathbb{P}_{\M}^\pi,\E_{\M}^\pi$ the corresponding probability and expectation operators, and let $\rho_{\M,h}^\pi(s,a) := \mathbb{P}_\M^\pi\{s_h=s,a_h=a\}$ and $\rho_{\M,h}^\pi(s) := \rho_{\M,h}^\pi(s,\pi_h(s))$ be the state-action and state occupancy measures at stage $h$.
For each $s\in\S$ and $h\in[H]$, we define the action-value function of a policy $\pi$ in $\M$ as
\begin{align*}
    Q_{\mathcal{M},h}^\pi(s,a) := \E_\M^\pi\left[\sum_{h' = h}^H r_{h'}(s_{h'},a_{h'}) | s_h = s, a_h=a\right],
\end{align*}
while the corresponding value function is $V_{\M,h}^\pi(s) := Q_{\mathcal{M},h}^\pi(s,\pi_h(s))$. Let $V_{\M,0}^\pi := \E_{s_1 \sim p_0}[V_{\M,1}^\pi(s_1)]$ be the \emph{expected return} of policy $\pi$ and $V_{\M,0}^\star = \sup_{\pi} V_{\M,0}^\pi$. We define the set of \emph{return-optimal} policies as
\begin{align}\label{eq:return-optimal-policy}
	\weakopt(\M) := \{\pi\in\Pi\ |\ V_{\M,0}^\pi = V_{\M,0}^\star\}.
\end{align}
%
By standard MDP theory~\citep[e.g.,][]{puterman1994markov}, there exists a unique optimal action-value function $Q^\star_{\M,h}$ that satisfies the Bellman optimality equations \emph{for any $h \in [H], s\in\S, a\in\A$},
 \begin{align}\label{eq:bellman-opt}
     Q_{\mathcal{M},h}^\star(s,a) = r_h(s,a) + p_h(s,a)^\tr V_{\M,h+1}^\star,
 \end{align}
where $V_{\M,h}^\star(s) := \max_{a\in\A} Q_{\M,h}^\star(s,a)$.
We define the set of Bellman-optimal actions at state-stage $(s,h)$ as $\mathcal{O}_{\M,h}(s) := \{a\in\A : Q_{\M,h}^\star(s,a) = V_{\M,h}^\star(s)\}$.
Then, the set of \emph{Bellman-optimal} policies is $\strongopt(\M) := \{\pi\in\Pi\ |\ \forall s,h : \pi_h(s) \in \mathcal{O}_{\M,h}(s) \}$. 
A Bellman-optimal policy is always return optimal, i.e., $\strongopt(\M) \subseteq \weakopt(\M)$, while it easy to construct examples where the reverse is not true (i.e., a return-optimal policy is not Bellman optimal). 
Finally, we introduce the \emph{policy gap} $\Gamma_{\M}(\pi) := V_{\M,0}^\star - V_{\M,0}^{\pi}$ and the \emph{action gap} of $a\in\A$ in state $s\in\S$ at stage $h\in[H]$ as
\begin{align}
	\Delta_{\M,h}(s,a) := V^\star_{\M,h}(s) - Q^\star_{\M,h}(s,a).
\end{align}
These two notions of sub-optimality are related by the following equation (proof in App.~\ref{app:auxiliary}):
\begin{align}\label{eq:policy.gap}
	\Gamma_{\M}(\pi) 
	&= \sum_{s\in\S}\sum_{a\in\A} \sum_{h\in[H]}\rho_{\M,h}^\pi(s,a)\Delta_{\M,h}(s,a).
\end{align}
This relationship further shows that a policy $\pi$ can be return-optimal despite selecting actions with $\Delta_{\M,h}(s,a)>0$ (hence it is not Bellman-optimal) at states that have zero occupancy measure $\rho_{\M,h}^\pi(s,a)$.

We consider the standard online learning protocol for finite-horizon MDPs.
At each \emph{episode} $k\in[K]$, the learner plays a policy $\pi_k$ and observes a random trajectory $(s_{k,h}, a_{k,h}, y_{k,h}, \dots, s_{k,H}, a_{k,H}, y_{k,H}) \sim \mathbb{P}_\M^{\pi_k}$. The choice of $\pi_k$ is made by a \emph{learning algorithm} $\mathfrak{A}$, i.e., a measurable function that maps the observations up to episode $k-1$ to policies. The goal is to minimize the cumulative regret,
\begin{align}\label{eq:regret}
    \mathrm{Regret}_K(\M) := \sum_{k=1}^K \Gamma_{\M}(\pi_k) = \sum_{k=1}^K \left(V_{\M,0}^\star - V_{\M,0}^{\pi_k}\right).
\end{align}

\section{Problem-dependent Lower Bound}\label{sec:lower.bound}

As customary in information-theoretic problem-dependent lower bounds, we derive our result for any MDP $\M$ in a given set $\mathfrak{M}$ of MDPs with the same state-action space but different transition probabilities and reward distributions.\footnote{While the focus of this paper is on the standard (unstructured) tabular setting, the set $\mathfrak{M}$ can be used to encode \emph{structure}, i.e., prior knowledge about the problem \citep{ok2018exploration}.}
%
%
Formally, we derive an \emph{asymptotic problem-dependent} lower bound on the expected regret of any ``provably-efficient'' learning algorithm on the set of MDPs $\mathfrak{M}$.
\begin{definition}[$\alpha$-uniformly good algorithm]\label{def:good-alg}
Let $\alpha \in (0,1)$, then a learning algorithm $\mathfrak{A}$ is $\alpha$-uniformly good on $\mathfrak{M}$ if, for each $K\in\mathbb{N}_{>0}$ and $\M\in\mathfrak{M}$, there exists a constant $c(\M)$ such that $\E_{\M}^{\mathfrak{A}}\left[\mathrm{Regret}_K(\M)\right] \leq c(\M)K^\alpha$.
\end{definition}
Note that existing algorithms with $\mathcal{O}(\sqrt{K})$ worst-case regret \citep[e.g.,][]{azar2017minimax,zanette2019tighter} are $\nicefrac{1}{2}$-uniformly good, while those with logarithmic regret \citep[e.g.,][]{simchowitz2019non,xu2021fine} are $\alpha$-uniformly good for all $\alpha\in(0,1)$. 
For the purpose of deriving asymptotic lower bounds, Definition \ref{def:good-alg} can be replaced by assuming that the expected regret is $o(K^\alpha)$ only for sufficiently large $K$ and for any $\alpha\in(0,1)$ \citep{burnetas1997optimal,ok2018exploration}. We make the following assumption on the MDP $\M$ for which we derive the lower bound.
\begin{assumption}[Unique optimal state distribution]\label{ass:unique-opt-rho}
There exists $\rho_{\M,h}^\star \in \P(\S)$ such that, for any optimal policy $\pi \in \Pi^\star(\M)$ and for any $s\in\S,h\in[H]$, $\rho_{\M,h}^\star(s) = \rho_{\M,h}^{\pi}(s)$.
\end{assumption}
This assumption requires all return-optimal policies of $\M$ to induce the same distribution over the state space. This is strictly weaker than assuming a unique optimal action at each state (see Lem.~\ref{lemma:unique-opt-pi-vs-rho} in App.~\ref{app:auxiliary}), as commonly done in the contextual bandit setting \citep{hao2020adaptive,tirinzoni2020asymptotically} and in MDPs \citep{marjani2021adaptive}. 

Let $\mathcal{O}_\M^\star := \{(s,a,h) : s\in\mathrm{supp}(\rho_{\M,h}^\star),a\in\mathcal{O}_{\M,h}(s)\}$ be the set of state-action-stage triplets containing all optimal actions in states that are visited by optimal policies of $\M$. We introduce the following set of \emph{alternative} MDPs to $\M$:
\begin{align*}
         \Lambda(\M) := \Lambda^{\mathrm{wa}}(\M) \cap \Lambda^{\mathrm{wc}}(\M),
    \end{align*}
    where $\Lambda^{\mathrm{wa}}(\M) := \{\M'\in\mathfrak{M} \ |\ \weakopt(\M) \cap \weakopt(\M') = \emptyset \}$ and\footnote{The KL divergence between two MDPs is defined as $\mathrm{KL}_{s,a,h}(\M,\M') = \mathrm{KL}(p_{h}(s,a),p_{h}'(s,a)) + \mathrm{KL}(q_{h}(s,a),q_{h}'(s,a))$.

     }
\begin{align*}
    \Lambda^{\mathrm{wc}}(\M) := \{\M'\in\mathfrak{M} \ |\ \forall  (s,a,h) \in \mathcal{O}_{\M}^\star : \mathrm{KL}_{s,a,h}(\M,\M') = 0 \}.
\end{align*}
The set of alternatives is a key component in the derivation of information-theoretic problem-dependent lower bounds \citep[e.g.,][]{lai1985asymptotically}. 
Similar to \citep{burnetas1997optimal,ok2018exploration}, the set of alternatives $\Lambda(\M)$ is the intersection of two sets: (1) the set of \emph{weak alternatives} $\Lambda^{\mathrm{wa}}(\M)$, i.e., MDPs that have no return-optimal policy in common with $\M$; and (2) the set of \emph{weakly confusing} MDPs $\Lambda^{\mathrm{wc}}(\M)$, i.e., MDPs whose dynamics and rewards are indistinguishable from $\M$ on the state-action pairs observed while executing any return-optimal policy for $\M$.
%
Notice that the set $\Lambda^{\mathrm{wc}}(\M)$ differs from the set of \emph{confusing} MDPs considered in~\citep{burnetas1997optimal,ok2018exploration}. In their case, the zero-KL condition is imposed over all states since the MDP $\M$ is assumed ergodic, which implies that any optimal policy visits all the states with positive probability. In our case, since we do not make any ergodicity assumption, optimal policies may not visit some states at some stages. Therefore, even if the kernels of $\M$ and $\M'$ differ at some optimal action in any such state, the two MDPs remain indistinguishable by playing return-optimal policies. 
With these notions in mind, we are now ready to state our problem-dependent lower bound.

\begin{theorem}\label{th:main-lower-bound}
Let $\mathfrak{A}$ be any $\alpha$-uniformly good learning algorithm on $\mathfrak{M}$ with $\alpha \in (0,1)$. Then, for any $\M\in\mathfrak{M}$ that satisfies Assumption \ref{ass:unique-opt-rho},
\begin{equation*}
\liminf_{K \rightarrow \infty}\frac{\E_{\M}^{\mathfrak{A}}\left[\mathrm{Regret}_K(\M)\right]}{\log(K)} \geq v^\star(\M),
\end{equation*} 
where $v^\star(\M)$ is the value of the optimization problem
%
	
		\begin{equation*}
	\begin{aligned}
    &\underset{\eta\in\mathbb{R}^{SAH}_{\geq 0}}{\inf}&& \sum_{s\in\S}\sum_{a\in\A}\sum_{h\in[H]} \eta_h(s,a)\Delta_{\M,h}(s,a),
     \\
     &
    \quad \mathrm{s.t.} \quad
     && 
        \inf_{\M' \in \alt } \sum_{s\in\S}\sum_{a\in\A} \sum_{h\in[H]}\eta_h(s,a)\mathrm{KL}_{s,a,h}(\M,\M') \geq 1-\alpha,
    \\
    &&&
   \sum_{a\in\A}\eta_h(s,a) = \sum_{s'\in\S}\sum_{a'\in\A}p(s|s',a')\eta_{h-1}(s',a') \quad \forall s\in\S, h>1,
    \\ 
    &&&
    \sum_{a\in\A}\eta_1(s,a) = 0 \quad \forall s\notin\mathrm{supp}(p_0).
	\end{aligned}
	\end{equation*}
\end{theorem}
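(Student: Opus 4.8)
The plan is to run the Graves--Lai / Burnetas--Katehakis change-of-measure argument in the finite-horizon setting. Write $\bar N_{K,h}(s,a):=\E_\M^{\mathfrak{A}}\!\big[\sum_{k=1}^K\indi{s_{k,h}=s,a_{k,h}=a}\big]=\sum_{k=1}^K\E_\M^{\mathfrak{A}}[\rho_{\M,h}^{\pi_k}(s,a)]$ for the expected number of visits to $(s,a)$ at stage $h$ up to episode $K$. By the policy-gap identity~\eqref{eq:policy.gap} and the tower rule, $\E_\M^{\mathfrak{A}}[\mathrm{Regret}_K(\M)]=\sum_{s,a,h}\bar N_{K,h}(s,a)\Delta_{\M,h}(s,a)$, so the normalized counts $\eta^{(K)}_h(s,a):=\bar N_{K,h}(s,a)/\log K$ are the natural candidate for the decision variable, and they make the objective of the optimization problem equal to $\E_\M^{\mathfrak{A}}[\mathrm{Regret}_K(\M)]/\log K$. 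The two dynamics constraints hold \emph{exactly} for every $K$: taking expectations in $\P_\M^{\pi_k}\{s_h=s\}=\sum_{s',a'}p_{h-1}(s\,|\,s',a')\rho_{\M,h-1}^{\pi_k}(s',a')$ gives $\sum_a\bar N_{K,h}(s,a)=\sum_{s',a'}p_{h-1}(s\,|\,s',a')\bar N_{K,h-1}(s',a')$, while $\bar N_{K,1}(s)=K\,p_0(s)$ forces $\sum_a\eta^{(K)}_1(s,a)=0$ for $s\notin\mathrm{supp}(p_0)$. It therefore remains to show that $\eta^{(K)}$ asymptotically satisfies the information constraint and to pass to the limit.

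For the information constraint, if $\Lambda(\M)=\emptyset$ the bound is trivial (the zero vector is feasible with value $0$), so fix $\M'\in\Lambda(\M)$. The chain rule for the KL divergence between the length-$K$ histories generated by $\mathfrak{A}$ on $\M$ and on $\M'$ gives $\mathrm{KL}(\P_\M^{\mathfrak{A}},\P_{\M'}^{\mathfrak{A}})=\sum_{s,a,h}\bar N_{K,h}(s,a)\mathrm{KL}_{s,a,h}(\M,\M')$, and the terms with $(s,a,h)\in\mathcal{O}_\M^\star$ vanish because $\M'\in\Lambda^{\mathrm{wc}}(\M)$. Since $\weakopt(\M)\cap\weakopt(\M')=\emptyset$ and $\Pi$ is finite, $\delta_1:=\min_{\pi\in\weakopt(\M')}\Gamma_\M(\pi)>0$ and $\delta_2:=\min_{\pi\notin\weakopt(\M')}\Gamma_{\M'}(\pi)>0$; with the event $\mathcal{E}_K:=\{\#\{k\le K:\pi_k\in\weakopt(\M')\}\ge K/2\}$, $\alpha$-uniform goodness on $\M$ and Markov's inequality give $\P_\M^{\mathfrak{A}}(\mathcal{E}_K)\le 2c(\M)K^{\alpha-1}/\delta_1\to0$, and on $\M'$ they give $\P_{\M'}^{\mathfrak{A}}(\mathcal{E}_K^{\mathrm c})\le 2c(\M')K^{\alpha-1}/\delta_2\to0$. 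Combining the data-processing inequality $\mathrm{KL}(\P_\M^{\mathfrak{A}},\P_{\M'}^{\mathfrak{A}})\ge\mathrm{kl}(\P_\M^{\mathfrak{A}}(\mathcal{E}_K),\P_{\M'}^{\mathfrak{A}}(\mathcal{E}_K))$ (with $\mathrm{kl}$ the binary relative entropy) with the bound $\mathrm{kl}(x,y)\ge(1-x)\log\frac{1}{1-y}-\log 2$ then yields $\liminf_K\sum_{s,a,h}\eta^{(K)}_h(s,a)\mathrm{KL}_{s,a,h}(\M,\M')\ge 1-\alpha$, for each fixed $\M'$.

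To conclude, let $L:=\liminf_K\E_\M^{\mathfrak{A}}[\mathrm{Regret}_K(\M)]/\log K$, assume $L<\infty$ (otherwise there is nothing to prove), and pass to a subsequence attaining $L$. On every triplet with $\Delta_{\M,h}(s,a)>0$ the objective identity keeps $\eta^{(K)}_h(s,a)$ bounded, and a short induction on $h$ extends boundedness to \emph{every} triplet outside $\mathcal{O}_\M^\star$: here I use the consequence of Assumption~\ref{ass:unique-opt-rho} that any positive-probability transition out of a triplet in $\mathcal{O}_\M^\star$ lands in a state in $\mathrm{supp}(\rho_{\M,h}^\star)$ (witnessed by a Bellman-optimal policy traversing that triplet, whose state occupancy equals $\rho_{\M,\cdot}^\star$), so a triplet outside $\mathcal{O}_\M^\star$ can only receive flow from triplets that are suboptimal or again outside $\mathcal{O}_\M^\star$. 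Along a further subsequence, $\eta^{(K_j)}_h(s,a)\to\eta^\star_h(s,a)<\infty$ for all such triplets, and the objective, the initial constraint, the flow constraints not touching $\mathcal{O}_\M^\star$, and the information constraint (whose coefficients vanish on $\mathcal{O}_\M^\star$, so the per-$\M'$ $\liminf$ above passes through before the infimum over $\Lambda(\M)$) all survive the limit. \emph{The step I expect to be the main obstacle is completing $\eta^\star$ on $\mathcal{O}_\M^\star$:} the normalized counts there genuinely diverge, since optimal policies are played $K-o(K)$ times, so no limit is available. The resolution is that these coordinates carry zero objective- and zero information-coefficient, and by the reachability fact the flow they carry stays inside $\{s\in\mathrm{supp}(\rho_{\M,h}^\star)\}$, where $\mathcal{O}_{\M,h}(s)\neq\emptyset$ at every $(s,h)$; one can therefore define $\eta^\star$ on $\mathcal{O}_\M^\star$ by a forward, stage-by-stage construction that routes the required inflow onto optimal actions (seeding stage~$1$ with a large enough constant so the equality constraints are met with nonnegative entries), yielding a genuinely feasible $\eta^\star$ with objective value $L$. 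Hence $v^\star(\M)\le L$, which is the claim.
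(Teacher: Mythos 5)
Your proposal is correct, and the two steps that carry the mathematical weight are done differently from the paper. For the information constraint, the paper's Theorem~\ref{th:constr-lower-bound} builds, for each $\M'\in\alt$, an event of the form $\{N_{K,h}(s)\geq\rho_{\M,h}^\star(s)K-f_\alpha(K),\ \sum_{a\in\O_{\M',h}(s)}N_{K,h}(s,a)<K^{\alpha+\beta}\}$ at a specific state--stage pair whose existence is guaranteed by the structural Lemma~\ref{lemma:alt-properties}(2); controlling $\P_\M\{\mathcal{E}\}$ then requires the martingale concentration of the empirical state distribution (Lemma~\ref{lemma:state-dist-consistent-prob}), which is where Assumption~\ref{ass:unique-opt-rho} enters their change-of-measure step. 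Your event $\mathcal{E}_K=\{\#\{k\le K:\pi_k\in\weakopt(\M')\}\ge K/2\}$ sidesteps both lemmas: it only needs $\weakopt(\M)\cap\weakopt(\M')=\emptyset$, finiteness of $\Pi$, and $\alpha$-uniform goodness on \emph{both} $\M$ and $\M'$, and it delivers the same $(1-\alpha)$ constant via the same $\mathrm{kl}(x,y)\ge(1-x)\log\frac{1}{1-y}-\log 2$ bound. This is more elementary and closer to the classical bandit argument; what the paper's finer event buys is a localization of where a good algorithm must deviate under $\M'$, but that extra information is not needed for the theorem as stated. For the passage to the limit, the paper is terse (it simply sets $\eta_h(s,a)=\liminf_K\E[N_{K,h}(s,a)]/\log K$, which diverges on $\mathcal{O}_\M^\star$ and does not obviously preserve the equality constraints), whereas you correctly isolate the real difficulty and resolve it. Your reachability fact (transitions out of $\mathcal{O}_\M^\star$ land in $\mathrm{supp}(\rho_{\M,h}^\star)$, by exhibiting a Bellman-optimal witness and invoking Assumption~\ref{ass:unique-opt-rho}) is exactly what makes the bounded coordinates closed under the flow equations. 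For the completion on $\mathcal{O}_\M^\star$, a slightly cleaner route than your stage-by-stage seeding is to use the representation $\E[N_{K,h}(s,a)]=\sum_{\pi}\omega_\pi^K\rho_{\M,h}^\pi(s,a)$ with $\omega_\pi^K=\E[\#\{k:\pi_k=\pi\}]$: by Lemma~\ref{lemma:opt-act-vs-rho} the occupancies of return-optimal policies are supported on $\mathcal{O}_\M^\star$, the coefficients of sub-optimal policies are bounded by $\mathrm{Regret}_K/(\Gamma_\M(\pi)\log K)$ along your subsequence, and setting $\eta^\star=\sum_{\pi\notin\weakopt(\M)}\omega_\pi^\star\rho_{\M}^\pi+C\rho_{\M}^{\pi^\star}$ gives a feasible point with objective $L$ for free, since nonnegative mixtures of occupancy measures satisfy the dynamics constraints exactly. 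Either way, your argument closes a gap the paper leaves implicit.
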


The lower bound is the solution to a constrained optimization problem that defines an optimal \emph{exploration strategy} $\eta\in\mathbb{R}^{SAH}$, where $\eta_h(s,a)$ is proportional to the number of visits allocated to each state $s$ and action $a$ at stage $h$. Such optimal exploration strategy must minimize the resulting regret (written as a weighted sum of local sub-optimality gaps), while satisfying three constraints. First, the \emph{KL constraint}, which is common in this type of information-theoretic lower bounds, requires that the exploration strategy allocates sufficient visits to relevant state-action-stage triplets so as to discriminate $\M$ from all its alternatives $\M'\in\Lambda(\M)$. The last two constraints, taken as a whole, form what we refer to as the \emph{dynamics constraint}. This requires the optimal exploration strategy to be \emph{realizable} according to (i.e., compatible with) the MDP dynamics. As we shall see in our examples later, the dynamics constraint is a crucial component to introduce MDP structure into the optimization problem. Without it, an exploration strategy would be allowed to allocate visits to certain state-action pairs regardless of the probability to reach them (i.e., as if a generative model were available), thus resulting in a \emph{non-realizable} allocation in most cases and loose lower bounds. 

\subsection{The policy-based perspective}
 
Note that, by definition, we can realize any allocation $\eta$ that satisfies the dynamics constraint by playing some \emph{stochastic} policy. Moreover, we can always express the occupancy measure of any stochastic Markov policy as a mixture of \emph{deterministic} Markov policies~\citep[e.g.,][Remark 6.1, page 64]{altman1999constrained}. 
This implies that an allocation $\eta$ satisfies the dynamics constraint in the optimization problem of Thm.~\ref{th:main-lower-bound} if, and only if, there exists a vector $\omega \in \mathbb{R}^{|\Pi|}_{\geq 0}$ of ``mixing coefficients'' such that $\eta_h(s,a) = \sum_{\pi\in\Pi} \omega_\pi \rho_h^\pi(s,a)$ for all $s,a,h$. 
This allows us to rewrite the optimization problem in a simpler form.
\begin{proposition}\label{prop:policy-based}
The optimization problem of Thm.~\ref{th:main-lower-bound} can be rewritten in the following equivalent form
	\begin{equation*}
	\begin{aligned}
    &\underset{\omega \in \mathbb{R}^{|\Pi|}_{\geq 0}}{\inf}&& \sum_{\pi\in\Pi} \omega_\pi \Gamma_\M(\pi),
     \\
     &
    \quad \mathrm{s.t.} \quad
     && 
        \inf_{\M' \in \alt } \sum_{\pi\in\Pi} \omega_\pi \mathrm{KL}_\pi(\M,\M') \geq 1-\alpha,
	\end{aligned}
	\end{equation*}
	where {\small$\mathrm{KL}_\pi(\M,\M') := \sum_{h\in[H]}\sum_{s\in\S}\sum_{a\in\A} \rho_{\M,h}^\pi(s,a)\mathrm{KL}_{s,a,h}(\M,\M')$}.
\end{proposition}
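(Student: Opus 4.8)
The plan is to derive Proposition~\ref{prop:policy-based} from the correspondence, noted just before the statement, between feasible exploration strategies $\eta$ and nonnegative mixing vectors $\omega$: namely, that $\eta\in\R^{SAH}_{\geq 0}$ satisfies the dynamics constraint of Thm.~\ref{th:main-lower-bound} if and only if $\eta_h(s,a)=\sum_{\pi\in\Pi}\omega_\pi\rho_{\M,h}^\pi(s,a)$ for all $s,a,h$ and some $\omega\in\R^{|\Pi|}_{\geq 0}$. Granting this equivalence, I would simply check that the substitution $\eta_h(s,a)=\sum_{\pi}\omega_\pi\rho_{\M,h}^\pi(s,a)$ maps the objective and the KL constraint of the first problem onto those of the second, so that the sets of objective values attained by feasible points coincide and the two infima are equal.

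To (re)establish the feasibility equivalence, the ``if'' direction is immediate: every occupancy measure $\rho_{\M,\cdot}^\pi$ satisfies the flow identities $\sum_{a}\rho_{\M,h}^\pi(s,a)=\sum_{s',a'}p_{h-1}(s|s',a')\rho_{\M,h-1}^\pi(s',a')$ for $h>1$ and $\sum_{a}\rho_{\M,1}^\pi(s,a)=p_0(s)$ (hence $0$ outside $\mathrm{supp}(p_0)$), and the dynamics constraint is a system of homogeneous linear (in)equalities in $\eta$, so any conic combination $\sum_{\pi}\omega_\pi\rho_{\M,\cdot}^\pi$ with $\omega\geq 0$ satisfies it too. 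For the ``only if'' direction, given a feasible $\eta$ I would set $W:=\sum_{s,a}\eta_1(s,a)$ (if $W=0$ the flow constraint forces $\eta\equiv 0$, so $\omega=0$ works), recognize $\eta/W$ as the occupancy measure of a \emph{stochastic} Markov policy through the standard flow-to-policy construction, and then write that occupancy measure as a convex combination of occupancy measures of \emph{deterministic} Markov policies \citep[][Remark~6.1]{altman1999constrained}; scaling the convex weights by $W$ gives the required $\omega\geq 0$.

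The substitution itself is bookkeeping with finitely many nonnegative terms. By the policy-gap identity~\eqref{eq:policy.gap} and exchanging the sums,
\[
\sum_{s\in\S}\sum_{a\in\A}\sum_{h\in[H]}\eta_h(s,a)\Delta_{\M,h}(s,a)=\sum_{\pi\in\Pi}\omega_\pi\sum_{s\in\S}\sum_{a\in\A}\sum_{h\in[H]}\rho_{\M,h}^\pi(s,a)\Delta_{\M,h}(s,a)=\sum_{\pi\in\Pi}\omega_\pi\Gamma_\M(\pi),
\]
so the objectives coincide. Likewise, for every fixed $\M'\in\alt$,
\[
\sum_{s\in\S}\sum_{a\in\A}\sum_{h\in[H]}\eta_h(s,a)\,\mathrm{KL}_{s,a,h}(\M,\M')=\sum_{\pi\in\Pi}\omega_\pi\,\mathrm{KL}_\pi(\M,\M'),
\]
by the definition of $\mathrm{KL}_\pi$, and taking the infimum over $\M'\in\alt$ on both sides shows that the two KL constraints are the same inequality. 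Combined with the feasibility equivalence, every feasible $\eta$ corresponds to a feasible $\omega$ with the same objective value and conversely, so $v^\star(\M)$ equals the value of the $\omega$-problem.

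I expect the only delicate step to be the ``only if'' direction of the feasibility equivalence. Turning a feasible $\eta$ into a genuine occupancy flow requires its first-stage marginal to be proportional to $p_0$ --- automatic when $p_0$ is a point mass, and otherwise something to check against the first-stage dynamics constraint --- after which the flow-to-policy construction and the decomposition of a stochastic-policy occupancy into deterministic ones are standard. Everything else reduces to exchanging finite sums of nonnegative terms.
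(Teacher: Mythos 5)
Your proposal is correct and follows essentially the same route as the paper: the paper's proof is precisely the change of variables $\eta_h(s,a)=\sum_{\pi}\omega_\pi\rho_{\M,h}^\pi(s,a)$, justified by citing Altman's Remark~6.1 for the feasibility equivalence, followed by an application of the policy-gap identity \eqref{eq:policy.gap} (Proposition~\ref{prop:policy-vs-action-gap}) to rewrite the objective. You supply more detail than the paper does --- in particular you correctly flag that the ``only if'' direction requires the first-stage marginal of $\eta$ to be proportional to $p_0$, a subtlety the paper glosses over --- but the argument is the same.
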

While computationally harder than its counterpart in Thm.~\ref{th:main-lower-bound} (we moved from optimizing over $SAH$ variables to $|\Pi| = A^{SH}$ variables), this policy-based perspective is convenient to interpret and instantiate the lower bound in specific cases.

\section{Examples}\label{sec:discussion}

We now illustrate a series of examples that show some interesting properties of our lower bound.


\subsection{On the importance of the dynamics constraint to match existing results}
 
We consider the MDP $\M$ introduced by~\citet{xu2021fine} (see Fig.~\ref{fig:example-du-variant} with $\kappa=0$) and we define $\mathfrak{M}$ as the set of MDPs with exactly the same dynamics as $\M$ but arbitrary Gaussian rewards. In this problem $\Delta_{\min} = \varepsilon > 0$. 
We instantiate our lower bound in this case with and without the dynamics constraints.

\begin{corollary}\label{cor:du.example}
	Let $\M$ be the MDP of Fig.~\ref{fig:example-du-variant} with $\kappa=0$. Let $\wt v(\M)$ be the value of the optimization problem of Thm.~\ref{th:main-lower-bound} \textbf{without dynamics constraints}, then 
	$\wt v(\M) = 2(1-\alpha)(\log_2(S+1)+A-2)/\Delta_{\min}$. On the other hand, the lower bound in Thm.~\ref{th:main-lower-bound} \textbf{with dynamics constraints}
	 yields $v^\star(\M) \geq (1-\alpha)SA/\Delta_{\min}$.
\end{corollary}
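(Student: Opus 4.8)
The plan is to instantiate both optimization problems using an explicit description of the structural quantities of $\M$ (Fig.~\ref{fig:example-du-variant} with $\kappa=0$), which is a depth-$\log_2(S+1)$ binary tree with deterministic transitions and a unique ``target'' leaf $\ell^\star$ reached by a unique root-to-leaf path. First I would record the following facts from the construction. (a) Every return-optimal policy follows this path, so all of them induce the same (deterministic) state occupancy and $\M$ satisfies Assumption~\ref{ass:unique-opt-rho}, making Thm.~\ref{th:main-lower-bound} applicable. (b) The only state-action-stage triplets with nonzero action gap are the unique sub-optimal action at each of the $\log_2(S+1)-1$ internal nodes on the optimal path and the $A-1$ sub-optimal actions at $\ell^\star$, all with gap exactly $\Delta_{\min}$; every off-path state-action (internal node or leaf) has gap $0$ and lies on a trajectory of value $V_{\M,0}^\star-\Delta_{\min}$. (c) Since $\mathfrak{M}$ only varies the (unit-variance Gaussian) rewards, $\mathrm{KL}_{s,a,h}(\M,\M')=\tfrac12(r_h(s,a)-r_h'(s,a))^2$, and an element of $\Lambda(\M)$ must raise the value of some off-optimal trajectory $\tau$ above $V_{\M,0}^\star$ by perturbing the non-protected rewards along $\tau$; the cheapest single-triplet way perturbs one reward by $\Delta_{\min}+\epsilon$ (KL cost $\to\tfrac12\Delta_{\min}^2$), and such single-triplet alternatives exist precisely at the positive-gap triplets of (b) \emph{and} at every off-path state-action.

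For $\wt v(\M)$ (no dynamics constraint) the problem decouples over triplets. For the lower bound, feasibility against the $\log_2(S+1)-1+A-1$ single-triplet alternatives at the positive-gap triplets forces $\eta_h(s,a)\ge 2(1-\alpha)/\Delta_{\min}^2$ there, and as these gaps equal $\Delta_{\min}$ the objective is at least $2(1-\alpha)(\log_2(S+1)+A-2)/\Delta_{\min}$. For the matching upper bound I take $\eta_h(s,a)=2(1-\alpha)/\Delta_{\min}^2$ on the positive-gap triplets and let $\eta_h(s,a)\to\infty$ on the gap-$0$ off-path triplets (which is free). Feasibility follows from a short Lagrangian computation: for any $\M'\in\Lambda(\M)$ realized by perturbations $\{\delta_i\}_i$ that raise the value of a trajectory $\tau$ above $V_{\M,0}^\star$ (necessarily $\sum_{i\in\tau}\delta_i\ge\Delta_{\min}$), one has $\sum_i\eta_i\tfrac12\delta_i^2\ge\Delta_{\min}^2/\bigl(2\sum_{i\in\tau}1/\eta_i\bigr)$, and because a tree trajectory that leaves the optimal path never returns to it, $\tau$ contains \emph{exactly one} gap-$\Delta_{\min}$ triplet, so $\sum_{i\in\tau}1/\eta_i\to\Delta_{\min}^2/(2(1-\alpha))$ and the KL constraint holds.

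For $v^\star(\M)$ (with the dynamics constraint) I keep only the objective contributions from the positive-gap triplets. Each off-path leaf $\ell$ gives rise to $A$ single-triplet alternatives (boosting $r_H(\ell,a)$ for each $a\in\A$), so feasibility forces $\sum_a\eta_H(\ell,a)\ge 2A(1-\alpha)/\Delta_{\min}^2$. The flow-conservation constraints together with the deterministic tree structure imply that all $\eta$-mass entering the off-path subtree below the sub-optimal action at an on-path internal node $s$ at stage $h$ passes through that one triplet, and this mass does not increase as it flows downward; hence $\eta_h(s,a^{\mathrm{wrong}})$ is at least $2A(1-\alpha)/\Delta_{\min}^2$ times the number of leaves in that subtree. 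Multiplying by the gap $\Delta_{\min}$ and summing over the $\log_2(S+1)-1$ on-path internal nodes, whose off-path subtrees partition the $2^{\log_2(S+1)-1}-1=(S-1)/2$ non-target leaves, gives a contribution of $(S-1)A(1-\alpha)/\Delta_{\min}$; adding the $A-1$ forced contributions from the sub-optimal actions at $\ell^\star$ yields $v^\star(\M)\ge(1-\alpha)SA/\Delta_{\min}$ (using $A\ge 2$).

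The main obstacle is step (c) together with the feasibility check for the $\wt v(\M)$ upper bound: one must show that \emph{every} element of $\Lambda(\M)$ — not only the single-reward perturbations — costs at least $1-\alpha$ under the candidate $\eta$, i.e. that spreading the perturbation over many rewards along a trajectory cannot cheat the constraint. This is exactly where the structural observation ``each off-optimal trajectory crosses a single gap-$\Delta_{\min}$ triplet'' enters, through the $\sum_{i\in\tau}1/\eta_i$ bound above; it is also what makes the dynamics constraint bite for $v^\star$, since realizing mass on the many zero-gap off-path leaves forces mass through the few positive-gap divergence triplets. A secondary technical point is handling the infimum over the non-closed set $\Lambda(\M)$ via $\epsilon\downarrow 0$ limits, and verifying Assumption~\ref{ass:unique-opt-rho} so that Thm.~\ref{th:main-lower-bound} applies at all.
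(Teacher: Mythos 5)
Your proposal is correct, and for the no-dynamics part it is essentially the paper's argument: both identify the binding constraints as the single-reward-perturbation alternatives at the $H-1+A-1$ positive-gap triplets along the optimal path, set the allocation to infinity on zero-gap triplets, and verify feasibility by noting that every sub-optimal trajectory crosses exactly one positive-gap triplet (your Cauchy--Schwarz bound $\sum_i\eta_i\tfrac12\delta_i^2\ge\Gamma(\pi_\tau)^2/(2\sum_{i\in\tau}1/\eta_i)$ is exactly the closed-form minimum $\Gamma(\pi)^2/\|\phi^\pi\|^2_{D_\omega^{-1}}$ the paper derives in its ``known dynamics'' reformulation). For the constrained value $v^\star(\M)$ the two proofs diverge in execution while sharing the same combinatorics. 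The paper switches to the policy-based formulation of Prop.~\ref{prop:policy-based}, argues by symmetry that every sub-optimal path receives the same weight $\eta^\star$, and computes the \emph{exact} optimum by counting how many paths traverse each on-path triplet ($2^{H-h-1}m$ at stage $h$); it then lower-bounds that exact value by $(1-\alpha)SA/\Delta_{\min}$. You instead stay in the flow formulation, extract from the single-triplet alternatives at each off-path leaf the necessary condition $\eta_H(\ell,a)\ge 2(1-\alpha)/\Delta_{\min}^2$ for every action, and use flow conservation in the deterministic tree to push these $A\cdot(\text{number of leaves})$ requirements up through the unique on-path triplet feeding each off-path subtree. Your route only yields the claimed inequality rather than the exact value, but that is all the corollary asserts, and it buys two things: it sidesteps the (somewhat informally justified) symmetrization step, and it makes explicit that the dynamics constraint bites precisely because mass placed on the many zero-gap leaves must transit through the few positive-gap on-path triplets --- the mechanism the paper describes only in prose. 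The arithmetic checks out: $(S-1)A+2(A-1)=SA+A-2\ge SA$ for $A\ge 2$, consistent with (and below) the paper's exact value.
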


This result shows that ignoring the dynamics constraints leads to an exponentially smaller (and thus looser) bound w.r.t.\ $v^\star(\M)$. 
On the other hand, when computing the lower bound of Thm.~\ref{th:main-lower-bound}, we match the lower bound of~\citet{xu2021fine} for this configuration.

\subsection{On the usefulness of the policy view}

The policy view of Prop.~\ref{prop:policy-based} is particularly convenient to simplify the expression of the lower bound in some specific cases. One illustrative example is the problem considered above, where all MDPs in $\mathfrak{M}$ share the same dynamics. In this case, the lower bound can be written as
\begin{equation*}
	\begin{aligned}
		&\underset{\omega \in \mathbb{R}^{|\Pi|}_{\geq 0}}{\inf}&& \sum_{\pi\in\Pi} \omega_\pi \langle \theta, \phi^\star - \phi^\pi\rangle,
		\\
		&
		\quad \mathrm{s.t.} \quad
		&& 
		\forall \pi \notin \weakopt(\M) : \|\phi^\pi\|_{D_\omega^{-1}}^2 \leq \frac{\Gamma_{\M}(\pi)^2}{2(1-\alpha)},
	\end{aligned}
\end{equation*}
where $\theta \in \mathbb{R}^{SAH}$ is a vector containing all mean rewards, i.e., $\theta_{s,a,h} = r_h(s,a)$, $\phi^\pi \in \mathbb{R}^{SAH}$ is the vector containing the state distribution $\rho_{\M}^\pi$, $\phi_{s,a,h}^\pi = \rho_{\M,h}^\pi(s,a)$, and $D_\omega := \sum_{\pi\in\Pi} \omega_\pi \mathrm{diag}(\rho_{\M,h}^\pi(s,a)) \in \mathbb{R}^{SAH\times SAH}$ is a diagonal matrix proportional to the number of times each policy is selected. With this notation we have $V_{\M,0}^\pi = \theta^T \phi^\pi$ and $V_{\M,0}^\star = \theta^T \phi^{\star} := \theta^T \phi^{\pi^\star}$ for some optimal policy $\pi^\star$. It is then possible to instantiate this expression for Fig.~\ref{fig:example-du-variant} and obtain the statement of Cor.~\ref{cor:du.example} (see App.~\ref{app:examples}). 

Interestingly, this formulation bears a strong resemblance with the asymptotic lower bound for the \emph{combinatorial semi-bandit} setting \citep[e.g.,][]{wagenmaker2021experimental}. In general, the similarity between the two settings comes from the fact that, in MDPs, there exists a combinatorial number ($A^{SH}$) of policies, which may be treated as arms, whose expected return can be described by only $SAH$ unknown variables (the immediate mean rewards). One difference is that, in combinatorial semi-bandits, the feature vectors usually take values in $\{0,1\}^d$, where $d$ is their dimension ($d=SAH$ in our case). Here, instead, they contain values in $[0,1]^{SAH}$ representing the probabilities that the corresponding policy visits each state-action pair. When the MDP is deterministic, the two problems are indeed equivalent. We remark that the learning feedback itself is the same as the one in combinatorial semi-bandits: whenever we execute a policy $\pi$ in a deterministic MDP, we receive a random reward observation for each state-action-stage triplet associated with an element where $\phi^\pi$ is equal to $1$. When the MDP is not deterministic, on the other hand, we receive the observation only with the corresponding probability. We leave it as a future work to further study the connection and differences between the two settings.

\begin{figure}
    \centering

\tikzset{every picture/.style={line width=0.75pt}} 

\begin{tikzpicture}[x=0.75pt,y=0.75pt,yscale=-1,xscale=0.95]

\draw  [fill={rgb, 255:red, 255; green, 255; blue, 255 }  ,fill opacity=1 ] (122,27.5) .. controls (122,18.94) and (128.94,12) .. (137.5,12) .. controls (146.06,12) and (153,18.94) .. (153,27.5) .. controls (153,36.06) and (146.06,43) .. (137.5,43) .. controls (128.94,43) and (122,36.06) .. (122,27.5) -- cycle ;
\draw   (61,69.5) .. controls (61,60.94) and (67.94,54) .. (76.5,54) .. controls (85.06,54) and (92,60.94) .. (92,69.5) .. controls (92,78.06) and (85.06,85) .. (76.5,85) .. controls (67.94,85) and (61,78.06) .. (61,69.5) -- cycle ;
\draw   (183,69.5) .. controls (183,60.94) and (189.94,54) .. (198.5,54) .. controls (207.06,54) and (214,60.94) .. (214,69.5) .. controls (214,78.06) and (207.06,85) .. (198.5,85) .. controls (189.94,85) and (183,78.06) .. (183,69.5) -- cycle ;
\draw   (26,118.5) .. controls (26,109.94) and (32.94,103) .. (41.5,103) .. controls (50.06,103) and (57,109.94) .. (57,118.5) .. controls (57,127.06) and (50.06,134) .. (41.5,134) .. controls (32.94,134) and (26,127.06) .. (26,118.5) -- cycle ;
\draw   (97,116.5) .. controls (97,107.94) and (103.94,101) .. (112.5,101) .. controls (121.06,101) and (128,107.94) .. (128,116.5) .. controls (128,125.06) and (121.06,132) .. (112.5,132) .. controls (103.94,132) and (97,125.06) .. (97,116.5) -- cycle ;
\draw   (147,118.5) .. controls (147,109.94) and (153.94,103) .. (162.5,103) .. controls (171.06,103) and (178,109.94) .. (178,118.5) .. controls (178,127.06) and (171.06,134) .. (162.5,134) .. controls (153.94,134) and (147,127.06) .. (147,118.5) -- cycle ;
\draw   (218,116.5) .. controls (218,107.94) and (224.94,101) .. (233.5,101) .. controls (242.06,101) and (249,107.94) .. (249,116.5) .. controls (249,125.06) and (242.06,132) .. (233.5,132) .. controls (224.94,132) and (218,125.06) .. (218,116.5) -- cycle ;
\draw [color={rgb, 255:red, 74; green, 144; blue, 226 }  ,draw opacity=1 ]   (123.86,33) -- (88.39,55.4) ;
\draw [shift={(85.86,57)}, rotate = 327.72] [fill={rgb, 255:red, 74; green, 144; blue, 226 }  ,fill opacity=1 ][line width=0.08]  [draw opacity=0] (7.14,-3.43) -- (0,0) -- (7.14,3.43) -- cycle    ;
\draw [color={rgb, 255:red, 74; green, 144; blue, 226 }  ,draw opacity=1 ]   (66.86,81) -- (49.72,102.65) ;
\draw [shift={(47.86,105)}, rotate = 308.37] [fill={rgb, 255:red, 74; green, 144; blue, 226 }  ,fill opacity=1 ][line width=0.08]  [draw opacity=0] (7.14,-3.43) -- (0,0) -- (7.14,3.43) -- cycle    ;
\draw    (86.86,80) -- (103.9,99.73) ;
\draw [shift={(105.86,102)}, rotate = 229.18] [fill={rgb, 255:red, 0; green, 0; blue, 0 }  ][line width=0.08]  [draw opacity=0] (7.14,-3.43) -- (0,0) -- (7.14,3.43) -- cycle    ;
\draw    (186.86,80) -- (169.72,101.65) ;
\draw [shift={(167.86,104)}, rotate = 308.37] [fill={rgb, 255:red, 0; green, 0; blue, 0 }  ][line width=0.08]  [draw opacity=0] (7.14,-3.43) -- (0,0) -- (7.14,3.43) -- cycle    ;
\draw [color={rgb, 255:red, 65; green, 117; blue, 5 }  ,draw opacity=1 ]   (208.86,81) -- (225.9,100.73) ;
\draw [shift={(227.86,103)}, rotate = 229.18] [fill={rgb, 255:red, 65; green, 117; blue, 5 }  ,fill opacity=1 ][line width=0.08]  [draw opacity=0] (7.14,-3.43) -- (0,0) -- (7.14,3.43) -- cycle    ;
\draw [color={rgb, 255:red, 65; green, 117; blue, 5 }  ,draw opacity=1 ]   (152.86,34) -- (185.38,56.3) ;
\draw [shift={(187.86,58)}, rotate = 214.44] [fill={rgb, 255:red, 65; green, 117; blue, 5 }  ,fill opacity=1 ][line width=0.08]  [draw opacity=0] (7.14,-3.43) -- (0,0) -- (7.14,3.43) -- cycle    ;
\draw  [dash pattern={on 0.84pt off 2.51pt}]  (28.86,132) -- (19.86,146) ;
\draw  [dash pattern={on 0.84pt off 2.51pt}]  (35.86,138) -- (34.86,147) ;
\draw  [dash pattern={on 0.84pt off 2.51pt}]  (61.86,145) -- (54.86,132) ;
\draw  [dash pattern={on 0.84pt off 2.51pt}]  (104.86,132) -- (95.86,146) ;
\draw  [dash pattern={on 0.84pt off 2.51pt}]  (120.86,132) -- (129.86,145) ;
\draw  [dash pattern={on 0.84pt off 2.51pt}]  (154.86,134) -- (145.86,148) ;
\draw  [dash pattern={on 0.84pt off 2.51pt}]  (170.86,134) -- (179.86,147) ;
\draw  [dash pattern={on 0.84pt off 2.51pt}]  (226.86,134) -- (217.86,148) ;
\draw  [dash pattern={on 0.84pt off 2.51pt}]  (242.86,134) -- (251.86,147) ;

\draw (131,18) node [anchor=north west][inner sep=0.75pt]  [font=\footnotesize] [align=left] {$\displaystyle s_{1}^{1}$};
\draw (69,61) node [anchor=north west][inner sep=0.75pt]  [font=\footnotesize] [align=left] {$\displaystyle s_{1}^{2}$};
\draw (192,61) node [anchor=north west][inner sep=0.75pt]  [font=\footnotesize] [align=left] {$\displaystyle s_{2}^{2}$};
\draw (33,109) node [anchor=north west][inner sep=0.75pt]  [font=\footnotesize] [align=left] {$\displaystyle s_{1}^{H}$};
\draw (104,107) node [anchor=north west][inner sep=0.75pt]  [font=\footnotesize] [align=left] {$\displaystyle s_{2}^{H}$};
\draw (155,109) node [anchor=north west][inner sep=0.75pt]  [font=\footnotesize] [align=left] {$\displaystyle s_{3}^{H}$};
\draw (226,107) node [anchor=north west][inner sep=0.75pt]  [font=\footnotesize] [align=left] {$\displaystyle s_{n_{{\text{\tiny H}}}}^{H}$};
\draw (26,144) node [anchor=north west][inner sep=0.75pt]  [font=\footnotesize] [align=left] {$\displaystyle a_{2} \dotsc a_{m}$};
\draw (90,144) node [anchor=north west][inner sep=0.75pt]  [font=\footnotesize] [align=left] {$\displaystyle a_{1} \dotsc a_{m}$};
\draw (141,144) node [anchor=north west][inner sep=0.75pt]  [font=\footnotesize] [align=left] {$\displaystyle a_{1} \dotsc a_{m}$};
\draw (210,144) node [anchor=north west][inner sep=0.75pt]  [font=\footnotesize] [align=left] {$\displaystyle a_{1} \dotsc $};
\draw (13,161) node [anchor=north west][inner sep=0.75pt]  [font=\footnotesize,color={rgb, 255:red, 74; green, 144; blue, 226 }  ,opacity=1 ] [align=left] {$\displaystyle \varepsilon $};
\draw (28,159) node [anchor=north west][inner sep=0.75pt]  [font=\footnotesize,color={rgb, 255:red, 208; green, 2; blue, 27 }  ,opacity=1 ] [align=left] {$\displaystyle 0$};
\draw (60,159) node [anchor=north west][inner sep=0.75pt]  [font=\footnotesize,color={rgb, 255:red, 208; green, 2; blue, 27 }  ,opacity=1 ] [align=left] {$\displaystyle 0$};
\draw (92,159) node [anchor=north west][inner sep=0.75pt]  [font=\footnotesize,color={rgb, 255:red, 208; green, 2; blue, 27 }  ,opacity=1 ] [align=left] {$\displaystyle 0$};
\draw (124,159) node [anchor=north west][inner sep=0.75pt]  [font=\footnotesize,color={rgb, 255:red, 208; green, 2; blue, 27 }  ,opacity=1 ] [align=left] {$\displaystyle 0$};
\draw (143,159) node [anchor=north west][inner sep=0.75pt]  [font=\footnotesize,color={rgb, 255:red, 208; green, 2; blue, 27 }  ,opacity=1 ] [align=left] {$\displaystyle 0$};
\draw (175,159) node [anchor=north west][inner sep=0.75pt]  [font=\footnotesize,color={rgb, 255:red, 208; green, 2; blue, 27 }  ,opacity=1 ] [align=left] {$\displaystyle 0$};
\draw (211,159) node [anchor=north west][inner sep=0.75pt]  [font=\footnotesize,color={rgb, 255:red, 208; green, 2; blue, 27 }  ,opacity=1 ] [align=left] {$\displaystyle 0$};
\draw (12,145) node [anchor=north west][inner sep=0.75pt]  [font=\footnotesize,color={rgb, 255:red, 74; green, 144; blue, 226 }  ,opacity=1 ] [align=left] {$\displaystyle a_{1}$};
\draw (98,27) node [anchor=north west][inner sep=0.75pt]  [font=\footnotesize,color={rgb, 255:red, 74; green, 144; blue, 226 }  ,opacity=1 ] [align=left] {L};
\draw (46,75) node [anchor=north west][inner sep=0.75pt]  [font=\footnotesize,color={rgb, 255:red, 74; green, 144; blue, 226 }  ,opacity=1 ] [align=left] {L};
\draw (168,74) node [anchor=north west][inner sep=0.75pt]  [font=\footnotesize] [align=left] {L};
\draw (168,29) node [anchor=north west][inner sep=0.75pt]  [font=\footnotesize,color={rgb, 255:red, 65; green, 117; blue, 5 }  ,opacity=1 ] [align=left] {R};
\draw (221,76) node [anchor=north west][inner sep=0.75pt]  [font=\footnotesize,color={rgb, 255:red, 65; green, 117; blue, 5 }  ,opacity=1 ] [align=left] {R};
\draw (100,75) node [anchor=north west][inner sep=0.75pt]  [font=\footnotesize] [align=left] {R};
\draw (241,144) node [anchor=north west][inner sep=0.75pt]  [font=\footnotesize,color={rgb, 255:red, 65; green, 117; blue, 5 }  ,opacity=1 ] [align=left] {$\displaystyle a_{m}$};
\draw (245,161) node [anchor=north west][inner sep=0.75pt]  [font=\footnotesize,color={rgb, 255:red, 65; green, 117; blue, 5 }  ,opacity=1 ] [align=left] {$\displaystyle \kappa$};

\end{tikzpicture}
    
\vspace{-0.1in}
    \caption{Variant of the example in~\citep{xu2021fine}. The MDP is binary tree with $S=2^H-1$ states, $A=m\geq 2$ actions, and deterministic transitions. The figure shows an instance with $H=3$. The agent starts from the root state $s_1^1$ and descends the tree using only two actions ($L$ and $R$). In the leaf states, all the $m$ actions are available. The rewards follow a Gaussian distribution with unit variance and mean equal to zero everywhere except for at most two leaf state-action pairs (whose values are $\varepsilon$ and $\kappa$).}
    \label{fig:example-du-variant}
    \vspace{-0.1in}
\end{figure}
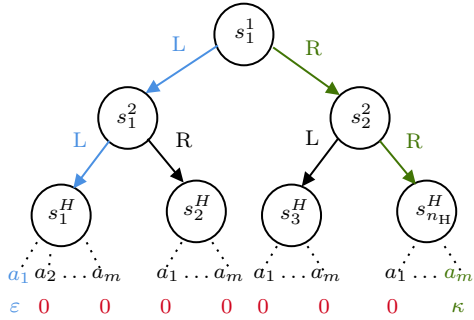


\subsection{On the dependence on the sum of inverse gaps} 

While the lower bound of~\citet{xu2021fine} shows that there exists an MDP where the regret is significantly larger than the sum of inverse gaps whenever multiple equivalent optimal actions exist, in the following we derive a result that is somewhat complementary: we show that there exists a large class of MDPs where the lower bound scales as the sum of the inverse gaps, even when $Z_{\mathrm{mul}} > 0$.

\begin{proposition}\label{prop:full-support}
Let $\M$ be an MDP satisfying Asm.~\ref{ass:unique-opt-rho} such that $\rho_{\M}^\star$ is \emph{full-support} 
(i.e., $\rho_{\M,h}^\star(s)>0$, $\forall s,h$). Then, 
\begin{align*}
v^\star(\M) = (1-\alpha)\sum_{h,s,a}\frac{\Delta_{\M,h}(s,a)}{\mathcal{K}_{s,a,h}(\M)} \leq \sum_{h,s,a}\frac{2(H-h)^2}{\Delta_{\M,h}(s,a)},
\end{align*}
where $\mathcal{K}_{s,a,h}(\M) : = \inf_{\bar{p},\bar{q} \in \Lambda_{s}(\M)}\big\{ \mathrm{KL}(p_h(s,a), \bar{p}) + \mathrm{KL}(q_h(s,a), \bar{q})\big\}$ and $\Lambda_{s}(\M) := \{\bar{p}\in\P(\S),\bar{q}\in\P([0,1]) : \mathbb{E}_{y\sim \bar{q}}[y] + \bar{p}^T V_{\M,h+1}^\star > V_{\M,h}^\star(s)\}$. 
\end{proposition}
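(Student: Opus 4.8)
The plan is to exploit the full-support assumption to decouple the optimization problem of Theorem~\ref{th:main-lower-bound} into one scalar sub-problem per state-action-stage triplet. I would first record two structural consequences of full support. \emph{(i)} $\weakopt(\M)=\strongopt(\M)$, and therefore $\mathcal{O}_\M^\star=\{(s,a,h): a\in\mathcal{O}_{\M,h}(s)\}$: by Eq.~\eqref{eq:policy.gap}, any $\pi\in\weakopt(\M)$ has $\rho^\pi_{\M,h}(s,a)\Delta_{\M,h}(s,a)=0$ for all $(s,a,h)$; since Assumption~\ref{ass:unique-opt-rho} and full support give $\rho^\pi_{\M,h}(s)=\rho^\star_{\M,h}(s)>0$ for all $(s,h)$, taking $a=\pi_h(s)$ forces $\Delta_{\M,h}(s,\pi_h(s))=0$, i.e.\ $\pi\in\strongopt(\M)$ (the converse inclusion is always true), so the complement of $\mathcal O_\M^\star$ is exactly the set of strictly sub-optimal triplets. \emph{(ii)} For any strictly sub-optimal $(s,a,h)$ and any $(\bar p,\bar q)\in\Lambda_s(\M)$, the MDP $\M'$ obtained from $\M$ by replacing only $(p_h(s,a),q_h(s,a))$ with $(\bar p,\bar q)$ belongs to $\alt$: it is in $\Lambda^{\mathrm{wc}}(\M)$ because the single triplet where $\M$ and $\M'$ differ is not in $\mathcal O_\M^\star$; and it is in $\Lambda^{\mathrm{wa}}(\M)$ because any $\pi\in\weakopt(\M)=\strongopt(\M)$ never plays $a$ at $(s,h)$, so $V^\pi_{\M',0}=V^\pi_{\M,0}=V^\star_{\M,0}$, whereas the modification $\pi''$ of $\pi$ playing $a$ at $(s,h)$ — a triplet reached with probability $\rho^\star_{\M,h}(s)>0$ — satisfies, by the performance-difference lemma, $V^{\pi''}_{\M',0}-V^\pi_{\M',0}=\rho^\star_{\M,h}(s)\big(\E_{\bar q}[y]+\bar p^\tr V^\star_{\M,h+1}-V^\star_{\M,h}(s)\big)>0$, hence $V^\star_{\M',0}>V^\star_{\M,0}$ and no return-optimal policy of $\M$ is return-optimal for $\M'$.

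\textbf{Lower bound on $v^\star(\M)$.} This follows directly from \emph{(ii)}: for any feasible $\eta$ and any strictly sub-optimal $(s,a,h)$, instantiating the KL constraint of Theorem~\ref{th:main-lower-bound} at the alternatives $\M'$ of \emph{(ii)} — for which $\mathrm{KL}_{s',a',h'}(\M,\M')=0$ except at $(s',a',h')=(s,a,h)$ — gives $\eta_h(s,a)\big(\mathrm{KL}(p_h(s,a),\bar p)+\mathrm{KL}(q_h(s,a),\bar q)\big)\geq 1-\alpha$; taking the infimum over $(\bar p,\bar q)\in\Lambda_s(\M)$ yields $\eta_h(s,a)\geq (1-\alpha)/\mathcal K_{s,a,h}(\M)$, and multiplying by $\Delta_{\M,h}(s,a)$ and summing over all triplets (optimal ones contribute $0$) gives $v^\star(\M)\geq(1-\alpha)\sum_{h,s,a}\Delta_{\M,h}(s,a)/\mathcal K_{s,a,h}(\M)$.

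\textbf{Upper bound on $v^\star(\M)$.} I would exhibit a matching feasible $\eta$. For each strictly sub-optimal $(s,a,h)$ let $\pi^{(s,a,h)}$ be a Bellman-optimal policy of $\M$ altered to play $a$ at $(s,h)$ only; this policy reaches $(s,h)$ with probability $\rho^\star_{\M,h}(s)$ and visits no other sub-optimal triplet. Define $\eta_{h'}(s',a')=\sum_{s,a,h}\frac{1-\alpha}{\mathcal K_{s,a,h}(\M)\,\rho^\star_{\M,h}(s)}\,\rho^{\pi^{(s,a,h)}}_{\M,h'}(s',a')$, the sum ranging over strictly sub-optimal triplets; by Proposition~\ref{prop:policy-based} it satisfies the dynamics constraint, it equals $(1-\alpha)/\mathcal K_{s,a,h}(\M)$ at every strictly sub-optimal triplet, and its objective value is exactly $(1-\alpha)\sum_{h,s,a}\Delta_{\M,h}(s,a)/\mathcal K_{s,a,h}(\M)$. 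It remains to check the KL constraint for \emph{every} $\M'\in\alt$, which is the crux: I claim every such $\M'$ admits a \emph{witnessing} strictly sub-optimal triplet $(s,a,h)$ with $(p'_h(s,a),q'_h(s,a))\in\Lambda_s(\M)$. If not, a backward induction over $h$ shows $V^\star_{\M',h}=V^\star_{\M,h}$ pointwise (at sub-optimal triplets $Q^\star_{\M',h}(s,a)=\E_{q'_h(s,a)}[y]+p'_h(s,a)^\tr V^\star_{\M,h+1}\leq V^\star_{\M,h}(s)$ by the negated hypothesis, and optimal triplets are unchanged since $\M'\in\Lambda^{\mathrm{wc}}(\M)$ and, by \emph{(i)}, $\mathcal O_\M^\star$ contains all of them), so $\strongopt(\M)\subseteq\strongopt(\M')$ and $\weakopt(\M)\cap\weakopt(\M')\neq\emptyset$, contradicting $\M'\in\Lambda^{\mathrm{wa}}(\M)$. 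At the witnessing triplet, $\sum_{s',a',h'}\eta_{h'}(s',a')\mathrm{KL}_{s',a',h'}(\M,\M')\geq\eta_h(s,a)\mathcal K_{s,a,h}(\M)=1-\alpha$, so $\eta$ is feasible and the two bounds match.

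\textbf{The explicit bound and the main obstacle.} To pass from $(1-\alpha)\sum\Delta/\mathcal K$ to $\sum 2(H-h)^2/\Delta$ I would lower bound the transportation cost: any $(\bar p,\bar q)\in\Lambda_s(\M)$ satisfies $\big(\E_{\bar q}[y]-\E_{q_h(s,a)}[y]\big)+(\bar p-p_h(s,a))^\tr V^\star_{\M,h+1}>\Delta_{\M,h}(s,a)$, and bounding the two terms by $\mathrm{TV}(q_h(s,a),\bar q)$ and $(H-h)\,\mathrm{TV}(p_h(s,a),\bar p)$ respectively (as $V^\star_{\M,h+1}$ lies in an interval of length at most $H-h$) and then applying Pinsker's inequality gives $\mathcal K_{s,a,h}(\M)\geq\Delta_{\M,h}(s,a)^2/(2(H-h)^2)$, so each summand is at most $2(H-h)^2/\Delta_{\M,h}(s,a)$. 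The main obstacle is the witnessing-triplet lemma and its backward induction: this is exactly where full support is indispensable — it forces $\mathcal O_\M^\star$ to cover \emph{all} Bellman-optimal triplets and ensures that a local improvement at a reachable state propagates to the initial distribution — while the rest of the argument consists of routine manipulations, modulo the degenerate cases $\Lambda_s(\M)=\emptyset$ (whence $\mathcal K_{s,a,h}(\M)=+\infty$ and the triplet drops out) and $\Delta_{\M,h}(s,a)=0$, handled via the convention $0/0:=0$.
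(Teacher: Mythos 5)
Your proof is correct and arrives at the same decoupled characterization as the paper, but it is organized along a somewhat different route. The paper goes through its general taxonomy of alternatives: under full support the weakly- and strongly-confusing sets coincide and the ``decoupled'' condition is vacuous, so $\Lambda(\M)=\Lambda^{\mathrm{dsa}}(\M)$; the infimum in the KL constraint is then reduced to single-triplet perturbations by combining Lemma~\ref{lemma:strong-alt-properties} with the decoupling lemma (Lemma~\ref{lemma:decoupling}), and the realizability of the resulting allocation is checked exactly as you do, via single-deviation policies with weights $\omega_\pi=\bigl(\rho^\star_{\M,h}(s)\,\mathcal{K}_{s,a,h}(\M)\bigr)^{-1}$. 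You instead prove the two inequalities directly: for the lower bound you exhibit, for each strictly sub-optimal triplet, the single-triplet perturbation as an explicit element of $\Lambda(\M)$ --- your step \emph{(ii)}, a converse direction the paper leaves implicit even though it is needed for the \emph{equality} $v^\star(\M)=(1-\alpha)\sum\Delta_{\M,h}(s,a)/\mathcal{K}_{s,a,h}(\M)$ rather than a one-sided bound; for the upper bound your witnessing-triplet lemma, proved by backward induction on $V^\star_{\M',h}$, plays the role of Lemma~\ref{lemma:strong-alt-properties} and makes the decoupling lemma unnecessary in the full-support case. Your version is more self-contained and slightly more complete on the equality; the paper's machinery is heavier but reusable beyond full support. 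The Pinsker step and the handling of the degenerate cases ($\Lambda_s(\M)=\emptyset$, optimal triplets contributing zero) match. One shared artifact: at $h=H$ the stated upper bound $2(H-h)^2/\Delta_{\M,h}(s,a)$ vanishes while the exact term $(1-\alpha)\Delta_{\M,H}(s,a)/\mathcal{K}_{s,a,H}(\M)$ need not, an off-by-one in the proposition's statement (the range of $V^\star_{\M,h}$ is $H-h+1$) that your argument inherits from the paper rather than introduces.
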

Note that the full-support condition for $\rho_{\M,h}^\star$ is weaker than ergodicity for average-reward MDPs since it is required only for the optimal policy. For MDPs with this property, the lower bound is obtained by a \emph{decoupled} exploration strategy similar to the one for ergodic MDPs, where the optimal allocation focuses on exploring sub-optimal actions regardless of how to reach the corresponding state, while the exploration of the state space comes ``for free'' from trying to minimize regret w.r.t.\ the optimal policy itself. Interestingly, this result holds even for $Z_{\mathrm{mul}} > 0$, suggesting that the dependency $\frac{Z_{\mathrm{mul}}}{\Delta_{\min}}$ derived in~\citep{xu2021fine} may be relevant only under specific reachability properties (e.g., when the optimal occupancy measure is not full support).


\subsection{On the dependence on the minimum gap}\label{sec:example-min-gap}

Let us consider again the MDP of Fig.~\ref{fig:example-du-variant} under the same setting as before except that $\kappa \geq 2\varepsilon > 0$. In this problem, $\Delta_{\min} = \epsilon$ and $\Delta_{\max} = \kappa$ are the minimum and maximum action gap, respectively. Perhaps surprisingly, despite we only added a single positive reward ($\kappa$) to the original hard instance of \citet{xu2021fine}, we now show that the lower bound of Thm.~\ref{th:main-lower-bound} does not scale with the minimum gap at all.
\begin{proposition}\label{prop:delta.max}
Let $\M$ be the MDP of Fig.~\ref{fig:example-du-variant} with $\kappa \geq 2\varepsilon > 0$, then the lower bound of Thm.~\ref{th:main-lower-bound} yields $v^\star(\M) \leq 12(1-\alpha)\frac{SA}{\Delta_{\max}}$. On the other hand, the sum of inverse gaps of $\M$ is at least $(\log_2(S+1)+A-3)/\Delta_{\min}$.
\end{proposition}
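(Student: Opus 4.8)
The plan is to exhibit an explicit feasible exploration strategy $\eta$ for the optimization problem of Thm.~\ref{th:main-lower-bound} whose objective value is at most $12(1-\alpha)SA/\Delta_{\max}$, and then separately lower bound the sum of inverse gaps by a direct count. For the upper bound on $v^\star(\M)$, I would work in the policy-based view of Prop.~\ref{prop:policy-based}, since here all MDPs in $\mathfrak{M}$ share the same (deterministic) dynamics and only the rewards vary. Observe that with $\kappa \geq 2\varepsilon$ the unique return-optimal policy is the one descending to the leaf carrying reward $\kappa$ (call its path $\pi^\star$), whereas the $\varepsilon$-leaf lies on a different root-to-leaf branch. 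The key structural fact I would establish first is that the assumption $\kappa \geq 2\varepsilon$ forces every alternative MDP $\M' \in \Lambda(\M)$ to move mass off the $\kappa$-branch: since the $\varepsilon$-leaf's best action has gap only $\kappa - \varepsilon$ which is $\leq \kappa$, while any confusion must happen on state-action pairs outside $\mathcal O_\M^\star$, the cheapest way to make $\pi^\star$ suboptimal in $\M'$ is to raise the reward at some state-action pair that is \emph{not} visited by $\pi^\star$; the KL cost of such a perturbation is governed by the gap at that pair, the largest of which (the decisive one) is $\Delta_{\max} = \kappa$.

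Concretely, I would construct $\eta$ as a mixture: put weight on $\pi^\star$ itself — which costs nothing in the objective since $\Gamma_\M(\pi^\star)=0$ and automatically satisfies the dynamics constraint — plus, for each state-action-stage triplet $(s,a,h) \notin \mathcal O_\M^\star$, a small amount of weight on a deterministic policy that reaches $(s,h)$ and plays $a$ there. The weight on each such side-policy should be chosen of order $1/\mathcal K_{s,a,h}(\M)$ so that the per-triplet contribution to the KL constraint is $\Theta(1)$; summing over the $O(SA)$ relevant triplets, and recalling $\mathcal K_{s,a,h}(\M) \geq \Delta_{\M,h}(s,a)^2 / (2(H-h)^2)$ (Gaussian unit-variance rewards plus the Bellman-gap characterization of $\Lambda_s(\M)$), the objective becomes a sum of terms $\Delta_{\M,h}(s,a) \cdot \eta_h(s,a)$, each of which I bound by $O((H-h)^2 / \Delta_{\M,h}(s,a))$. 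The crucial point is that on the binary-tree instance the only triplets with a \emph{small} gap are the ones on the interior of the tree and the $\varepsilon$-leaf, but for those triplets the confusion can instead be routed through the $\kappa$-leaf: rather than paying $1/\Delta_{\min}$ to confuse via the $\varepsilon$-leaf, one pays only $1/\Delta_{\max}$ because any alternative that makes $\pi^\star$ non-optimal must also be "caught" by exploring the $\kappa$-branch actions, whose gaps are all $\geq \Delta_{\max}-\varepsilon \geq \Delta_{\max}/2$. Carefully re-deriving the constraint with this observation collapses every $1/\Delta_{\min}$ into $1/\Delta_{\max}$ up to the constant $12$ and the $SA$ factor (from the number of leaf actions times the number of leaves, i.e. $O(SA)$).

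For the second claim, the sum $\sum_{s,a,h}\Delta_{\M,h}(s,a)^{-1}$ restricted to suboptimal triplets with $\Delta > 0$ is bounded below by just counting the cheapest available suboptimal actions: at each of the $\log_2(S+1)$ interior levels along $\pi^\star$'s path there is (at least) one branching decision whose wrong turn has gap exactly $\Delta_{\min}=\varepsilon$ (it leads into the subtree containing the $\varepsilon$-leaf rather than the $\kappa$-leaf, or vice versa), contributing $1/\varepsilon$ each; and at the $\kappa$-leaf, the $A-$(a couple) actions with reward $0$ each have gap $\kappa$ and also contribute, but the dominant $\log_2(S+1)$ interior terms already give $\geq (\log_2(S+1)+A-3)/\Delta_{\min}$ after absorbing the leaf terms — I would simply verify the off-by-small-constant bookkeeping matches the stated $(\log_2(S+1)+A-3)/\Delta_{\min}$.

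The main obstacle I anticipate is the first structural step: rigorously arguing that \emph{every} $\M' \in \Lambda(\M)$ — not just "natural" ones — must place a perturbation whose KL cost is controlled by $\Delta_{\max}$ rather than $\Delta_{\min}$. This requires carefully analyzing the set $\Lambda^{\mathrm{wa}}(\M) \cap \Lambda^{\mathrm{wc}}(\M)$: the confusing constraint pins the rewards on $\mathcal O_\M^\star$ (in particular on the whole $\kappa$-branch), so the only freedom is in the $0$-reward leaf actions and the non-$\pi^\star$ tree edges; one must show that to dethrone $\pi^\star$ from return-optimality using only those, the adversary is forced to raise some reward by at least $\kappa$ (roughly), not merely by $\varepsilon$ — intuitively because beating a return of $\kappa$ costs $\kappa$, and the $\varepsilon$-leaf is a dead end that cannot by itself overturn optimality once $\kappa\ge 2\varepsilon$. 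Making this airtight, including handling alternatives that simultaneously perturb several entries, is where the real work lies; everything after that is the mixture construction plus the Gaussian-KL bookkeeping I sketched above, which is routine.
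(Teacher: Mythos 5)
There is a genuine gap, and it sits exactly where you placed the ``real work.'' Your explicit feasible allocation --- weight on $\pi^\star$ plus, for each suboptimal triplet $(s,a,h)$, a side-policy with weight of order $1/\mathcal{K}_{s,a,h}(\M)$ --- is the \emph{decoupled} allocation from the full-support case, and on this instance it provably does \emph{not} give the claimed bound: the triplets $(s_1^h,R,h)$ for $h\geq 2$ and $(s_1^H,a_j,H)$ for $j\geq 2$ have action gap exactly $\varepsilon=\Delta_{\min}$ (note these lie on the \emph{left} spine, off $\pi^\star$'s path, not on branching decisions along $\pi^\star$ as you describe), so that allocation contributes terms of order $1/\Delta_{\min}$ to the objective --- i.e.\ it reproduces the sum of inverse gaps, which is precisely the quantity the proposition says the lower bound beats. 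Your attempted patch (``the confusion can be routed through the $\kappa$-leaf'') is the actual content of the proof, and the route you propose for it --- a structural analysis showing every $\M'\in\Lambda(\M)$ must perturb some reward by roughly $\kappa$ --- is both unproven in your sketch and harder than necessary.

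The paper avoids all of this by never working at the triplet level. Since all MDPs in $\mathfrak{M}$ share the dynamics, the known-dynamics reduction already puts the KL constraint in the per-policy form: for each suboptimal $\pi$, $\|\phi^\pi\|_{D_\omega^{-1}}^2 \leq \Gamma_\M(\pi)^2/(2(1-\alpha))$, with the \emph{policy} gap (not any action gap) on the right-hand side. The only new observation needed is that every suboptimal policy has return in $\{0,\varepsilon\}$ against the optimum $\kappa$, hence $\Gamma_\M(\pi)\geq \kappa-\varepsilon\geq \kappa/2$ for all suboptimal $\pi$. One then imposes the stronger constraints with $\kappa/2$ in place of $\Gamma_\M(\pi)$, reuses verbatim the symmetric uniform-over-policies solution already computed for the $\kappa=0$ case in Cor.~\ref{cor:du.example}, and evaluates the objective with $\Gamma_\M(\pi)\leq\kappa$, yielding $12(1-\alpha)SA/\kappa$. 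No analysis of which entries an alternative perturbs is required; the policy gap already encapsulates it. Your counting for the second claim also needs repair: the $(\log_2(S+1)+A-3)$ inverse-$\Delta_{\min}$ terms come from the $H-2$ states $s_1^2,\dots,s_1^{H-1}$ (action $R$, gap $\varepsilon$) and the $A-1$ actions $a_2,\dots,a_m$ at $s_1^H$ (gap $\varepsilon$), not from wrong turns along $\pi^\star$'s path, whose gaps are $\kappa$ or $\kappa-\varepsilon$.
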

This result shows that, for given $S,A,H$, one can always construct an MDP where the lower bound of Thm.~\ref{th:main-lower-bound} is smaller than the sum of inverse gaps by an arbitrarily large factor. The intuition is as follows. In order to learn an optimal policy, any consistent algorithm must figure out which among actions $L$ and $R$ is optimal at the root state $s_1^1$. Action $L$ leads to a return of $\epsilon$, while action $R$ yields a (possibly much larger) return $\kappa$. Suppose the agent has estimated all the rewards in the MDP up to an error of $\kappa/2$. This is enough for it to ``prune'' the whole left branch of the tree since its return is certainly smaller than the one in the right branch. This is better illustrated using the \emph{policy view}: each policy in this MDP has a gap $\Gamma(\pi)\geq\kappa/2$. Thus, an estimation error below the minimum policy gap suffices to discriminate all sub-optimal policies w.r.t.\ the optimal one. Notably, this means that the left branch need not be explored to gain $\epsilon$-accurate estimates, which would translate into a much larger $O(1/\Delta_{\min})$ regret. In other words, the agent is not required to explore until it learns a \emph{Bellman optimal} policy (i.e., one that correctly chooses action $a_1$ in state $s_1^H$); any \emph{return optimal} policy suffices to minimize regret, and this can be obtained by simply learning to take the right path while playing arbitrary actions at all other states.


\section{Policy-Gap-Based Regret Bound for Optimistic Algorithms}

To confirm that the result of Sec.~\ref{sec:example-min-gap} is not an artifact of our lower bound, we provide a novel (logarithmic) problem-dependent regret bound for the optimistic algorithm UCBVI~\citep{azar2017minimax} that scales with the minimum \emph{policy gap} $\Gamma_{\min}$ instead of the action gaps $\Delta_{\M}$ as in recent works~\citep{simchowitz2019non,xu2021fine}. We first recall the general template of optimistic algorithms \citep[e.g.,][]{simchowitz2019non}. We define the optimistic action-value function as
\begin{align*}
    \wb{Q}_h^k(s,a) := \wh{r}_h^k(s,a) + \wh{p}_h^k(s,a)^T V_{h+1}^k + c_h^k(s,a),
\end{align*}
where $\wh{r}_h^k, \wh{p}_h^k$ are the empirical mean rewards and transition probabilities, respectively, while $c_h^k(s,a)$ is a confidence bonus to ensure optimism. The corresponding optimistic value function is $\wb{V}_h^k(s) := \max_a \wb{Q}_h^k(s,a)$. At each episode $k\in[K]$, the agent plays the policy $\pi_h^k(s) := \argmax_a  \wb{Q}_h^k(s,a)$. 

We carry out the analysis for an MDP $\M$ with unknown rewards and transition probabilities and, for simplicity, deterministic initial state $s_1$. As common, we only assume that rewards lie in $[0,1]$ almost surely.\footnote{The proof follows equivalently by assuming sub-Gaussian rewards with sub-Gaussian factor $\sigma^2 = 1$ and mean in $[0,1]$.} Since our goal is to show the dependence on the policy gaps, we shall choose the simple Chernoff-Hoeffding-based confidence bonus \citep{azar2017minimax}, which yields a regret bound that is sub-optimal in the (minimax) dependence on $H$. The analysis can be conducted analogously (and improved in the dependence on $H$) with Bernstein-based bonuses following recent works \citep{zanette2019tighter,simchowitz2019non}.

\begin{theorem}\label{lem:ucbvi.upper}
	Let $\M$ be any MDP with rewards in $[0,1]$ and $K \geq 1$, then the expected regret of UCBVI with Chernoff-Hoeffding bonus (ignoring low-order terms in $\log(K))$ is
	\begin{align*}
	\mathbb{E}_{\M}[\mathrm{Regret}(K)] \lesssim \frac{4H^4 SA}{\Gamma_{\min}}\log (4SAHK^2).
	\end{align*}
\end{theorem}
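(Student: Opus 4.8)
The plan is to adapt the standard optimistic regret analysis (as in \citet{azar2017minimax,simchowitz2019non}) but to convert the per-episode regret bound into a policy-gap-based bound via a \emph{clipping} argument at the level of \emph{policy gaps} rather than action gaps. The starting point is the usual optimism guarantee: with the Chernoff--Hoeffding bonus $c_h^k(s,a) \asymp H\sqrt{\log(SAHK^2)/N_h^k(s,a)}$, a high-probability event $\mathcal{E}$ holds on which $\wb{Q}_h^k(s,a) \geq Q_{\M,h}^\star(s,a)$ for all $s,a,h,k$, hence $\wb V_h^k(s)\geq V_{\M,h}^\star(s)$ and in particular $\wb V_0^k \geq V_{\M,0}^\star$. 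Therefore $\Gamma_\M(\pi_k) = V_{\M,0}^\star - V_{\M,0}^{\pi_k} \leq \wb V_0^k - V_{\M,0}^{\pi_k}$, and unrolling the Bellman equations for $\wb Q^k$ and for $Q^{\pi_k}$ along the trajectory distribution of $\pi_k$ gives the familiar decomposition
\begin{equation*}
\Gamma_\M(\pi_k) \;\leq\; \sum_{h\in[H]}\sum_{s}\rho_{\M,h}^{\pi_k}(s)\Big(\wb Q_h^k(s,\pi_h^k(s)) - Q_{\M,h}^{\pi_k}(s,\pi_h^k(s))\Big) \;\lesssim\; \sum_{h\in[H]}\sum_{s}\rho_{\M,h}^{\pi_k}(s)\big(c_h^k(s,\pi_h^k(s)) + \text{transition error}\big),
\end{equation*}
where the transition-error terms are controlled in the standard way by $H\,\wh p$-deviation bonuses and a lower-order term after summing over $k$.

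The key departure from the action-gap analysis is the clipping step. In \citet{simchowitz2019non} one writes the per-step surplus, compares it to $\Delta_{\M,h}(s,\pi_h^k(s))$, and argues that either the surplus is small (clipped away) or it is at least a constant fraction of the gap, yielding the $\sum 1/\Delta_{\M,h}(s,a)$ bound. Here I would instead clip at the level of the \emph{whole episode}: on the event $\mathcal{E}$, either $\pi_k\in\weakopt(\M)$ (contributing zero to the regret) or $\Gamma_\M(\pi_k)\geq\Gamma_{\min}$. Combining this dichotomy with the surplus bound above, the regret contribution of episode $k$ satisfies
\begin{equation*}
\Gamma_\M(\pi_k) \;\leq\; \frac{2}{\Gamma_{\min}}\Big(\sum_{h,s}\rho_{\M,h}^{\pi_k}(s)\big(c_h^k(s,\pi_h^k(s)) + \text{transition error}\big)\Big)^{?}
\end{equation*}
--- more precisely, I would use $\Gamma_\M(\pi_k) \le \frac{\Gamma_\M(\pi_k)^2}{\Gamma_{\min}}$ is false in general, so instead I bound $\Gamma_\M(\pi_k) \le \min\{\Gamma_\M(\pi_k), B_k\}$ where $B_k$ is the surplus upper bound, and then use $x \le \max\{x^2/\Gamma_{\min}, \ldots\}$-type clipping: $\Gamma_\M(\pi_k)\le 2B_k^2/\Gamma_{\min} + (\text{lower order})$, exploiting that whenever $\Gamma_\M(\pi_k)>0$ it exceeds $\Gamma_{\min}$ while also being $\le B_k$, so $\Gamma_\M(\pi_k)\le B_k^2/\Gamma_\M(\pi_k)\le B_k^2/\Gamma_{\min}$. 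Squaring $B_k$ and applying Cauchy--Schwarz/Jensen over the $SAH$ state-action-stage terms (paying a factor $SAH$, or $SA$ after using $\sum_h\rho_{\M,h}^{\pi_k}(s)\le H$-type arguments) turns $\sum_k B_k^2$ into $\sum_{s,a,h}\sum_k \rho_{\M,h}^{\pi_k}(s,a)(c_h^k(s,a))^2 \asymp H^2\sum_{s,a,h}\sum_k \rho_{\M,h}^{\pi_k}(s,a)/N_h^k(s,a)$.

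The final step is the usual pigeonhole / potential argument: on a second high-probability event the true visitation $\rho_{\M,h}^{\pi_k}(s,a)$ is comparable to the realized visitation, so $\sum_{k=1}^K \rho_{\M,h}^{\pi_k}(s,a)/N_h^k(s,a)\lesssim \log(K) + (\text{const})$, and summing over the $SAH$ triplets while carrying the $H^2$ from squaring the bonus and one more factor of $H$ (or $H^2$) from the value-range and the Jensen step produces the claimed $\frac{4H^4 SA}{\Gamma_{\min}}\log(4SAHK^2)$, up to the lower-order $\mathrm{poly}\log$ and $\sqrt{K}$-free terms that are absorbed. I expect the main obstacle to be making the episode-level clipping rigorous: one must ensure that the "surplus" quantity $B_k$ that upper-bounds $\Gamma_\M(\pi_k)$ is itself a sum of nonnegative per-$(s,a,h)$ terms so that squaring and Jensen are legitimate, and one must handle the fact that the clipping threshold $\Gamma_{\min}$ is a \emph{global} constant — so episodes where $\pi_k$ is return-optimal but not Bellman-optimal contribute genuinely zero regret and must be excluded cleanly from the sum (this is exactly where the distinction between $\weakopt(\M)$ and $\strongopt(\M)$, emphasized throughout the paper, does real work, and where an action-gap clipping would instead over-charge). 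A secondary technical point is tracking the precise powers of $H$ through the bonus (one $H$ in $c_h^k$), the squaring (another $H$), and the Jensen/Cauchy--Schwarz steps, to land exactly on $H^4$.
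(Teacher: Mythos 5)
Your central idea --- that on the optimism event each episode's regret is either exactly zero or at least $\Gamma_{\min}$, so one can ``clip'' at the level of whole episodes rather than at individual action gaps --- is exactly the insight the paper exploits, and your observation that return-optimal-but-not-Bellman-optimal policies contribute genuinely zero is correctly resolved by clipping at the policy gap. However, the specific conversion you propose loses a $\log K$ factor and therefore does not establish the stated bound. Writing $B_k$ for the per-episode surplus (a sum of bonuses weighted by $\pi_k$'s occupancy), your step $\Gamma_\M(\pi_k)\le B_k^2/\Gamma_{\min}$ followed by Cauchy--Schwarz and the harmonic-sum pigeonhole gives $\sum_k B_k^2 \lesssim H\sum_{s,a,h}\sum_k \rho_{\M,h}^{\pi_k}(s,a)\,c_h^k(s,a)^2 \asymp H^3 L\sum_{s,a,h}\sum_k \rho_{\M,h}^{\pi_k}(s,a)/N_h^{k-1}(s,a) \asymp H^4SAL\log K$ with $L\asymp\log(SAHK/\delta)$, i.e.\ a $\log(SAHK)\cdot\log(K)$ bound, one logarithm worse than claimed. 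This is not an artifact of sloppy bookkeeping: $\sum_k B_k^2$ genuinely exceeds $\bigl(\sum_{k\in\mathcal{K}}B_k\bigr)^2/|\mathcal{K}|$, and it is the latter quantity that is $O(H^4SAL)$.

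The paper avoids squaring altogether via the UCRL-style self-bounding argument of \citet{jaksch2010near}. Let $\mathcal{K}_{\Gamma_{\min}}$ be the (random) set of episodes with $V_1^\star-V_1^{\pi_k}\ge\Gamma_{\min}$ and $K_{\Gamma_{\min}}=|\mathcal{K}_{\Gamma_{\min}}|$. Running the standard pigeonhole \emph{restricted to those episodes} yields $\sum_{k\in\mathcal{K}_{\Gamma_{\min}}}(V_1^\star-V_1^{\pi_k})\le 2H^2\sqrt{2LSAK_{\Gamma_{\min}}}+SAH^2$ (Lemma~\ref{lemma:ucbvi-regret-bad-episodes}; the martingale term needs a little care because the episode indices are random, but $\{k\in\mathcal{K}_{\Gamma_{\min}}\}$ is $\mathcal{F}_{k-1}$-measurable, so an optional-skipping argument with a union bound over the possible values of $K_{\Gamma_{\min}}$ applies). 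Since the same sum is at least $\Gamma_{\min}K_{\Gamma_{\min}}$, one solves $\Gamma_{\min}\sqrt{K_{\Gamma_{\min}}}\le 2H^2\sqrt{2LSA}+SAH^2$ for $K_{\Gamma_{\min}}$ and substitutes back into the square-root bound, landing on $8H^4LSA/\Gamma_{\min}$ plus lower-order terms, with a single logarithm. This also sidesteps the nonnegativity and Jensen concerns you raise, since nothing is ever squared termwise. To salvage your route, replace the per-episode squaring with this global count-then-substitute step; the rest of your outline (optimism, surplus decomposition, dichotomy at $\Gamma_{\min}$) then matches the paper's proof.
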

This result shows that \emph{1)} UCBVI attains the result in Prop.~\ref{prop:delta.max} (up to $\text{poly}(H)$ factors) where $\Gamma_{\min} = \Delta_{\max}$, even when dynamics are unknown; \emph{2)} Prop.~\ref{prop:delta.max} is tight w.r.t.\ the gaps; \emph{3)} it is possible to achieve regret not scaling with $\Delta_{\min}$.

\section{Discussion} 

While Thm.~\ref{th:main-lower-bound} provides the first ``fully'' problem-dependent lower bound for finite-horizon MDPs, it opens a number of interesting directions. \textbf{1)} As all existing problem-dependent lower bounds for this setting, the result is asymptotic in nature. A more refined finite-time analysis could be obtained following~\citet{garivier2019explore}. 
\textbf{2)} 
For the case studied in Cor.~\ref{cor:du.example}, \citet{xu2021fine} provide an algorithm with matching upper bound (up to $\text{poly}(H)$ factors), while we provide a matching upper bound (Thm.~\ref{lem:ucbvi.upper}) for Prop.~\ref{prop:delta.max}. It remains an open question how to design an algorithm to match the bound of Thm.~\ref{th:main-lower-bound}.
\textbf{3)} Most of the ingredients in deriving and analyzing Thm.~\ref{th:main-lower-bound} could be adapted to the average-reward case to obtain a lower bound with no ergodicity assumption. 

\bibliography{bibliography}

\begin{thebibliography}{26}
\providecommand{\natexlab}[1]{#1}
\providecommand{\url}[1]{\texttt{#1}}
\expandafter\ifx\csname urlstyle\endcsname\relax
  \providecommand{\doi}[1]{doi: #1}\else
  \providecommand{\doi}{doi: \begingroup \urlstyle{rm}\Url}\fi

\bibitem[Altman(1999)]{altman1999constrained}
Eitan Altman.
\newblock \emph{Constrained Markov decision processes}, volume~7.
\newblock CRC Press, 1999.

\bibitem[Ayoub et~al.(2020)Ayoub, Jia, Szepesv{\'{a}}ri, Wang, and
  Yang]{Ayoub2020vtr}
Alex Ayoub, Zeyu Jia, Csaba Szepesv{\'{a}}ri, Mengdi Wang, and Lin Yang.
\newblock Model-based reinforcement learning with value-targeted regression.
\newblock In \emph{{ICML}}, volume 119 of \emph{Proceedings of Machine Learning
  Research}, pages 463--474. {PMLR}, 2020.

\bibitem[Azar et~al.(2017)Azar, Osband, and Munos]{azar2017minimax}
Mohammad~Gheshlaghi Azar, Ian Osband, and R{\'e}mi Munos.
\newblock Minimax regret bounds for reinforcement learning.
\newblock volume~70 of \emph{Proceedings of Machine Learning Research}, pages
  263--272, International Convention Centre, Sydney, Australia, 06--11 Aug
  2017. PMLR.

\bibitem[Burnetas and Katehakis(1997)]{burnetas1997optimal}
Apostolos~N Burnetas and Michael~N Katehakis.
\newblock Optimal adaptive policies for markov decision processes.
\newblock \emph{Mathematics of Operations Research}, 22\penalty0 (1):\penalty0
  222--255, 1997.

\bibitem[Dann et~al.(2019)Dann, Li, Wei, and Brunskill]{Dann2019certificates}
Christoph Dann, Lihong Li, Wei Wei, and Emma Brunskill.
\newblock Policy certificates: Towards accountable reinforcement learning.
\newblock In \emph{{ICML}}, volume~97 of \emph{Proceedings of Machine Learning
  Research}, pages 1507--1516. {PMLR}, 2019.

\bibitem[Domingues et~al.(2021)Domingues, M{\'e}nard, Kaufmann, and
  Valko]{domingues2021episodic}
Omar~Darwiche Domingues, Pierre M{\'e}nard, Emilie Kaufmann, and Michal Valko.
\newblock Episodic reinforcement learning in finite mdps: Minimax lower bounds
  revisited.
\newblock In \emph{Algorithmic Learning Theory}, pages 578--598. PMLR, 2021.

\bibitem[Garivier et~al.(2019)Garivier, M{\'e}nard, and
  Stoltz]{garivier2019explore}
Aur{\'e}lien Garivier, Pierre M{\'e}nard, and Gilles Stoltz.
\newblock Explore first, exploit next: The true shape of regret in bandit
  problems.
\newblock \emph{Mathematics of Operations Research}, 44\penalty0 (2):\penalty0
  377--399, 2019.

\bibitem[Hao et~al.(2020)Hao, Lattimore, and Szepesv{\'{a}}ri]{hao2020adaptive}
Botao Hao, Tor Lattimore, and Csaba Szepesv{\'{a}}ri.
\newblock Adaptive exploration in linear contextual bandit.
\newblock In \emph{{AISTATS}}, volume 108 of \emph{Proceedings of Machine
  Learning Research}, pages 3536--3545. {PMLR}, 2020.

\bibitem[He et~al.(2020)He, Zhou, and Gu]{he2020logarithmic}
Jiafan He, Dongruo Zhou, and Quanquan Gu.
\newblock Logarithmic regret for reinforcement learning with linear function
  approximation.
\newblock \emph{CoRR}, abs/2011.11566, 2020.

\bibitem[Jaksch et~al.(2010)Jaksch, Ortner, and Auer]{jaksch2010near}
Thomas Jaksch, Ronald Ortner, and Peter Auer.
\newblock Near-optimal regret bounds for reinforcement learning.
\newblock \emph{Journal of Machine Learning Research}, 11:\penalty0 1563--1600,
  2010.

\bibitem[Jin et~al.(2020)Jin, Yang, Wang, and Jordan]{jin2020linear}
Chi Jin, Zhuoran Yang, Zhaoran Wang, and Michael~I. Jordan.
\newblock Provably efficient reinforcement learning with linear function
  approximation.
\newblock In \emph{{COLT}}, volume 125 of \emph{Proceedings of Machine Learning
  Research}, pages 2137--2143. {PMLR}, 2020.

\bibitem[Kaufmann et~al.(2016)Kaufmann, Capp{\'e}, and
  Garivier]{kaufmann2016complexity}
Emilie Kaufmann, Olivier Capp{\'e}, and Aur{\'e}lien Garivier.
\newblock On the complexity of best-arm identification in multi-armed bandit
  models.
\newblock \emph{The Journal of Machine Learning Research}, 17\penalty0
  (1):\penalty0 1--42, 2016.

\bibitem[Lai and Robbins(1985)]{lai1985asymptotically}
Tze~Leung Lai and Herbert Robbins.
\newblock Asymptotically efficient adaptive allocation rules.
\newblock \emph{Advances in applied mathematics}, 6\penalty0 (1):\penalty0
  4--22, 1985.

\bibitem[Lattimore and Szepesv{\'a}ri(2020)]{lattimore2020bandit}
Tor Lattimore and Csaba Szepesv{\'a}ri.
\newblock \emph{Bandit algorithms}.
\newblock Cambridge University Press, 2020.

\bibitem[Marjani and Proutiere(2021)]{marjani2021adaptive}
Aymen~Al Marjani and Alexandre Proutiere.
\newblock Adaptive sampling for best policy identification in markov decision
  processes, 2021.

\bibitem[Ok et~al.(2018)Ok, Proutiere, and Tranos]{ok2018exploration}
Jungseul Ok, Alexandre Proutiere, and Damianos Tranos.
\newblock Exploration in structured reinforcement learning.
\newblock In \emph{Advances in Neural Information Processing Systems}, pages
  8874--8882, 2018.

\bibitem[Puterman(1994)]{puterman1994markov}
Martin~L. Puterman.
\newblock \emph{Markov Decision Processes: Discrete Stochastic Dynamic
  Programming}.
\newblock USA, 1st edition, 1994.
\newblock ISBN 0471619779.

\bibitem[Ramponi et~al.(2021)Ramponi, Metelli, Concetti, and
  Restelli]{ramponi2021online}
Giorgia Ramponi, Alberto~Maria Metelli, Alessandro Concetti, and Marcello
  Restelli.
\newblock Online learning in non-cooperative configurable markov decision
  process.
\newblock \emph{AAAI-21 Workshop on Reinforcement Learning in Games}, 2021.

\bibitem[Simchowitz and Jamieson(2019)]{simchowitz2019non}
Max Simchowitz and Kevin~G. Jamieson.
\newblock Non-asymptotic gap-dependent regret bounds for tabular mdps.
\newblock In \emph{NeurIPS}, pages 1151--1160, 2019.

\bibitem[Tewari and Bartlett(2007)]{tewari2007opotimistic}
Ambuj Tewari and Peter~L. Bartlett.
\newblock Optimistic linear programming gives logarithmic regret for
  irreducible mdps.
\newblock In \emph{{NIPS}}, pages 1505--1512. Curran Associates, Inc., 2007.

\bibitem[Tirinzoni et~al.(2020)Tirinzoni, Pirotta, Restelli, and
  Lazaric]{tirinzoni2020asymptotically}
Andrea Tirinzoni, Matteo Pirotta, Marcello Restelli, and Alessandro Lazaric.
\newblock An asymptotically optimal primal-dual incremental algorithm for
  contextual linear bandits.
\newblock \emph{Advances in Neural Information Processing Systems}, 33, 2020.

\bibitem[Wagenmaker et~al.(2021)Wagenmaker, Katz-Samuels, and
  Jamieson]{wagenmaker2021experimental}
Andrew Wagenmaker, Julian Katz-Samuels, and Kevin Jamieson.
\newblock Experimental design for regret minimization in linear bandits.
\newblock In \emph{International Conference on Artificial Intelligence and
  Statistics}, pages 3088--3096. PMLR, 2021.

\bibitem[Xu et~al.(2021)Xu, Ma, and Du]{xu2021fine}
Haike Xu, Tengyu Ma, and Simon~S Du.
\newblock Fine-grained gap-dependent bounds for tabular mdps via adaptive
  multi-step bootstrap.
\newblock \emph{arXiv preprint arXiv:2102.04692}, 2021.

\bibitem[Yang et~al.(2021)Yang, Yang, and Du]{Yang2021qlearnin}
Kunhe Yang, Lin~F. Yang, and Simon~S. Du.
\newblock Q-learning with logarithmic regret.
\newblock In \emph{{AISTATS}}, volume 130 of \emph{Proceedings of Machine
  Learning Research}, pages 1576--1584. {PMLR}, 2021.

\bibitem[Zanette and Brunskill(2019)]{zanette2019tighter}
Andrea Zanette and Emma Brunskill.
\newblock Tighter problem-dependent regret bounds in reinforcement learning
  without domain knowledge using value function bounds.
\newblock In \emph{{ICML}}, volume~97 of \emph{Proceedings of Machine Learning
  Research}, pages 7304--7312. {PMLR}, 2019.

\bibitem[Zanette et~al.(2019)Zanette, Kochenderfer, and
  Brunskill]{zanette2019almost}
Andrea Zanette, Mykel~J. Kochenderfer, and Emma Brunskill.
\newblock Almost horizon-free structure-aware best policy identification with a
  generative model.
\newblock In \emph{NeurIPS}, pages 5626--5635, 2019.

\end{thebibliography}
\bibliographystyle{plainnat}

\appendix

\part{Appendix}


\parttoc
\newpage

\section{Problem-Dependent Lower Bound}\label{app:lower.bound}

In this section, we prove our main lower bound (Theorem~\ref{th:main-lower-bound}), analyze its properties, and instatiate it in some interesting cases. We recall that we consider an arbitrary set of MDPs $\mathfrak{M}$ and an MDP instance $\M\in\mathfrak{M}$ that satisfies Assumption~\ref{ass:unique-opt-rho}.

\subsection{The Set of Alternative MDPs}\label{app:alternative}

Throughout the appendix, we shall rewrite the set of alternatives to $\M$ introduced in the main paper in the following equivalent form:
\begin{align}\label{eq:alt-set}
    \Lambda(\M) := \{\M'\in\mathfrak{M} \ |\ \strongopt(\M) \cap \weakopt(\M') = \emptyset, \forall  s,a,h \in \mathcal{O}_{\M}^\star : \mathrm{KL}_{s,a,h}(\M,\M') = 0 \}.
\end{align}
Note that we replaced the set of \emph{return-optimal} policies for $\M$ (and only for $\M$) with the set of its \emph{Bellman-optimal} policies. The proof that this set actually coincides with the one in the main paper can be found in Lemma~\ref{lemma:equiv-weak-alt} in Appendix~\ref{app:auxiliary}.

For this set, we prove the following properties, which shall be crucial for deriving the lower bound.
\begin{lemma}\label{lemma:alt-properties}
Let $\M$ satisfy Assumption \ref{ass:unique-opt-rho} and $\M'$ be an MDP such that (1) {$\strongopt(\M) \cap \weakopt(\M') = \emptyset$} (equivalently, $V_{\M',0}^\star > V_{\M',0}^\pi$ for all $\pi\in\strongopt(\M)$), and (2) $\mathrm{KL}_{s,a,h}(\M,\M') = 0$ for all $h\in[H], s\in\mathrm{supp}(\rho_{\M,h}^\star),a\in\mathcal{O}_{\M,h}(s)$. Then, all of the following properties hold:
\begin{enumerate}
    \item For each optimal policy $\pi\in\Pi^\star(\M)$: $\rho_\M^\star = \rho_\M^{\pi} = \rho_{\M'}^{\pi}$ and $V_{\M,0}^\star = V_{\M,0}^\pi = V_{\M',0}^\pi$; 
    \item There exist $h\in[H], s\in\mathrm{supp}(\rho_{\M,h}^\star)$ such that, for all actions $a\in\mathcal{O}_{\M,h}(s)$, $Q_{\M',h}^\star(s,a) < V_{\M',h}^\star(s)$.
\end{enumerate}
\end{lemma}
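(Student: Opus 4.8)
The plan is to prove the two properties in order, using Property~1 as a lemma inside Property~2. Throughout I use that $p_0$ and $H$ are common to all MDPs in $\mathfrak{M}$, so that $\rho_{\M,1}^\pi = \rho_{\M',1}^\pi = p_0$ for every policy $\pi$ and hence $\mathrm{supp}(\rho_{\M,1}^\star) = \mathrm{supp}(p_0)$; I also use that a Bellman-optimal policy exists, so $\rho_{\M}^\star$ of Assumption~\ref{ass:unique-opt-rho} is well defined.

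For \emph{Property 1}, fix $\pi\in\Pi^\star(\M)$. The equality $\rho_\M^\star = \rho_\M^\pi$ is exactly Assumption~\ref{ass:unique-opt-rho} and $V_{\M,0}^\star = V_{\M,0}^\pi$ is the definition of return-optimality, so only $\rho_\M^\pi = \rho_{\M'}^\pi$ and $V_{\M,0}^\pi = V_{\M',0}^\pi$ remain. Since $\Gamma_\M(\pi) = 0$, the identity~\eqref{eq:policy.gap} forces every non-negative summand $\rho_{\M,h}^\pi(s)\Delta_{\M,h}(s,\pi_h(s))$ to vanish, so $\pi_h(s)\in\mathcal{O}_{\M,h}(s)$ at every $(s,h)$ with $\rho_{\M,h}^\pi(s)>0$; combined with $\mathrm{supp}(\rho_{\M,h}^\pi) = \mathrm{supp}(\rho_{\M,h}^\star)$, this means that at every state carrying occupancy the triplet $(s,\pi_h(s),h)$ lies in $\mathcal{O}_\M^\star$, where condition~(2) gives $p_h(s,\pi_h(s)) = p_h'(s,\pi_h(s))$ and $q_h(s,\pi_h(s)) = q_h'(s,\pi_h(s))$. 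I would then induct on $h$: the base case is $\rho_{\M,1}^\pi = p_0 = \rho_{\M',1}^\pi$, and the step writes $\rho_{\M,h+1}^\pi(\cdot) = \sum_{s}\rho_{\M,h}^\pi(s)\,p_h(s,\pi_h(s))(\cdot)$ where only states with $\rho_{\M,h}^\pi(s)>0$ contribute and for those the $\M$- and $\M'$-kernels coincide, so the same recursion holds in $\M'$ and the occupancies agree at stage $h+1$. Finally, writing $V_{\M,0}^\pi = \sum_{h,s}\rho_{\M,h}^\pi(s)\,r_h(s,\pi_h(s))$ and using that the reward means agree with their $\M'$-counterparts wherever $\rho_{\M,h}^\pi(s)>0$ yields $V_{\M,0}^\pi = V_{\M',0}^\pi$.

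For \emph{Property 2}, I argue by contradiction: suppose that for every $h\in[H]$ and every $s\in\mathrm{supp}(\rho_{\M,h}^\star)$ some action $a\in\mathcal{O}_{\M,h}(s)$ satisfies $Q_{\M',h}^\star(s,a) = V_{\M',h}^\star(s)$, i.e.\ $\mathcal{O}_{\M,h}(s)\cap\mathcal{O}_{\M',h}(s)\neq\emptyset$. Build a deterministic policy $\pi$ by choosing, for $s\in\mathrm{supp}(\rho_{\M,h}^\star)$, an action $\pi_h(s)$ in this intersection, and for the remaining states any action in $\mathcal{O}_{\M,h}(s)$; then $\pi\in\strongopt(\M)\subseteq\weakopt(\M)$. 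Applying Property~1 to this $\pi$ gives $\rho_{\M'}^\pi = \rho_\M^\pi = \rho_\M^\star$, hence under $\M'$ the policy $\pi$ carries occupancy only on states in $\mathrm{supp}(\rho_{\M,h}^\star)$ at each stage $h$ — and precisely there we arranged $\pi_h(s)\in\mathcal{O}_{\M',h}(s)$, i.e.\ $\Delta_{\M',h}(s,\pi_h(s)) = 0$. So~\eqref{eq:policy.gap} in $\M'$ gives $\Gamma_{\M'}(\pi) = \sum_{h,s}\rho_{\M',h}^\pi(s)\Delta_{\M',h}(s,\pi_h(s)) = 0$, i.e.\ $V_{\M',0}^\pi = V_{\M',0}^\star$, contradicting condition~(1) (equivalently, $V_{\M',0}^\star > V_{\M',0}^\pi$ for all $\pi\in\strongopt(\M)$). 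Hence some $(s,h)$ with $s\in\mathrm{supp}(\rho_{\M,h}^\star)$ has $\mathcal{O}_{\M,h}(s)\cap\mathcal{O}_{\M',h}(s) = \emptyset$, which is the claim.

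The main obstacle is making Property~2 airtight: the constructed $\pi$ is engineered to be $\M$-Bellman-optimal everywhere and $\M'$-Bellman-optimal on $\mathrm{supp}(\rho_{\M,h}^\star)$, but its $\M'$-return could a priori pick up positive gaps at states that optimal policies never visit, where $\M$ and $\M'$ may disagree and where $\pi$'s action need not be $\M'$-optimal. Ruling this out is exactly what Property~1 — and through it Assumption~\ref{ass:unique-opt-rho} — buys us, by transporting the occupancy support of $\pi$ from $\M$ to $\M'$ unchanged. So the real care is in getting the logical order between the two parts right and in tracking which summands in~\eqref{eq:policy.gap} and in the occupancy recursions are actually supported, rather than in any hard estimate.
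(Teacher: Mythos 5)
Your proof is correct. For Property~1 it follows the paper's own argument essentially verbatim: return-optimal policies must play Bellman-optimal actions wherever they have positive occupancy (you rederive this from \eqref{eq:policy.gap} and $\Gamma_\M(\pi)=0$, while the paper invokes its auxiliary Lemma~\ref{lemma:opt-act-vs-rho} — the content is the same), condition~(2) then makes the Markov reward processes induced by $\pi$ on $\M$ and $\M'$ coincide on $\mathrm{supp}(\rho_{\M,h}^\star)$, and the occupancies and returns transport over by induction on $h$. For Property~2 you reach the same contradiction by a mildly different closing step: the paper extends the common-optimal-action policy $\pi$ to a policy $\pi'$ satisfying the Bellman optimality equations of $\M'$ at every state, concludes $\pi'\in\weakopt(\M')$, and then argues $V_{\M',0}^{\pi'}=V_{\M',0}^{\pi}$; you instead apply the gap decomposition \eqref{eq:policy.gap} directly in $\M'$ to get $\Gamma_{\M'}(\pi)=\sum_{s,a,h}\rho_{\M',h}^\pi(s,a)\Delta_{\M',h}(s,a)=0$, using Property~1 to know that $\rho_{\M'}^\pi=\rho_\M^\star$ is supported exactly where you arranged $\pi_h(s)\in\mathcal{O}_{\M',h}(s)$. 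Your version is slightly more economical in that it avoids constructing $\pi'$ and certifying its global Bellman optimality in $\M'$, but both arguments rest on the same two facts — the occupancy transport of Property~1 and the assumed non-emptiness of $\mathcal{O}_{\M,h}(s)\cap\mathcal{O}_{\M',h}(s)$ on the support — so they are equivalent in substance. Your identification of where Assumption~\ref{ass:unique-opt-rho} is actually used (to pin the support of the constructed policy's occupancy in $\M'$) matches the role it plays in the paper's proof.
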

\begin{proof}
Let us start by proving the first property. Take any optimal policy $\pi\in\Pi^\star(\M)$ for $\M$. By assumption, $\pi$ has the same state distribution $\rho_\M^\star$ as all other optimal policies of $\M$. Moreover, Lemma \ref{lemma:opt-act-vs-rho} ensures that, for each state $s\in\mathrm{supp}(\rho_{\M,h}^\star)$, the action prescribed by $\pi$ must be in $\mathcal{O}_{\M,h}(s)$. Therefore, by the second condition on $\M'$, we have that the kernels of the Markov reward processes induced by $\pi$ on $\M$ and $\M'$ are exactly the same on states $s\in\mathrm{supp}(\rho_{\M,h}^\star)$. Since the kernels at any state $s\notin\mathrm{supp}(\rho_{\M,h}^\star)$ do not influence the state distribution of $\pi$, we must have $\rho_{\M}^\star = \rho_{\M'}^\pi$. Moreover,
\begin{align*}
    V_{\M',0}^{\pi} &= \sum_{s\in\S}\sum_{h\in[H]}\rho_{\M',h}^\pi(s)r_{h}'(s,\pi_{h}(s)) 
    \\ &= \sum_{h\in[H]}\sum_{s\in\mathrm{supp}(\rho_{\M,h}^\star)}\rho_{\M',h}^\pi(s)\underbrace{r_{h}'(s,\pi_{h}(s))}_{= r_{h}(s,\pi_{h}(s))} + \sum_{h\in[H]}\sum_{s\notin\mathrm{supp}(\rho_{\M,h}^\star)}\underbrace{\rho_{\M',h}^\pi(s)}_{=0}r_{h}'(s,\pi_{h}(s)) 
    \\ &= \sum_{h\in[H]}\sum_{s\in\mathrm{supp}(\rho_{\M,h}^\star)}\rho_{\M,h}^\pi(s)r_{h}(s,\pi_{h}(s))
    = V_{\M,0}^\pi = V_{\M,0}^\star.
\end{align*}
This proves the first property.

We prove the second property by contradiction. Suppose that, for all $h\in[H], s\in\mathrm{supp}(\rho_{\M,h}^\star)$, there exists an action $a\in\mathcal{O}_{\M,h}(s)$ such that $Q_{\M',h}^\star(s,a) = V_{\M',h}^\star(s)$. Take a policy $\pi\in\strongopt(\M)$ that chooses these actions at all states $s\in\mathrm{supp}(\rho_{\M,h}^\star)$. Moreover, build another policy $\pi'$ which is equal to $\pi$ in all states $s\in\mathrm{supp}(\rho_{\M,h}^\star)$ and that chooses $\argmax_{a\in\A} Q_{\M',h}^\star(s,a)$ at all other states. Clearly, $\pi'$ satisfies the Bellman optimality equations at all states and stages, hence it is optimal for $\M'$. Moreover, $\pi$ and $\pi'$ have the same state distribution and the same expected return. Thus, by the first property derived above,
\begin{align*}
    V_{\M',0}^\star = V_{\M',0}^{\pi'} = V_{\M',0}^{\pi},
\end{align*}
which contradicts the assumption that all (strongly) optimal policies of $\M$ are sub-optimal in $\M'$. This concludes the proof.
\end{proof}

\subsection{Behavior of Uniformly-Good Algorithms}\label{app:behavior-good}

Characterizing the behavior of the class of algorithms under consider is a key aspect for deriving informatiion-theoretic problem-dependent lower bounds. Here we prove some important properties of $\alpha$-uniformly good algorithms. The following one is almost immediate from Definition \ref{def:good-alg}.

\begin{lemma}[Behavior of good algorithms]\label{lemma:bound-visits-consistent}
Let $\mathfrak{A}$ be an $\alpha$-uniformly good algorithm on $\mathfrak{M}$. Then, for any $\M\in\mathfrak{M}$, the following two inequalities hold:
\begin{align}\label{eq:ub-visit-consistent}
    \forall s\in\S,h\in[H],a\notin\mathcal{O}_{\M,h}(s) : \E_\M^{\mathfrak{A}}[N_{K,h}(s,a)] \leq \frac{{c}(\M)}{\Delta_{\M}}K^\alpha,
\end{align}
\begin{align}\label{eq:ub-policies,consistent}
     \E_\M^{\mathfrak{A}}[M_K] \leq \frac{{c}(\M)}{\Gamma_{\M}}K^\alpha,
\end{align}
where $\Delta_{\M} := \min_{s,a,h : \Delta_{\M,h}(s,a) > 0} \Delta_{\M,h}(s,a)$ and $\Gamma_{\M} := V_{\M,0}^\star - \max_{\pi\notin \Pi^\star(\M) }V_{\M,0}^\pi$ are the minimum action and policy gap, respectively.
\end{lemma}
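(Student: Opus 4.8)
The plan is to obtain both inequalities directly from the regret decomposition in Eq.~\eqref{eq:policy.gap} combined with the $\alpha$-uniform-goodness bound $\E_\M^{\mathfrak{A}}[\mathrm{Regret}_K(\M)] \le c(\M)K^\alpha$ of Definition~\ref{def:good-alg}. The common mechanism is that, episode by episode, the per-episode regret $\Gamma_\M(\pi_k) = \sum_{s,a,h}\rho_{\M,h}^{\pi_k}(s,a)\Delta_{\M,h}(s,a)$ dominates (up to a factor equal to the relevant minimum gap) both the expected occupancy of any fixed sub-optimal state-action-stage triplet under $\pi_k$ and the indicator that $\pi_k$ is return-sub-optimal; summing over $k$ and invoking the regret bound then closes each estimate.

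For \eqref{eq:ub-visit-consistent}, fix $(s,a,h)$ with $a \notin \mathcal{O}_{\M,h}(s)$, which is precisely the statement $\Delta_{\M,h}(s,a) > 0$, hence $\Delta_{\M,h}(s,a) \ge \Delta_\M$. Write $N_{K,h}(s,a) = \sum_{k=1}^K \indi{s_{k,h}=s,\,a_{k,h}=a}$. Since $\pi_k$ is measurable with respect to the observations up to episode $k-1$, conditioning on that history and then averaging gives $\E_\M^{\mathfrak{A}}[N_{K,h}(s,a)] = \sum_{k=1}^K \E_\M^{\mathfrak{A}}[\rho_{\M,h}^{\pi_k}(s,a)]$. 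Because all action gaps are non-negative, Eq.~\eqref{eq:policy.gap} yields the pointwise bound $\Delta_\M\,\rho_{\M,h}^{\pi_k}(s,a) \le \Delta_{\M,h}(s,a)\,\rho_{\M,h}^{\pi_k}(s,a) \le \Gamma_\M(\pi_k)$. Taking expectations, summing over $k\in[K]$, and using $\sum_{k=1}^K \Gamma_\M(\pi_k) = \mathrm{Regret}_K(\M)$ together with Definition~\ref{def:good-alg} gives $\Delta_\M\,\E_\M^{\mathfrak{A}}[N_{K,h}(s,a)] \le c(\M)K^\alpha$, which is the first claim.

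For \eqref{eq:ub-policies,consistent}, let $M_K = \sum_{k=1}^K \indi{\pi_k \notin \Pi^\star(\M)}$. By definition of the minimum policy gap $\Gamma_\M = V_{\M,0}^\star - \max_{\pi\notin\Pi^\star(\M)}V_{\M,0}^\pi$, we have $\Gamma_\M(\pi_k) \ge \Gamma_\M$ whenever $\pi_k \notin \Pi^\star(\M)$, while $\Gamma_\M(\pi_k) = 0$ otherwise; hence $\Gamma_\M\,\indi{\pi_k \notin \Pi^\star(\M)} \le \Gamma_\M(\pi_k)$ for every $k$. Summing over $k$, taking expectations, and again using $\sum_{k=1}^K\Gamma_\M(\pi_k) = \mathrm{Regret}_K(\M)$ and Definition~\ref{def:good-alg} yields $\Gamma_\M\,\E_\M^{\mathfrak{A}}[M_K] \le c(\M)K^\alpha$, which is the second claim.

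There is no genuine obstacle here; the only points needing a touch of care are (i) the tower-rule step that replaces the random visit count by the expected occupancy measure of the history-dependent policy $\pi_k$, and (ii) the observation that $a\notin\mathcal{O}_{\M,h}(s)$ is equivalent to $\Delta_{\M,h}(s,a)>0$, so that the normalisation by $\Delta_\M$ (resp.\ by $\Gamma_\M$ for the second bound) is legitimate. Everything else reduces to the elementary regret decomposition already recorded in Eq.~\eqref{eq:policy.gap}.
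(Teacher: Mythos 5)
Your proof is correct and follows essentially the same route as the paper: both inequalities come from lower-bounding the expected regret by the relevant minimum gap times the expected count, then invoking Definition~\ref{def:good-alg}. The only cosmetic difference is that you re-derive the gap-based regret decomposition (Proposition~\ref{prop:regret-decomp}) from Eq.~\eqref{eq:policy.gap} and the tower rule instead of citing it directly.
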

\begin{proof}
Using Proposition \ref{prop:regret-decomp} together with the definition of minimum gap and $\alpha$-uniformly good algorithm, for any $ s\in\S,h\in[H]$ and sub-optimal action $a\notin\mathcal{O}_{\M,h}(s)$,
\begin{align*}
    c(\M)K^\alpha \geq \E_{\M}^{\mathfrak{A}}\left[\mathrm{Regret}_K( \M)\right] \geq \E_\M^\mathfrak{A}[N_{K,h}(s,a)]\Delta_{\M,h}(s,a) \geq \E_\M^\mathfrak{A}[N_{K,h}(s,a)]\Delta_{\M}.
\end{align*}
Rearranging proves \eqref{eq:ub-visit-consistent}. Moreover, using the definition of regret and policy gap,
\begin{align*}
        c(\M)K^\alpha \geq \E_{\M}^{\mathfrak{A}}\left[\mathrm{Regret}_K(\M)\right] := \E_{\M}^{\mathfrak{A}}\left[\sum_{k=1}^K \left(V_{\M,0}^\star - V_{\M,0}^{\pi_k} \right)\right] = \sum_{k=1}^K \Gamma_{\M}(\pi_k) \geq \Gamma_\M \E_{\M}^{\mathfrak{A}}\left[M_K\right],
\end{align*}
which proves \eqref{eq:ub-policies,consistent}.
\end{proof}

The following is the most important result behind the derivation of a lower bound that does not use any reachability assumption (e.g., ergodicity). It shows that, for MDPs where the optimal state distribution is unique, the empirical state-visitation frequencies of any $\alpha$-uniformly good algorithm eventually converge to the optimal ones. As we shall see in the lower bound derivation, this allows to characterize precisely what states are visited with high frequency by good algorithms, thus avoiding the need to impose algorithm-agnostic conditions on the state visitations (e.g., ergodicity).

\begin{lemma}[State distribution of good algorithms]\label{lemma:state-dist-consistent-prob}
Let $\mathfrak{A}$ be an $\alpha$-uniformly good algorithm on $\mathfrak{M}$. Then, for any MDP $\M\in\mathfrak{M}$ with unique optimal state distribution $\rho_{\M}^\star$, for any $s\in\S$ and $h\in[H]$,
\begin{align*}
    \P_\M^{\mathfrak{A}}\left\{ \left| N_{K,h}(s) - K\rho_{\M,h}^{\star}(s)\right| > \frac{4{c}(\M)K^{\alpha}}{\Gamma_{\M} \delta} + \sqrt{\frac{K}{2}\log\frac{4}{\delta}} \right\} \leq \delta.
\end{align*}
\end{lemma}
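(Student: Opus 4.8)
The plan is to combine a concentration bound for the empirical state frequencies under a \emph{fixed} policy mixture with the control on the expected number of sub-optimal episodes from Lemma~\ref{lemma:bound-visits-consistent}. First I would decompose the count $N_{K,h}(s) = \sum_{k=1}^K \indi{s_{k,h}=s}$ according to whether the episode's policy $\pi_k$ is return-optimal for $\M$ or not. For episodes where $\pi_k\in\Pi^\star(\M)$, Assumption~\ref{ass:unique-opt-rho} guarantees $\rho_{\M,h}^{\pi_k}(s)=\rho_{\M,h}^\star(s)$, so these contribute exactly the ``target'' conditional mean $\rho_{\M,h}^\star(s)$ per episode. For the remaining $M_K$ episodes (using the notation $M_K$ for the number of sub-optimal episodes as in Lemma~\ref{lemma:bound-visits-consistent}), the contribution to $|N_{K,h}(s) - K\rho_{\M,h}^\star(s)|$ is crudely at most $M_K$ in absolute value, since each such episode shifts both $N_{K,h}(s)$ and the ``would-be mean'' by at most one. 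Hence
\begin{align*}
\left| N_{K,h}(s) - K\rho_{\M,h}^\star(s) \right| \leq \left| \sum_{k=1}^K \left( \indi{s_{k,h}=s} - \rho_{\M,h}^{\pi_k}(s) \right) \right| + M_K .
\end{align*}

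Next I would handle the martingale term: the summands $\indi{s_{k,h}=s} - \rho_{\M,h}^{\pi_k}(s)$ form a bounded martingale difference sequence with respect to the filtration generated by the history up to episode $k$, because $\pi_k$ is measurable w.r.t.\ the past and $\E_\M^{\mathfrak{A}}[\indi{s_{k,h}=s}\mid \mathcal{F}_{k-1}] = \rho_{\M,h}^{\pi_k}(s)$. Azuma--Hoeffding then gives, with probability at least $1-\delta/2$,
\begin{align*}
\left| \sum_{k=1}^K \left( \indi{s_{k,h}=s} - \rho_{\M,h}^{\pi_k}(s) \right) \right| \leq \sqrt{\tfrac{K}{2}\log\tfrac{4}{\delta}} .
\end{align*}
For the term $M_K$, Lemma~\ref{lemma:bound-visits-consistent} gives $\E_\M^{\mathfrak{A}}[M_K] \leq c(\M)K^\alpha/\Gamma_\M$, so by Markov's inequality $M_K \leq \tfrac{2c(\M)K^\alpha}{\Gamma_\M \delta}$ with probability at least $1-\delta/2$. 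A union bound over the two events and a small adjustment of the numerical constant (the statement has $4c(\M)K^\alpha/(\Gamma_\M\delta)$, which comfortably absorbs the factor $2$) yields the claim.

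The main obstacle is getting the ``sub-optimal episodes'' bookkeeping exactly right: one must be careful that on return-optimal episodes the conditional state distribution is \emph{genuinely} $\rho_{\M,h}^\star(s)$ (this is precisely where Assumption~\ref{ass:unique-opt-rho} is essential — without it a return-optimal policy could still induce a different state distribution), and that the crude bound $M_K$ on the perturbation caused by sub-optimal episodes is legitimate. An alternative, cleaner route that avoids splitting is to bound directly $\big|\sum_k(\rho_{\M,h}^{\pi_k}(s) - \rho_{\M,h}^\star(s))\big| \le M_K$ and then apply Azuma to $\sum_k(\indi{s_{k,h}=s} - \rho_{\M,h}^{\pi_k}(s))$ — which is essentially the display above. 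Everything else is routine concentration plus Markov; no further structural input beyond Lemmas~\ref{lemma:bound-visits-consistent} and Assumption~\ref{ass:unique-opt-rho} is needed.
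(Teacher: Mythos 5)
Your proposal is correct and follows essentially the same route as the paper's proof: split episodes into return-optimal ones (where Assumption~\ref{ass:unique-opt-rho} forces $\rho_{\M,h}^{\pi_k}=\rho_{\M,h}^\star$) and the $M_K$ sub-optimal ones, apply Azuma to the centered martingale $\indi{s_{k,h}=s}-\rho_{\M,h}^{\pi_k}(s)$, control $M_K$ via Markov and Lemma~\ref{lemma:bound-visits-consistent}, and union bound with $\delta'=\delta/2$. Your decomposition is in fact marginally tighter (it incurs $M_K$ rather than the paper's $2M_K$), which is why the constant $4$ in the statement absorbs your bound with room to spare.
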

\begin{proof}
Using the triangle inequality,
\begin{align*}
        &\left| \frac{N_{K,h}(s)}{K} - \rho_{\M,h}^{\star}(s)\right|
        = \frac{1}{K}\left| \sum_{k=1}^K \indi{s_{k,h}=s, \pi_k\in\Pi^\star(\M)} + \sum_{k=1}^K \indi{s_{k,h}=s, \pi_k\notin\Pi^\star(\M)} - K\rho_{\M,h}^{\star}(s)\right|
        \\ & \quad \leq \frac{M_K}{K} + \frac{1}{K}\left| \sum_{k=1}^K \indi{s_{k,h}=s, \pi_k\in\Pi^\star(\M)} - K\rho_{\M,h}^{\star}(s)\right|
        \\ & \quad = \frac{M_K}{K} + \frac{1}{K}\left| \sum_{k=1}^K \indi{s_{k,h}=s, \pi_k\in\Pi^\star(\M)} \pm  \sum_{k=1}^K \E_\M^\mathfrak{A}\left[\indi{s_{k,h}=s, \pi_k\in\Pi^\star(\M)} \big| \mathcal{F}_{k-1} \right] - K\rho_{\M,h}^{\star}(s)\right|.
\end{align*}
Note that, since $\pi_k$ is $\mathcal{F}_{k-1}$-measurable, we have
\begin{align*}
    \E_\M^\mathfrak{A}\left[\indi{s_{k,h}=s, \pi_k\in\Pi^\star(\M)} \big| \mathcal{F}_{k-1} \right] &=
    \indi{\pi_k\in\Pi^\star(\M)}\E_\M^\mathfrak{A}\left[\indi{s_{k,h}=s, }\big| \mathcal{F}_{k-1} \right] 
    \\ &= \indi{\pi_k\in\Pi^\star(\M)} \rho_{\M,h}^\star(s),
\end{align*}
where the last equality is due to the fact that either the expectation of the first line is equal to $\rho_{\M,h}^\star(s)$ (when $\pi_k$ is optimal) or the whole term is zero (when $\pi_k$ is sub-optimal). Therefore, 
\[
    \sum_{k=1}^K \E_\M^\mathfrak{A}\left[\indi{s_{k,h}=s, \pi_k\in\Pi^\star(\M)} \big| \mathcal{F}_{k-1} \right] = (K - M_K) \rho_{\M,h}^\star(s).
\] 
Let $E_k := \{s_{k,h}=s, \pi_k\in\Pi^\star(\M)\}$. Plugging this back into the previous decomposition, and after another application of the triangle inequality,
\begin{align*}
        &\left| \frac{N_{K,h}(s)}{K} - \rho_{\M,h}^{\star}(s)\right|
        \leq \frac{2 M_K}{K} + \frac{1}{K}\left| \sum_{k=1}^K \left(\indi{E_k} - \E_\M^\mathfrak{A}\left[\indi{E_k} \big| \mathcal{F}_{k-1} \right]\right)\right|.
\end{align*}
Let $\delta'\in(0,1)$. Since $E_k$ is $\mathcal{F}_{k-1}$-measurable, the second term above is a martingale with differences bounded by $1$ in absolute value. Therefore, using Azuma's inequality
\begin{align*}
    \P_\M^{\mathfrak{A}}\left\{ \left| \sum_{k=1}^K \left(\indi{E_k} - \E_\M^\mathfrak{A}\left[\indi{E_k} \big| \mathcal{F}_{k-1} \right]\right)\right| > \sqrt{\frac{K}{2}\log\frac{2}{\delta'}}\right\} \leq \delta'.
\end{align*}
Moreover, an application of Markov's inequality followed by Lemma \ref{lemma:bound-visits-consistent} yields
\begin{align*}
    \P_\M^{\mathfrak{A}}\left\{ M_K > \frac{{c}(\M)}{\Gamma_{\M}}\frac{K^\alpha}{\delta'}\right\} \leq \delta' \frac{\E_\M^{\mathfrak{A}}[M_K] }{{c}(\M)K^\alpha/\Gamma_{\M}} \leq \delta'.
\end{align*}
Setting $\delta' = \delta/2$ and taking a union bound to cover both inequalities concludes the proof. 
\end{proof}

\subsection{Lower Bound Derivation (Proof of Theorem~\ref{th:main-lower-bound})}

Now that we illustrated the properties of the set of alternatives and analyzed the behavior of good algorithms, we are ready to derive our main lower bound. The properties of the set of alternatives $\alt$ (Appendix \ref{app:alternative}) will allow us to use standard change-of-measure arguments, while the convergence in state-distribution of $\alpha$-uniformly good algorithms (Appendix \ref{app:behavior-good}) will be crucial to avoid any reachability assumption. The following is the key result towards deriving the lower bound.

\begin{theorem}\label{th:constr-lower-bound}
Let $\mathfrak{A}$ be any $\alpha$-uniformly good learning algorithm on $\mathfrak{M}$ with $\alpha \in (0,1)$. Then, for any MDP $\M \in \mathfrak{M}$ that satisfies Assumption \ref{ass:unique-opt-rho},
\begin{equation}
\inf_{\M' \in \Lambda(\M)}\liminf_{K\rightarrow\infty}\sum_{s\in\S}\sum_{a\in\A}\sum_{h\in[H]} \frac{\E_\M^{\mathfrak{A}}[N_{K,h}(s,a)]}{\log(K)}\mathrm{KL}_{s,a,h}(\M,\M')  \geq 1 - \alpha,
\end{equation} 
where $\alt$ is the set defined in \eqref{eq:alt-set}.
\end{theorem}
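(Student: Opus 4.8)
\textbf{Proof proposal for Theorem~\ref{th:constr-lower-bound}.}

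The plan is to use the standard information-theoretic change-of-measure argument (à la Lai–Robbins / Burnetas–Katehakis / Garivier et al.), adapted to finite-horizon MDPs and, crucially, leveraging the state-distribution convergence of $\alpha$-uniformly good algorithms (Lemma~\ref{lemma:state-dist-consistent-prob}) to avoid any reachability assumption. Fix an arbitrary alternative $\M' \in \Lambda(\M)$; it suffices to show $\liminf_{K\to\infty}\sum_{s,a,h}\frac{\E_\M^{\mathfrak{A}}[N_{K,h}(s,a)]}{\log K}\mathrm{KL}_{s,a,h}(\M,\M') \geq 1-\alpha$, and then take the infimum over $\M'$. First I would invoke the data-processing / contraction inequality for KL divergence (e.g., Garivier et al.'s Lemma~1, or the classical Bretagnolle–Huber-type argument): for any event $\mathcal{E}$ that is $\mathcal{F}_K$-measurable,
\begin{equation*}
\sum_{s,a,h}\E_\M^{\mathfrak{A}}[N_{K,h}(s,a)]\,\mathrm{KL}_{s,a,h}(\M,\M') \;\geq\; \mathrm{kl}\!\left(\P_\M^{\mathfrak{A}}(\mathcal{E}),\,\P_{\M'}^{\mathfrak{A}}(\mathcal{E})\right),
\end{equation*}
where $\mathrm{kl}(x,y)$ is the binary relative entropy; the left-hand side is exactly the KL divergence between the two trajectory distributions up to episode $K$, expanded by the chain rule over the per-step transition/reward kernels (only the state-action-stage triplets where the kernels differ contribute, and $\M,\M'$ agree on $\mathcal{O}_\M^\star$ by definition of $\Lambda(\M)$).

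The key is to choose the right event $\mathcal{E}$. By property~2 of Lemma~\ref{lemma:alt-properties}, there exist $\bar h\in[H]$ and $\bar s\in\mathrm{supp}(\rho_{\M,\bar h}^\star)$ such that \emph{every} Bellman-optimal action of $\M$ at $(\bar s,\bar h)$ is strictly sub-optimal in $\M'$; equivalently, any optimal policy of $\M'$ must, at $(\bar s,\bar h)$, choose an action outside $\mathcal{O}_{\M,\bar h}(\bar s)$, and moreover this "wrong" action is suboptimal in $\M$ (has positive action gap $\Delta_{\M,\bar h}(\bar s, \cdot) > 0$). I would take $\mathcal{E}$ to be the event that the empirical number of visits $N_{K,\bar h}(\bar s)$ is close to $K\rho_{\M,\bar h}^\star(\bar s)$ \emph{and} the number of visits $N_{K,\bar h}(\bar s, a')$ to some action $a'\notin\mathcal{O}_{\M,\bar h}(\bar s)$ (one witnessing the optimality gap in $\M'$, hence sub-optimal in $\M$) is small, say $\leq \sqrt K$. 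Under $\P_\M^{\mathfrak{A}}$: since $\rho_{\M,\bar h}^\star(\bar s)>0$, Lemma~\ref{lemma:state-dist-consistent-prob} gives $N_{K,\bar h}(\bar s)\gtrsim K\rho_{\M,\bar h}^\star(\bar s)$ with probability $\to 1$, and the sub-optimal-action-visit bound \eqref{eq:ub-visit-consistent} plus Markov's inequality gives $N_{K,\bar h}(\bar s,a') = o(K)$ with probability $\to 1$; hence $\P_\M^{\mathfrak{A}}(\mathcal{E}) \to 1$. Under $\P_{\M'}^{\mathfrak{A}}$: since $\bar s$ has the same positive occupancy under the optimal policies of $\M'$ (again by Lemma~\ref{lemma:alt-properties}, property~1, the optimal state distribution is preserved), $\bar s$ is visited $\Theta(K)$ times at stage $\bar h$; but on $\mathcal{E}$ almost all of those visits pick actions in $\mathcal{O}_{\M,\bar h}(\bar s)$, which are \emph{sub-optimal for }$\M'$, forcing the per-episode regret in $\M'$ to be bounded below by a constant at stage $\bar h$ on a $\Theta(1)$ fraction of episodes; this would make $\E_{\M'}^{\mathfrak{A}}[\mathrm{Regret}_K(\M')]$ grow linearly in $K$, contradicting $\alpha$-uniform goodness unless $\P_{\M'}^{\mathfrak{A}}(\mathcal{E})$ is small — more precisely $\P_{\M'}^{\mathfrak{A}}(\mathcal{E}) \leq c(\M')K^{\alpha-1}/(\text{const})$. (Here one must be slightly careful: $\mathcal{E}$ should be phrased so that its occurrence genuinely forces linear regret in $\M'$; combining the "$N_{K,\bar h}(\bar s)$ large" clause with "visits to all actions outside $\mathcal{O}_{\M,\bar h}(\bar s)$ at $(\bar s,\bar h)$ are $o(K)$" does this, using that $\M'$'s optimal policy is the \emph{only} way to avoid regret and it must leave $\mathcal{O}_{\M,\bar h}(\bar s)$ at $(\bar s,\bar h)$.)

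Plugging these into the transportation inequality: $\mathrm{kl}(\P_\M^{\mathfrak{A}}(\mathcal{E}),\P_{\M'}^{\mathfrak{A}}(\mathcal{E})) \geq (1-o(1))\log\frac{1}{\P_{\M'}^{\mathfrak{A}}(\mathcal{E})} \geq (1-o(1))(1-\alpha)\log K - O(1)$, using $\mathrm{kl}(p,q)\geq p\log(1/q) - \log 2$ for $p$ near $1$ and $\P_{\M'}^{\mathfrak{A}}(\mathcal{E}) \leq C K^{\alpha-1}$. Dividing by $\log K$ and taking $\liminf$ yields the bound for this fixed $\M'$; taking $\inf_{\M'\in\Lambda(\M)}$ and noting $\liminf$ commutes with (is bounded below by) the outer structure gives the claim. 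The main obstacle I anticipate is making the "$\mathcal{E}$ forces linear regret in $\M'$" step fully rigorous: one needs to convert "many visits to $(\bar s,\bar h)$ combined with few visits to the $\M'$-optimal actions there" into a genuine $\Omega(K)$ lower bound on $\E_{\M'}^{\mathfrak{A}}[\mathrm{Regret}_K(\M')]$, which requires controlling that a constant fraction of those episodes actually incur the $(\bar s,\bar h)$-stage action gap (not cancelled by measurability subtleties), and choosing the thresholds in $\mathcal{E}$ so that $\P_\M^{\mathfrak{A}}(\mathcal{E}) \to 1$ and the regret implication both hold simultaneously — this is exactly where Lemma~\ref{lemma:state-dist-consistent-prob} (replacing ergodicity) does the heavy lifting, and the delicate part is threading the two $K$-dependent error terms through consistently.
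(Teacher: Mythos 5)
Your proposal follows essentially the same route as the paper's proof: the same change-of-measure inequality, the same distinguishing pair $(\bar s,\bar h)$ obtained from property~2 of Lemma~\ref{lemma:alt-properties}, the same two-clause event whose probabilities are controlled via Lemma~\ref{lemma:state-dist-consistent-prob} and Lemma~\ref{lemma:bound-visits-consistent} together with Markov's inequality (applied under $\M$ and under $\M'$ respectively), and the same final bound $\mathrm{kl}(p,q)\ge p\log(1/q)-\log 2$. The one adjustment needed is the threshold in the second clause: $\sqrt{K}$ only works when $\alpha<1/2$, whereas the paper takes the cutoff $K^{\alpha+\beta}$ with $0<\beta<1-\alpha$ (and controls the visits to $\mathcal{O}_{\M',\bar h}(\bar s)$, a subset of the $\M$-suboptimal actions) so that Markov's inequality under $\M$ yields a vanishing probability for every $\alpha\in(0,1)$.
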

\begin{proof}
For any MDP model $\M'$ and $\mathcal{F}_K$-measurable event $\mathcal{E}$, Lemma \ref{lemma:change-of-measure} ensures that
\begin{align}\label{eq:change-of-mea-lb}
     \sum_{s\in\S}\sum_{a\in\A}\sum_{h\in[H]} \E_\M^{\mathfrak{A}}[N_{K,h}(s,a)]\mathrm{KL}_{s,a,h}(\M,\M') \geq \mathrm{kl}(\mathbb{P}_{\M}^{\mathfrak{A}}\{\mathcal{E}\}, \mathbb{P}_{\M'}^{\mathfrak{A}}\{\mathcal{E}\}) \geq \mathbb{P}_{\M}^{\mathfrak{A}}\{\mathcal{E}\}\log \frac{1}{\mathbb{P}_{\M'}^{\mathfrak{A}}\{\mathcal{E}\}} - \log 2,
\end{align}
where the last inequality is easy to check from the definition of Bernoulli KL divergence (see, e.g., Lemma 15 of \cite{domingues2021episodic}). We now show that, for any alternative MDP model $\M'\in\alt$, we can find a suitable event $\mathcal{E}$ such that the right-hand side grows at a sufficient rate. 

Fix any alternative MDP $\M' \in \alt$. By the second property of Lemma \ref{lemma:alt-properties}, there exist $h\in[H], s\in\mathrm{supp}(\rho_{\M,h}^\star)$ such that, for all actions $a\in\mathcal{O}_{\M,h}(s)$, $\Delta_{\M',h}(s,a) > 0$. Take one such couple $s,h$ and consider $\O_{\M',h}(s) := \{a\in\A : Q_{\M',h}^\star(s,a) = V_{\M',h}^\star(s)\}$, i.e, the set of actions that are optimal in state $s$ at stage $h$ for MDP $\M'$. Note that, by Lemma \ref{lemma:opt-act-vs-rho}, $\pi_h(s) \notin \O_{\M',h}(s)$ for all optimal policies $\pi\in\Pi^\star(\M)$ for $\M$. Define the event
\begin{align}
\mathcal{E} := \{N_{K,h}(s) \geq  \rho_{\M,h}^\star(s)K - f_{\alpha}(K), \sum_{a\in\O_{\M',h}(s)}N_{K,h}(s,a) < K^{\alpha + \beta}\},
\end{align}
where
\begin{align*}
    f_\alpha(K) := \frac{4{c}(\M)K^{\alpha+\xi}}{\Gamma_{\M}} + \sqrt{\frac{K}{2}\log(4K^\xi)}.
\end{align*}
and $\beta,\xi > 0$ are values to be chosen later. Then, 
\begin{align*}
    \mathbb{P}_{\M'}^{\mathfrak{A}}\{\mathcal{E}\} &=
    \mathbb{P}_{\M'}^{\mathfrak{A}}\left\{N_{K,h}(s) \geq  \rho_{\M,h}^\star(s)K - f_{\alpha}(K), \sum_{a\in\O_{\M',h}(s)}N_{K,h}(s,a) < K^{\alpha + \beta}\right\}
    \\ &\leq \mathbb{P}_{\M'}^{\mathfrak{A}}\left\{N_{K,h}(s) - \sum_{a\in\O_{\M',h}(s)}N_{K,h}(s,a) > \rho_{\M,h}^\star(s)K - f_{\alpha}(K) - K^{\alpha + \beta}\right\} 
    \\ &\stackrel{(a)}{\leq} \frac{\E_{\M'}^{\mathfrak{A}}[\sum_{a\notin\O_{\M',h}(s)} N_{K,h}(s,a)]}{\rho_{\M,h}^\star(s)K - f_{\alpha}(K) - K^{\alpha + \beta}} \stackrel{(b)}{\leq} \frac{c(\M')AK^\alpha/\Delta_{\M'}}{\rho_{\M,h}^\star(s)K - f_{\alpha}(K) - K^{\alpha + \beta}},
\end{align*}
where (a) is from Markov inequality and (b) from Lemma \ref{lemma:bound-visits-consistent}. Similarly,
\begin{align*}
    \mathbb{P}_{\M}^{\mathfrak{A}}\{\mathcal{E}\} &= 1 - \mathbb{P}_{\M}^{\mathfrak{A}}\left\{N_{K,h}(s) <  \rho_{\M,h}^\star(s)K - f_{\alpha}(K) \vee \sum_{a\in\O_{\M',h}(s)}N_{K,h}(s,a) \geq K^{\alpha + \beta}\right\}
    \\ &\stackrel{(c)}{\geq} 1 - \mathbb{P}_{\M}^{\mathfrak{A}}\left\{N_{K,h}(s) <  \rho_{\M,h}^\star(s)K - f_{\alpha}(K) \right\} - \mathbb{P}_{\M}^{\mathfrak{A}}\left\{ \sum_{a\in\O_{\M',h}(s)}N_{K,h}(s,a) \geq K^{\alpha + \beta}\right\}
    \\ &\stackrel{(d)}{\geq} 1 - \frac{1}{K^\xi} - \frac{\E_{\M}^{\mathfrak{A}}[\sum_{a\in\O_{\M',h}(s)}N_{K,h}(s,a) ]}{K^{\alpha + \beta}}
    \stackrel{(e)}{\geq} 1 - \frac{1}{T^\xi} - \frac{{c}(\M)AK^\alpha/\Delta_\M}{K^{\alpha + \beta}},
\end{align*}
where (c) is from the union bound, (d) is from Lemma \ref{lemma:state-dist-consistent-prob} and Markov's inequality, and (e) is once again from Lemma \ref{lemma:bound-visits-consistent}. Choosing $0 < \beta < 1-\alpha$ and $0 < \xi < 1-\alpha$, we have that $\mathbb{P}_{\M}^{\mathfrak{A}}\{\mathcal{E}\} \rightarrow 1$ as $K\rightarrow\infty$ and $1/ \mathbb{P}_{\M'}^{\mathfrak{A}}\{\mathcal{E}\} \geq {O}(K^{1-\alpha})$. Therefore,
\begin{align}\label{eq:case1-lb}
    \liminf_{K \rightarrow \infty} \frac{\mathbb{P}_{\M}^{\mathfrak{A}}\{\mathcal{E}\}\log \frac{1}{\mathbb{P}_{\M'}^{\mathfrak{A}}\{\mathcal{E}\}} - \log 2}{\log(K)} \geq 1-\alpha.
\end{align}
Noting that this argument holds for all $\M'\in\alt$ concludes the proof.
\end{proof}

We are now ready to prove Theorem \ref{th:main-lower-bound}.

\paragraph{Proof of Theorem \ref{th:main-lower-bound}.}

Using Proposition \ref{prop:regret-decomp}, we can write the ``asymptotic regret'' of $\mathfrak{A}$ as
\begin{align}\label{eq:asym-regret}
   \liminf_{K\rightarrow \infty} \sum_{s\in\S}\sum_{a\in\A}\sum_{h\in[H]} \frac{\E_\M^\mathfrak{A}[N_{K,h}(s,a)]}{\log(K)}\Delta_{\M,h}(s,a).
\end{align}
Note that the expected visits $\liminf_{K\rightarrow \infty}\frac{\E_\M^\mathfrak{A}[N_{K,h}(s,a)]}{\log(K)}$ must satisfy the dynamical constraints of the underlying MDP $\M$. That is, for any state $s\notin\mathrm{supp}(p_0)$ and any $K > 0$,
\begin{align}\label{eq:lb-constr-h1}
    \sum_{a\in\A}\frac{\E_\M^\mathfrak{A}[N_{K,1}(s,a)]}{\log(K)} = 0.
\end{align}
Similarly, for $h>1$,
\begin{align}\label{eq:lb-constr-p}
   \sum_{a\in\A}\frac{\E_\M^\mathfrak{A}[N_{K,h}(s,a)]}{\log(K)} = \frac{\E_\M^\mathfrak{A}[N_{K,h}(s)]}{\log(K)} =  \sum_{s'\in\S}\sum_{a'\in\A}p(s|s',a')\frac{\E_\M^\mathfrak{A}[N_{K,h-1}(s',a')]}{\log(K)}.
\end{align}
The result follows by introducing the optimization variables $\eta_h(s,a) = \liminf_{K\rightarrow \infty}\frac{\E_\M^\mathfrak{A}[N_{K,h}(s,a)]}{\log(K)}$ and minimizing \eqref{eq:asym-regret} subject to the dynamical constraints \eqref{eq:lb-constr-h1} and \eqref{eq:lb-constr-p}, and to the information constraint of Theorem \ref{th:constr-lower-bound}.
\section{Analysis of the Lower Bound}

So far we derived the lower bound for a generic set of realizable MDPs $\mathfrak{M}$, which can be used to encode any structure of interest. In this section, we shall analyze the lower bound for the standard case of unstructured MDPs, i.e., where transition probabilities and rewards at different state-action-stage triplets are unrelated. We shall thus define $\mathfrak{M}$ as the set of MDPs whose transition probilities $p_h'(s,a)$ are any valid probability distribution over the state space $\S$ (i.e., $\mathbb{P}(\S)$) for each $s,a,h$. Similarly, we shall assume the rewards $q_h'(s,a)$ are given by any distribution supported on $[0,1]$ (i.e., $\mathbb{P}([0,1])$)\footnote{This is without loss of generality, as one could simply take the whole set of reals or any other closed subset} for our general computation, while we shall also instantiate the lower bound for Gaussian rewards in specific examples.

\subsection{Taxonomy of Alternative MDPs}\label{app:taxonomy-alt}

We define some useful sub/super-sets of the alternative MDPs $\alt$.

\begin{itemize}
    \item \emph{weak alternative} (WA): MDPs where every optimal policy for $\M$ is sub-optimal
    \begin{align*}
         \Lambda^{\mathrm{wa}}(\M) := \{\M'\in\mathfrak{M} \ |\ \weakopt(\M) \cap \weakopt(\M') = \emptyset \}.
    \end{align*}
    \item \emph{weakly confusing} (WC): MDPs that are indistinguishable from $\M$ by playing only optimal policies for $\M$
        \begin{align*}
         \Lambda^{\mathrm{wc}}(\M) := \{\M'\in\mathfrak{M} \ |\ \forall  s,a,h \in \mathcal{O}_{\M}^\star : \mathrm{KL}_{s,a,h}(\M,\M') = 0 \}.
    \end{align*}
    \item \emph{alternative}: MDPs where every optimal policy for $\M$ is sub-optimal and that are indistinguishable from $\M$ by playing only optimal policies for $\M$
        \begin{align*}
         \Lambda(\M) &:= \Lambda^{\mathrm{wa}}(\M) \cap \Lambda^{\mathrm{wc}}(\M)\\ &=  \{\M'\in\mathfrak{M} \ |\ \strongopt(\M) \cap \weakopt(\M') = \emptyset \} \cap \Lambda^{\mathrm{wc}}(\M). \qquad \text{(by Lemma \ref{lemma:equiv-weak-alt})}
    \end{align*}
    \item \emph{strongly confusing} (SC): MDPs that are indistinguishable from $\M$ by playing only optimal actions for the latter even with the possibility to reset the system in any state
        \begin{align*}
         \Lambda^{\mathrm{sc}}(\M) := \{\M'\in\mathfrak{M} \ |\ \forall  s,h,a \in \mathcal{O}_{\M,h}(s) : \mathrm{KL}_{s,a,h}(\M,\M') = 0 \}.
    \end{align*}
    \item \emph{strong alternative} (SA): all alternatives that are also strongly confusing
        \begin{align*}
         \Lambda^{\mathrm{sa}}(\M) := \Lambda(\M) \cap \Lambda^{\mathrm{sc}}(\M).
    \end{align*}
    \item \emph{decoupled strong alternative} (DSA): the subset of strong alternatives that yield a decoupled exploration strategy. These are such that, for any state not in the support of $\rho_\M^\star$, there exists an optimal action of $\M$ that remains optimal 
        \begin{align*}
         \Lambda^{\mathrm{dsa}}(\M) := \left\{\M'\in\Lambda^{\mathrm{sa}}(\M) \ |\  \forall h,s\notin\mathrm{supp}(\rho_{\M,h}^\star), \exists a^\star\in\mathcal{O}_{\M,h}(s) \cap \argmax_{a\in\A}\left\{ {r}_h'(s,a) +  {p}_h'(s,a)^T V_{\M,h+1}^\star\right\} \right\}.
    \end{align*}
    
\end{itemize}

\paragraph{Properties.}

We prove some useful properties for these sets. First, we show a necessary condition for an MDP to be a strong alternative.

\begin{lemma}[Property of strong alternatives]\label{lemma:strong-alt-properties}
Let $\M' \in \Lambda^{\mathrm{sa}}(\M)$ be any strong alternative to $\M$ (see Appendix \ref{app:taxonomy-alt}). Then, there exists a stage $h\in[H]$, a state $s\in\S$, and a sub-optimal action $a\notin\O_{\M,h}(s)$ such that
\begin{align*}
    r_h'(s,a) + p_h'(s,a)^T V_{\M,h+1}^\star > V_{\M,h}^\star(s).
\end{align*}
\end{lemma}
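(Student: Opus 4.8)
The plan is to prove the contrapositive: I assume that for every stage $h$, every state $s$, and every sub-optimal action $a \notin \mathcal{O}_{\M,h}(s)$ we have $r_h'(s,a) + p_h'(s,a)^T V_{\M,h+1}^\star \leq V_{\M,h}^\star(s)$, and I derive that $\M'$ cannot be a strong alternative, i.e.\ $\M' \notin \Lambda^{\mathrm{sa}}(\M)$. Since $\M' \in \Lambda^{\mathrm{sa}}(\M)$ means $\M' \in \Lambda(\M) \cap \Lambda^{\mathrm{sc}}(\M)$, and being in $\Lambda(\M)$ requires $\strongopt(\M) \cap \weakopt(\M') = \emptyset$, it suffices to exhibit a Bellman-optimal policy $\pi$ for $\M$ that is return-optimal for $\M'$ — contradicting the alternative condition.

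**Key steps.** First, recall that $\M' \in \Lambda^{\mathrm{sc}}(\M)$ means that $r_h'(s,a) = r_h(s,a)$ and $p_h'(s,a) = p_h(s,a)$ for every $(s,h)$ and every $a \in \mathcal{O}_{\M,h}(s)$. Combined with the standing assumption (negation of the claim), this gives us, for every $(s,h)$ and every $a \in \A$,
\begin{equation*}
r_h'(s,a) + p_h'(s,a)^T V_{\M,h+1}^\star \leq V_{\M,h}^\star(s),
\end{equation*}
with equality for every $a \in \mathcal{O}_{\M,h}(s)$ (by the zero-KL condition and the Bellman optimality equation \eqref{eq:bellman-opt} for $\M$). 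Second, I claim this shows that $V_{\M,\cdot}^\star$ satisfies the Bellman optimality equations for $\M'$: indeed, $\max_{a\in\A}\{r_h'(s,a) + p_h'(s,a)^T V_{\M,h+1}^\star\} = V_{\M,h}^\star(s)$ because the max is $\leq V_{\M,h}^\star(s)$ by the displayed inequality and $\geq V_{\M,h}^\star(s)$ by taking any $a \in \mathcal{O}_{\M,h}(s)$. By induction on $h$ from $H$ down to $1$ (with $V_{\M,H+1}^\star \equiv 0$ in both MDPs), this forces $V_{\M',h}^\star = V_{\M,h}^\star$ for all $h$ and, in particular, $\mathcal{O}_{\M,h}(s) \subseteq \mathcal{O}_{\M',h}(s)$ for every $(s,h)$. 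Third, pick any Bellman-optimal policy $\pi \in \strongopt(\M)$; then $\pi_h(s) \in \mathcal{O}_{\M,h}(s) \subseteq \mathcal{O}_{\M',h}(s)$ for all $(s,h)$, so $\pi \in \strongopt(\M') \subseteq \weakopt(\M')$. Hence $\pi \in \strongopt(\M) \cap \weakopt(\M')$, contradicting $\M' \in \Lambda(\M)$.

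**Main obstacle.** The argument is essentially a backward-induction / verification argument and I expect it to be routine; the only subtlety is making sure the induction base and the handling of states outside $\mathrm{supp}(\rho_{\M,h}^\star)$ are correct. Note in particular that $\Lambda^{\mathrm{sc}}(\M)$ (strong confusion) imposes the zero-KL condition at \emph{all} optimal actions $a \in \mathcal{O}_{\M,h}(s)$, for \emph{all} states $s$ — not just those in the support of $\rho_{\M}^\star$ — which is exactly what is needed for the induction to propagate the equality $V_{\M',h}^\star = V_{\M,h}^\star$ through every state; this is where the ``strong'' in strong alternative is used, and it is the reason the analogous statement would fail for a mere alternative $\M' \in \Lambda(\M)$. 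I would state this reliance on $\Lambda^{\mathrm{sc}}$ explicitly in the proof. One should also double-check that the Bellman operator monotonicity is not needed in a hidden way — it is not, since we directly verify that $V_{\M,\cdot}^\star$ is a fixed point of the $\M'$ Bellman optimality operator and invoke uniqueness of the optimal value function for $\M'$ from standard MDP theory.
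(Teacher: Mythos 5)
Your proof is correct and follows essentially the same route as the paper: negate the claim, use strong confusion to get equality of the one-step lookahead at all Bellman-optimal actions of $\M$, conclude that $V_{\M,\cdot}^\star$ (equivalently, any $\pi\in\strongopt(\M)$) satisfies the Bellman optimality equations in $\M'$, and contradict $\strongopt(\M)\cap\weakopt(\M')=\emptyset$. The only cosmetic difference is that the paper phrases the verification step directly in terms of the policy $\pi^\star$ rather than the value function, but the argument is the same.
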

\begin{proof}
This is easily proved by contradiction. Suppose the claim does not. This means that
\begin{align}\label{eq:proof-strong-alt-1}
    \forall h\in[H], s\in\S, a\notin\O_{\M,h}(s) : r_h'(s,a) + p_h'(s,a)^T V_{\M,h+1}^\star \leq V_{\M,h}^\star(s).
\end{align}
Take any Bellman-optimal policy $\pi^\star \in \strongopt(\M)$ for $\M$. Note that, since $\M'$ is strongly confusing w.r.t. $\M$, the kernels of the two MDPs are identical in all actions chosen by $\pi^\star$. Therefore, $V_{\M',h}^{\pi^\star}(s) = V_{\M,h}^{\pi^\star}(s) = V_{\M,h}^{\star}(s)$ holds for all $s,h$. Moreover, by the Bellman optimality equations for $\M$,
\begin{align*}
    \forall h\in[H], s\in\S, a\in\O_{\M,h}(s) : r_h'(s,a) + p_h'(s,a)^T V_{\M,h+1}^\star = r_h(s,a) + p_h(s,a)^T V_{\M,h+1}^\star = V_{\M,h}^\star(s).
\end{align*}
Combining this with \eqref{eq:proof-strong-alt-1}, we get that
\begin{align*}
    \forall h\in[H], s\in\S : r_h'(s,\pi_h^\star(s)) + p_h'(s,\pi_h^\star(s))^T V_{\M',h+1}^{\pi^\star} = \max_{a\in\A} \{ r_h(s,a) + p_h(s,a)^T V_{\M',h+1}^{\pi^\star} \}.
\end{align*}
This implies that $\pi^\star$ satisfies the Bellman optimality equations in $\M'$, thus contradicting the fact that $\M'$ is alternative.
\end{proof}

Then, we prove a crucial result for the successive parts of this section. The following is a generalization of the decoupling lemma of \citet{ok2018exploration}, originally derived for ergodic average-reward MDPs, to finite-horizon MDPs without any reachability assumption.

\begin{lemma}\label{lemma:decoupling}
Let $\mathcal{U}_1,\mathcal{U}_2$ be two non-overlapping non-empty subsets of state-action-stage triplets such that, for all $(s,a,h)\in\mathcal{U}_0 := \mathcal{U}_1 \cup \mathcal{U}_2$, $a \notin \mathcal{O}_{\M,h}(s)$. Let $\bar{p},\bar{q}$ be any transition-reward kernels and,
for $i\in\{0,1,2\}$, define three MDPs $\{\M_i\}$ whose transition-reward kernels $\{p^i,q^i\}$ are given by
\begin{align*}
(p_h^i(s,a),q_h^i(s,a)) = 
    \begin{cases}
    (\bar{p}_h(s,a),\bar{q}_h(s,a)) \quad &\text{if}\ (s,a,h) \in \mathcal{U}_i,\\
    (p_h(s,a),q_h(s,a)) & \text{otherwise}.
    \end{cases}
\end{align*}
Moreover, suppose that $\bar{p},\bar{q}$ are such that, for all $ h\in[H],s\notin\mathrm{supp}(\rho_{\M,h}^\star)$,
\begin{align}\label{eq:decoupling-lemma-not-visited}
    \exists a^\star\in\mathcal{O}_{\M,h}(s) : {r}_h(s,a^\star) +  {p}_h(s,a^\star)^T V_{\M,h+1}^\star
    \geq \max_{a\in\A}\left\{ {r}_h^0(s,a) +  {p}_h^0(s,a)^T V_{\M,h+1}^\star\right\}.
\end{align}
Then, if $\M_0 \in \alt$, either $\M_1\in\alt$ or $\M_2\in\alt$.
\end{lemma}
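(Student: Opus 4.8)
The goal is to show that if the ``combined'' alternative $\M_0$ (which modifies $\M$ on both $\mathcal{U}_1$ and $\mathcal{U}_2$) lies in $\alt$, then at least one of the ``single-block'' modifications $\M_1, \M_2$ already lies in $\alt$. I would verify membership in $\alt$ by checking the two defining conditions of \eqref{eq:alt-set}: the weak-confusing (zero-KL) condition and the strong-alternative (no shared Bellman-optimal policy) condition.

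First I would dispense with the zero-KL condition. Since $\mathcal{U}_0 = \mathcal{U}_1 \cup \mathcal{U}_2$ consists only of \emph{sub-optimal} triplets $(s,a,h)$ with $a\notin\mathcal{O}_{\M,h}(s)$, and $\M_i$ differs from $\M$ only on $\mathcal{U}_i \subseteq \mathcal{U}_0$, the kernels of $\M_i$ and $\M$ agree on every triplet in $\mathcal{O}_\M^\star$ (indeed on all optimal triplets). Hence $\M_1, \M_2 \in \Lambda^{\mathrm{wc}}(\M)$ automatically — and in fact in $\Lambda^{\mathrm{sc}}(\M)$ — regardless of the hypothesis on $\M_0$. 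So the whole content is in the weak-alternative condition: I must show that $\strongopt(\M)$ has no element that is return-optimal for $\M_1$, \emph{or} none that is return-optimal for $\M_2$.

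The key tool is Lemma~\ref{lemma:strong-alt-properties}: since $\M_0\in\alt$ and $\M_0$ is also strongly confusing (it modifies only sub-optimal triplets), $\M_0\in\Lambda^{\mathrm{sa}}(\M)$, so there exist $h, s$ and a sub-optimal action $a\notin\mathcal{O}_{\M,h}(s)$ with $r_h^0(s,a) + p_h^0(s,a)^\tr V_{\M,h+1}^\star > V_{\M,h}^\star(s)$. By hypothesis \eqref{eq:decoupling-lemma-not-visited}, this ``exceeding'' triplet cannot be at a state outside $\mathrm{supp}(\rho_{\M,h}^\star)$ (there the max over all actions of the $\M_0$-backup stays $\le V_{\M,h}^\star(s)$); hence $s\in\mathrm{supp}(\rho_{\M,h}^\star)$, i.e. $(s,a,h)$ is in a region ``seen'' by optimal policies. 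Because $\M_0$ modifies $(s,a,h)$ via $\bar p,\bar q$ and $(s,a,h)$ must belong to $\mathcal{U}_1$ or $\mathcal{U}_2$ — say $\mathcal{U}_j$ — the MDP $\M_j$ \emph{also} has $r_h^j(s,a) + p_h^j(s,a)^\tr V_{\M,h+1}^\star > V_{\M,h}^\star(s)$, since $\M_j$ uses $(\bar p_h(s,a),\bar q_h(s,a))$ there. Now take any $\pi\in\strongopt(\M)$. Its value in $\M_j$ equals its value in $\M$ on the support of $\rho_\M^\star$ (same kernels on optimal triplets), so $V_{\M_j,h+1}^\pi = V_{\M,h+1}^\star$ along reachable states, but at $(s,h)$ an improving deviation to action $a$ exists in $\M_j$; pushing this through the return decomposition \eqref{eq:policy.gap}/Lemma~\ref{lemma:alt-properties}-style argument (a one-step improvement at a state with positive occupancy strictly raises the return, mirroring the contradiction argument in Lemma~\ref{lemma:alt-properties} property 2 and Lemma~\ref{lemma:strong-alt-properties}) shows $V_{\M_j,0}^\star > V_{\M_j,0}^\pi$. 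Hence $\strongopt(\M)\cap\weakopt(\M_j)=\emptyset$, so $\M_j\in\Lambda^{\mathrm{wa}}(\M)$, and combined with $\M_j\in\Lambda^{\mathrm{wc}}(\M)$ we get $\M_j\in\alt$.

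The main obstacle I anticipate is the last step: carefully turning ``there is a one-step improving deviation at a positively-visited state'' into ``$\pi$ is strictly sub-optimal for $\M_j$.'' One has to be careful that $V_{\M_j,h+1}^\star \ge V_{\M,h+1}^\star$ (optimism can only help the deviating action), that $\pi$ genuinely visits $s$ at stage $h$ with positive probability under $\M_j$ (which follows since $\rho_{\M_j}^\pi = \rho_\M^\star$ on the relevant support, using that $\M_j$'s kernels agree with $\M$'s on all optimal triplets, exactly as in property 1 of Lemma~\ref{lemma:alt-properties}), and that these combine to yield a \emph{strict} gain in expected return. I would likely argue this by exhibiting the explicit policy $\pi'$ that agrees with $\pi$ everywhere except it plays $a$ at $(s,h)$ (and then acts greedily for $\M_j$ thereafter), showing $V_{\M_j,0}^{\pi'} > V_{\M_j,0}^{\pi}$ via a single-step Bellman comparison weighted by $\rho_{\M,h}^\star(s) > 0$, hence $V_{\M_j,0}^\star \ge V_{\M_j,0}^{\pi'} > V_{\M_j,0}^{\pi}$, which is the desired strict inequality for \emph{every} $\pi\in\strongopt(\M)$.
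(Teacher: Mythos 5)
Your proof is correct, but it takes a genuinely different route from the paper's. The paper argues by contradiction: assuming $\M_1\notin\alt$ and $\M_2\notin\alt$, it extracts a policy $\pi\in\strongopt(\M)$ that is return-optimal in both $\M_1$ and $\M_2$, and then verifies, triplet by triplet (splitting on whether $(s,a,h)\in\mathcal{U}_2$ and invoking \eqref{eq:decoupling-lemma-not-visited} for states outside $\mathrm{supp}(\rho_{\M,h}^\star)$), that $\pi$ satisfies the Bellman optimality equations in $\M_0$, contradicting $\M_0\in\alt$. You instead argue directly: since $\M_0$ is a strong alternative, Lemma~\ref{lemma:strong-alt-properties} supplies a witness triplet $(s,a,h)$ with $r_h^0(s,a)+p_h^0(s,a)^\tr V_{\M,h+1}^\star > V_{\M,h}^\star(s)$; this triplet must lie in $\mathcal{U}_0$ (otherwise the backup equals $Q_{\M,h}^\star(s,a)\le V_{\M,h}^\star(s)$) and, by \eqref{eq:decoupling-lemma-not-visited}, must sit at a state in $\mathrm{supp}(\rho_{\M,h}^\star)$; whichever $\mathcal{U}_j$ contains it then certifies $\M_j\in\alt$ via a one-step deviation. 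The two points you flag yourself are exactly the ones to nail down: (i) for $\pi\in\strongopt(\M)$ one has $V_{\M_j,h'}^{\pi}=V_{\M,h'}^\star$ at \emph{all} states and stages, not only on the support, because a Bellman-optimal policy plays actions in $\mathcal{O}_{\M,h'}(s')$ everywhere and $\M_j$ agrees with $\M$ on every optimal triplet --- this matters since $\bar p_h(s,a)$ may put mass on states unreachable under $\pi$; (ii) the strict gain follows from the performance-difference identity, which localizes the improvement to the single deviating triplet weighted by $\rho_{\M,h}^\star(s)>0$. Your route buys something concrete: it shows that a \emph{single} exceeding triplet at a supported state already yields an alternative, which is the conclusion the paper only reaches after iterating the decoupling lemma down to singletons in Appendix~\ref{app:computing-general}; the paper's argument, conversely, needs no appeal to Lemma~\ref{lemma:strong-alt-properties} and works purely with Bellman optimality equations.
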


A more intuitive explanation of this result is due before presenting the proof. Here each MDP $\M_i$ is defined starting from our true MDP $\M$ and changing only the transition-reward kernels (by setting them to $\bar{p},\bar{q}$) in state-action-stage triplets in $\mathcal{U}_i$. Moreover, MDP $\M_0$, by definition, is a \emph{decoupled strong alternative}.
What the lemma essentially proves is that, if $\M_0$ is indeed a decoupled strong alternative obtained from $\M$ by changing only the kernels in $\mathcal{U}_0$, then we could obtain another decoupled strong alternative (say, $\M_1$) by changing only the kernels in a strict subset $\mathcal{U}_1 \subset \mathcal{U}_0$. In the next section, we shall see that this significantly simplies the computation of the infimum in the KL constraint over these MDPs, which can be reduced to modifying a single state-action-stage triplet from $\M$.
\begin{proof}
Note that each $\M_i$ is confusing w.r.t. $\M$ since the kernels are unchanged at all optimal actions of the latter. Thus, $\M_i\in\alt$ holds if and only if $V_{\M_i,0}^\pi < V_{\M_i,0}^\star$ for all policies $\pi\in\strongopt(\M)$. By contradiction, let us assume that $\M_0 \in\alt$ but $\M_1\notin\alt$ and $\M_2\notin\alt$. This is equivalent to assuming that the following three conditions hold:
\begin{align}
    \forall \pi\in\strongopt(\M) &: V_{\M_0,0}^{\pi} < V_{\M_0,0}^\star \label{eq:decoupling-lemma-cond-1}
    \\   \exists \pi\in\strongopt(\M) &: V_{\M_1,0}^{\pi} = V_{\M_1,0}^\star \label{eq:decoupling-lemma-cond-2}
        \\   \exists \pi\in\strongopt(\M) &: V_{\M_2,0}^{\pi} = V_{\M_2,0}^\star \label{eq:decoupling-lemma-cond-3}
\end{align}
Since the Markov reward process induced by each $\pi\in\strongopt(\M)$ is exactly the same in the three MDPs and in $\M$ (see also Lemma \ref{lemma:alt-properties}), we have that
\begin{align}\label{eq:decoupling-equal-values}
    \forall \pi\in\strongopt(\M), i\in\{0,1,2\}, h \geq 0 :\  V_{\M_i,h}^{\pi} = V_{\M,h}^{\pi} = V_{\M,h}^{\star}.
\end{align}
Thanks to \eqref{eq:decoupling-equal-values}, \eqref{eq:decoupling-lemma-cond-2} and \eqref{eq:decoupling-lemma-cond-3} can actually be merged into the condition 
\begin{align*}
    \forall \pi\in\strongopt(\M) : V_{\M_1,0}^{\pi} = V_{\M_2,0}^{\pi} = V_{\M_1,0}^\star = V_{\M_2,0}^\star.
\end{align*}
Therefore, to find a contradiction, it is sufficient to prove that some policy $\pi\in\strongopt(\M)$ is simultaneously optimal in all three MDPs.

Let us take any state-action-stage triplet $(s,a,h)$ such that $s \in \mathrm{supp}(\rho_{\M,h}^\star)$. Then, if $(s,a,h) \notin \mathcal{U}_2$,
\begin{align*}
    r_h^0(s,\pi_h(s)) +  p_h^0(s,\pi_h(s))^T V_{\M_0,h+1}^\pi
    &\stackrel{(a)}{=} r_h^1(s,\pi_h(s)) +  p_h^1(s,\pi_h(s))^T V_{\M_1,h+1}^\pi
    \\ &\stackrel{(b)}{\geq} r_h^1(s,a) +  p_h^1(s,a)^T V_{\M_1,h+1}^\pi
    \\ &\stackrel{(c)}{=}  r_h^0(s,a) +  p_h^0(s,a)^T V_{\M_0,h+1}^\pi,
\end{align*}
where (a) follows from \eqref{eq:decoupling-equal-values} and the fact that the kernels are all equal in actions prescribed by $\pi$, (b) uses Lemma \ref{lemma:opt-act-vs-rho} and the fact that $\pi$ is optimal in $\M_1$, and (c) uses the equivalence of the kernels of $\M_0$ and $\M_1$ in all triplets $(s,a,h)$ not in $\mathcal{U}_2$. Using analogous steps, we can show that, if $(s,a,h) \in \mathcal{U}_2$,
\begin{align*}
    r_h^0(s,\pi_h(s)) +  p_h^0(s,\pi_h(s))^T V_{\M_0,h+1}^\pi
    &\stackrel{}{=} r_h^2(s,\pi_h(s)) +  p_h^2(s,\pi_h(s))^T V_{\M_2,h+1}^\pi
    \\ &\stackrel{}{\geq} r_h^2(s,a) +  p_h^2(s,a)^T V_{\M_2,h+1}^\pi
    \\ &\stackrel{}{=}  r_h^0(s,a) +  p_h^0(s,a)^T V_{\M_0,h+1}^\pi,
\end{align*}
Combining these two, we conclude that $\pi$ satisfies the Bellman optimality equations in $\M_0$ for all $h\in[H]$ and $s\in\mathrm{supp}(\rho_{\M,h}^\star)$, i.e.,
\begin{align*}
    \forall h, s\in\mathrm{supp}(\rho_{\M,h}^\star) : r_h^0(s,\pi_h(s)) +  p_h^0(s,\pi_h(s))^T V_{\M_0,h+1}^\pi
    \geq \max_{a\in\A}\left\{ r_h^0(s,a) +  p_h^0(s,a)^T V_{\M_0,h+1}^\pi\right\}.
\end{align*}
We now note that the same holds for all the states that are not in $\mathrm{supp}(\rho_{\M,h}^\star)$ by the condition \eqref{eq:decoupling-lemma-not-visited} on $\bar{p},\bar{q}$. Too see this, note that the policy $\pi\in\strongopt(\M)$ was arbitrary so far. Take $\pi\in\strongopt(\M)$ such that, for all $h\in[H], s\notin\mathrm{supp}(\rho_{\M,h}^\star)$, $\pi_h(s) = a^\star_h(s)$ for some optimal action $a^\star_h(s) \in \O_{\M,h}(s)$ that verifies \eqref{eq:decoupling-lemma-not-visited}. Since $V_{\M,h+1}^\star = V_{\M_0,h+1}^\pi$, \eqref{eq:decoupling-lemma-not-visited} yields
\begin{align*}
    \forall h, s\notin\mathrm{supp}(\rho_{\M,h}^\star) : r_h^0(s,\pi_h(s)) +  p_h^0(s,\pi_h(s))^T V_{\M_0,h+1}^\pi
    \geq \max_{a\in\A}\left\{ r_h^0(s,a) +  p_h^0(s,a)^T V_{\M_0,h+1}^\pi\right\}.
\end{align*}
Hence, we get that $\pi$ satisfies the Bellman optimality equations at all states and stages for $\M_0$. Thus, $\pi\in\weakopt(\M_0)$ and we get a contradiction. Therefore, it must be that $\pi$ is sub-optimal in either $\M_1$ or $\M_2$, which implies that either $\M_1 \in \alt$ or $\M_2 \in \alt$. This concludes the proof.
\end{proof}

\subsection{Computing the Lower Bound: General Case}\label{app:computing-general}

We show how to simplify the lower bound so as to better understand what are the main challenges behind its computation in the general case. Clearly, the infimum in the KL constraint represents the most challenging component, while the dynamics constraint is a simple linear constraint in $\eta$. We shall thus focus on computing the infimum over the alternatives $\alt$. In particular, we shall partition the set $\alt$ in three sub-sets on which the infimum yields (1) a closed-form solution in the first set, (2) a convex optimization problem in the second set, and (3) a non-convex optimization problem in the third set.

We partition the set of alternatives into the following non-overlapping subsets:
        \begin{align*}
         \Lambda(\M) := \underbrace{\Lambda^{\mathrm{dsa}}(\M)}_{(a)} \cup \underbrace{\big(\Lambda^{\mathrm{sa}}(\M) \setminus \Lambda^{\mathrm{dsa}}(\M)\big)}_{(b)} \cup \underbrace{\big(\Lambda(\M) \setminus \Lambda^{\mathrm{sa}}(\M)\big)}_{(c)}.
    \end{align*}
Thus, the infimum in the constraint can be reduced to a minimum of the infima over these three sets. We analyze them separately.

\paragraph{Set (a).}

Using the decoupling lemma (see Lemma \ref{lemma:decoupling} and comments below it), it is easy to see that the infimum over the set of \emph{decoupled strong alternatives} is attained at single state-action-stage triplets. This is because, if we create a decoupled strong alternative by changing the kernels of $\M$ at more than one state-action-stage triplets, then Lemma \ref{lemma:decoupling} guarantees that we can create another decoupled strong alternative by changing the kernels of $\M$ at fewer triplets by the same amount (thus with lower KL divergence). Moreover, Lemma \ref{lemma:strong-alt-properties} ensures that, if $\M'$ is a decoupled strong alternative created from $\M$ by changing the single triplet $(s,a,h)$, then
\begin{align*}
    r_h'(s,a) + p_h'(s,a)^T V_{\M,h+1}^\star > V_{\M,h}^\star(s).
\end{align*}
Thus, let us define the following local measure of complexity to discriminate between MDP models: 
\begin{align*}
    \mathcal{K}_{s,a,h}(\M) : = \inf_{\bar{p},\bar{q} \in \Lambda_{s}(\M)}\big\{ \mathrm{KL}(p_h(s,a), \bar{p}) + \mathrm{KL}(q_h(s,a), \bar{q})\big\}
\end{align*}
where $\Lambda_{s}(\M) := \{\bar{p}\in\P(\S),\bar{q}\in\P([0,1]) : \mathbb{E}_{x\sim \bar{q}}[x] + \bar{p}^T V_{\M,h+1}^\star > V_{\M,h}^\star(s)\}$. \footnote{If the set $\Lambda_s(\M)$ is empty we set the infimum to $+\infty$.} Then,
\begin{align*}
      \inf_{\M' \in \Lambda^{\mathrm{dsa}}(\M) } \sum_{s\in\S}\sum_{a\in\A} \sum_{h\in[H]}\eta_h(s,a)\mathrm{KL}_{s,a,h}(\M,\M')
      = \min_{h\in[H]}\min_{s\in\mathrm{supp}(\rho_{\M,h}^\star)}\min_{a\notin\O_{\M,h}(s)}\eta_h(s,a)\mathcal{K}_{s,a,h}(\M).
\end{align*}
Notably, this implies that the visits to each $(s,a,h)$ required by this part of the KL constraint are \emph{decoupled}, in the sense that they depend only on local quantities of such triplet $(s,a,h)$.

\paragraph{Set (b).}

In this case the infimum is not available in closed-form anymore, but the corresponding optimization problem is convex. Note that all alternative MDPs in set (b) are strongly confusing, which means that Lemma \ref{lemma:strong-alt-properties} still applies. Then, one can rewrite the optimization problem over this set in a form that depends on the local kernels $p_h'(s,a), q_h'(s,a)$ (i.e., the optimization variables over which we are computing the infimum) and on the optimal value function $V_{\M,h}^\star$ of $\M$ (which is fixed and does not depend on such optimization variables). Unfortunately, a closed-form expression is not available in general since the kernels $p_h'(s,a), q_h'(s,a)$ of the alternative MDP $\M'$ realizing the infimum might be obtained by carefully changing the kernels of $\M$ at multiple state-action-stage triplets.

\paragraph{Set (c).}

Computing the infimum over this set yields a non-convex optimization problem in general. This can be shown by using the same counter-example as the one recently proposed by \citet{marjani2021adaptive} for the best-policy identification setting. The intuition is that, since MDPs in this set are only weakly (and not strongly) confusing, the value functions of the optimal policies of $\M$ when evaluated in $\M'$ might differ from those in $\M$ itself. The definition of the alternative MDPs can be equivalently written with terms of the form $p_h'(s,a)^T V_{\M',h}^{\pi^\star}$, where $\pi^\star$ is an optimal policy for $\M$. These are bilinear in the optimization variables ($p',q'$) and thus hard to optimize over in general.

\subsection{The Full-Support Case (Proof of Proposition \ref{prop:full-support})}

We now consider the case where $\rho_{\M}^\star$ is full-support, i.e., $\rho_{\M,h}^\star(s) > 0$ for all $s\in\S,h\in[H]$. Note that a similar assumption has been considered by~\citet{ramponi2021online}, which is refered to as a form of \emph{weak ergodicity} by the authors. In this case, it is easy to see that the subsets of alternatives (b) and (c) in the decomposition of Appendix \ref{app:computing-general} are empty. Therefore, $\alt = \Lambda^{\mathrm{dsa}}(\M)$, and the infimum of the whole KL constraint can be computed in closed-form as
\begin{align*}
      \inf_{\M' \in \alt } \sum_{s\in\S}\sum_{a\in\A} \sum_{h\in[H]}\eta_h(s,a)\mathrm{KL}_{s,a,h}(\M,\M')
      = \min_{h\in[H]}\min_{s\in\S}\min_{a\notin\O_{\M,h}(s)}\eta_h(s,a)\mathcal{K}_{s,a,h}(\M).
\end{align*}
Plugging this into the main optimization problem and dropping the dynamics constraint, we get the following closed-form lower bound
\begin{equation}\label{eq:lower-bound-full-support}
v^\star(\M) = (1-\alpha)\sum_{h\in[H]}\sum_{s\in\S}\sum_{a\notin\O_{\M,h}(s)} \frac{\Delta_{\M,h}(s,a)}{\mathcal{K}_{s,a,h}(\M)}.
\end{equation} 

\paragraph{The dynamics constraint does not matter.}

It only remains to prove that the dynamics constraint indeed does not matter in the full-support case and can be dropped without changing the lower bound value. This is better seen by switching to the policy-based perspective. Recall that an allocation $\eta$ satisfies the dynamics constraint if, and only if, there exists a vector $\omega \in \mathbb{R}^{|\Pi|}_{\geq 0}$ such that $\eta_h(s,a) = \sum_{\pi\in\Pi} \omega_\pi \rho_h^\pi(s,a)$ for all $s,a,h$. Now take any $s,h$ and sub-optimal action $a\notin\O_{\M,h}(s)$. Recall that the KL constraint requires $\eta_h(s,a) \geq 1/\mathcal{K}_{s,a,h}(\M)$. Build a policy $\pi$ which is equal to an optimal policy $\pi^\star$ everywhere except for $\pi_{h}(s)=a$ and set $\omega_\pi = \frac{1}{\rho_{\M,h}^\star(s)\mathcal{K}_{s,a,h}(\M)}$. Repeat the same trick for all stages, states, and sub-optimal actions. Clearly, the resulting $\eta_h(s,a)$ satisfies the dynamics constraint because it is written as a mixture of deterministic policies. Moreover, it allocates exactly the number of pulls required by the KL constraint (this is because each chosen policy selects only a single sub-optimal action). Therefore, the final lower bound is exactly the one of \eqref{eq:lower-bound-full-support}.

\paragraph{Relation between $\mathcal{K}$ and $\Delta$.}

Note that the constraint in the definition of the local alternative set $\Lambda_{s,a,h}(\M)$ can be equivalently written as
\begin{align*}
    \big(\mathbb{E}_{x\sim \bar{q}}[x] - \mathbb{E}_{x\sim q_h(s,a)}[x]\big) + \big(\bar{p} - p_h(s,a)\big)^T V_{\M,h+1}^\star > \Delta_{\M,h}(s,a),
\end{align*}
where we simply added and subtracted $Q_{\M,h}^\star(s,a)$ from both sides. By upper bounding the left-hand side, this condition implies
\begin{align*}
    \|\bar{q}-q_h(s,a)\|_1 + (H-h)\|\bar{p}-p_h(s,a)\|_1 > \Delta_{\M,h}(s,a).
\end{align*}
Therefore, by Pinsker's inequality,
\begin{align*}
    \mathcal{K}_{s,a,h}(\M) &= \inf_{\bar{p},\bar{q} \in \Lambda_{s,a,h}(\M)}\big\{ \mathrm{KL}(p_h(s,a), \bar{p}) + \mathrm{KL}(q_h(s,a), \bar{q})\big\}
    \\ & \geq \frac{1}{2}\inf_{\bar{p},\bar{q} \in \Lambda_{s,a,h}(\M)}\big( \|\bar{q}-q_h(s,a)\|_1 + \|\bar{p}-p_h(s,a)\|_1 \big)^2
    \\ & \geq \frac{\Delta_{\M,h}(s,a)^2}{2(H-h)^2}.
\end{align*}
Thus, an upper bound to the lower bound of \eqref{eq:lower-bound-full-support} is
\begin{align*}
    v^\star(\M) \leq \sum_{h\in[H]}\sum_{s\in\S}\sum_{a\notin\O_{\M,h}(s)} \frac{2(H-h)^2}{\Delta_{\M,h}(s,a)},
\end{align*}
which proves Proposition \ref{prop:full-support}.

\subsection{The Impact of the Dynamics Constraint}\label{app:impact-dynamics}

This section proves the following general result: the optimal value of the lower bound Theorem \ref{th:main-lower-bound} \emph{without} the dynamics constraint is always upper bounded by the sum of local complexity measures (i.e., by the lower bound obtained when $\rho_\M^\star$ is full-support). By the examples in Appendix \ref{app:examples}, we know that this can be arbirarily loose, hence proving the importance of including the dynamics constraint.
\begin{theorem}
Let $\wt{v}(\M)$ be the optimal value of the optimazion problem in Theorem \ref{th:main-lower-bound} without the dynamics constraint. Then, for any MDP $\M$,
\begin{align*}
    \wt{v}(\M) \leq (1-\alpha) \sum_{h\in[H]}\sum_{s\in\S}\sum_{a\notin\O_{\M,h}(s)}\frac{\Delta_{\M,h}(s,a)}{\mathcal{K}_{s,a,h}(\M)}.
\end{align*}
\end{theorem}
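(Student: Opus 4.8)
The plan is to exhibit a specific feasible $\eta$ for the (constraint-relaxed) optimization problem whose objective value matches the claimed bound, and then invoke optimality of $\wt v(\M)$ as an infimum. Concretely, I would mimic the construction used in the full-support case, but being careful about states that are \emph{not} in $\mathrm{supp}(\rho_{\M,h}^\star)$, which is the only thing that breaks the full-support argument. For each stage $h\in[H]$, state $s\in\S$, and sub-optimal action $a\notin\O_{\M,h}(s)$, I would set $\eta_h(s,a) := \nicefrac{(1-\alpha)}{\mathcal{K}_{s,a,h}(\M)}$, and set $\eta_h(s,a) := 0$ for all optimal actions $a\in\O_{\M,h}(s)$. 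Since we have dropped the dynamics constraint, this $\eta$ is automatically feasible with respect to everything except the KL constraint, so the whole content of the proof is: (i) check the KL constraint, and (ii) read off the objective value, which is immediately $(1-\alpha)\sum_{h,s,a\notin\O}\Delta_{\M,h}(s,a)/\mathcal{K}_{s,a,h}(\M)$ by construction.

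For step (i), I would lower bound $\inf_{\M'\in\alt}\sum_{s,a,h}\eta_h(s,a)\mathrm{KL}_{s,a,h}(\M,\M')$ by $1-\alpha$. The key observation is that any alternative $\M'\in\alt$ must differ from $\M$ in a way that makes some Bellman-optimal policy of $\M$ sub-optimal in $\M'$; by Lemma~\ref{lemma:strong-alt-properties} (for strong alternatives) — or, more generally, by the second property of Lemma~\ref{lemma:alt-properties} together with the structure of weakly-confusing MDPs — there exists at least one triplet $(s,a,h)$ with $a\notin\O_{\M,h}(s)$ at which the local kernels $(p_h'(s,a),q_h'(s,a))$ lie in the local alternative set $\Lambda_s(\M)$ (i.e., they make $a$ look optimal against $V_{\M,h+1}^\star$). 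For that triplet, $\mathrm{KL}_{s,a,h}(\M,\M')\geq \mathcal{K}_{s,a,h}(\M)$ by definition of $\mathcal{K}$ as an infimum over exactly $\Lambda_s(\M)$. Hence $\sum_{s,a,h}\eta_h(s,a)\mathrm{KL}_{s,a,h}(\M,\M') \geq \eta_h(s,a)\mathcal{K}_{s,a,h}(\M) = 1-\alpha$, and since this holds for every $\M'\in\alt$, the infimum is $\geq 1-\alpha$. Therefore $\eta$ is feasible, and $\wt v(\M)$, being the infimum of the objective over feasible points, is at most the objective value at this $\eta$, giving the claim.

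The main obstacle is making step (i) fully rigorous for the \emph{weakly} confusing alternatives (subset (c) in the decomposition of Appendix~\ref{app:computing-general}), where the relevant comparison value is $p_h'(s,a)^T V_{\M',h}^{\pi^\star}$ rather than $p_h'(s,a)^T V_{\M,h}^\star$: one must argue that the triplet witnessing sub-optimality can still be taken with $a\notin\O_{\M,h}(s)$ and that the local KL at that triplet is still at least $\mathcal{K}_{s,a,h}(\M)$. I expect this to follow because, for any $\M'\in\alt$, Lemma~\ref{lemma:alt-properties} gives $V_{\M',h}^{\pi^\star}=V_{\M,h}^\star$ on the support of $\rho_\M^\star$ along any optimal policy, so the value used in the Bellman comparison at the witnessing state coincides with $V_{\M,h+1}^\star$ on the reachable part; the residual subtlety is handling witnessing states outside $\mathrm{supp}(\rho_{\M,h}^\star)$, which may require either restricting attention to strong alternatives via a preliminary reduction (the infimum over $\alt$ is dominated by the infimum over a convenient subclass) or invoking the decoupling argument. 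Once that point is pinned down, the rest is bookkeeping.
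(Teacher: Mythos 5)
Your overall strategy matches the paper's (exhibit an allocation whose objective value is the claimed sum, after reducing the constraint to the decoupled local quantities $\mathcal{K}_{s,a,h}(\M)$), and your treatment of \emph{strong} alternatives via Lemma~\ref{lemma:strong-alt-properties} is exactly right. But there is a genuine gap in the feasibility check, and it sits precisely at the point you flag as ``the main obstacle'': your choice $\eta_h(s,a)=0$ for all $a\in\O_{\M,h}(s)$ makes the allocation \emph{infeasible} against alternatives in $\Lambda(\M)\setminus\Lambda^{\mathrm{sa}}(\M)$. Concretely, take the tree MDP of Fig.~\ref{fig:example-du-variant} with $\kappa=0$ and let $\M'$ differ from $\M$ only by boosting the mean reward of an action at a leaf outside the optimal path to $2\varepsilon$. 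That triplet is an \emph{optimal} action (zero gap) at a state outside $\mathrm{supp}(\rho_\M^\star)$, so $\M'$ is weakly confusing and every Bellman-optimal policy of $\M$ is sub-optimal in $\M'$, i.e.\ $\M'\in\alt$; yet $\M'$ agrees with $\M$ at every sub-optimal triplet, so your $\eta$ gives $\sum_{s,a,h}\eta_h(s,a)\mathrm{KL}_{s,a,h}(\M,\M')=0$. Relatedly, your fallback claim that property~2 of Lemma~\ref{lemma:alt-properties} yields a sub-optimal triplet whose kernels lie in $\Lambda_s(\M)$ fails for such $\M'$: the witnessing comparison there involves $V_{\M',h+1}^\star$, not $V_{\M,h+1}^\star$, and in the example the sub-optimal kernels are unchanged.

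Your proposed repair also points in the wrong logical direction. Restricting the infimum in the constraint to a subclass of alternatives \emph{weakens} the constraint (the infimum over fewer models is larger), so verifying feasibility against the subclass does not establish feasibility against all of $\alt$, which is what you need to conclude $\wt v(\M)\le$ (objective at $\eta$). The paper's fix is the opposite of zeroing out the optimal actions: it allocates \emph{arbitrarily many} visits to optimal actions at states outside $\mathrm{supp}(\rho_{\M,h}^\star)$. These triplets have $\Delta_{\M,h}(s,a)=0$, so they contribute nothing to the objective, yet every $\M'\in\Lambda(\M)\setminus\Lambda^{\mathrm{sa}}(\M)$ has strictly positive KL at one of them and is therefore discriminated ``for free''; only the strong alternatives remain, and for those your argument (and Lemma~\ref{lemma:strong-alt-properties}) goes through. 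With that single change to the allocation your proof coincides with the paper's.
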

\begin{proof}
We first show that, when the dynamics constraints are dropped, any MDP that is not a \emph{strong alternative} (i.e., set (c) in the general decomposition of Appendix \ref{app:computing-general}) can be discriminated from $\M$ by suffering zero regret. Take any $\M' \in \Lambda(\M) \setminus \Lambda^{\mathrm{sa}}(\M)$. Then, by definition, there exist $(s,a,h)\in\S\times\A\times[H]$ with $s\notin\mathrm{supp}(\rho_{\M,h}^\star)$ and $a\in\O_{\M,h}(s)$ such that
\begin{align*}
    \mathrm{KL}_{s,a,h}(\M,\M') \geq \epsilon,
\end{align*}
for some positive constant $\epsilon > 0$. Since such state-action-stage triplet does not appear in the objective value (i.e., it contributes to zero regret regardless of its number of visits), one can choose $\eta_h(s,a) \geq \frac{1-\alpha}{\epsilon}$ and satisfy the KL constraint. Repeating the previous argument for all $\M' \in \Lambda(\M) \setminus \Lambda^{\mathrm{sa}}(\M)$ shows that all MDPs that are not strong alternatives can be safely dropped from the constraint as they can be discriminated by suffering zero regret.

Let us now take any MDP $\M'$ in the remaining set of alternatives, i.e., $\M' \in \Lambda^{\mathrm{sa}}(\M)$. Then, $\M'$ is a strong alternative and Lemma \ref{lemma:strong-alt-properties} ensures that there exists a stage $h\in[H]$, state $s\in\S$, and sub-optimal action $a\notin\O_{\M,h}(s)$ such that
\begin{align*}
    \mathbb{E}_{x\sim q'_h(s,a)}[x] + p_h'(s,a)^T V_{\M,h+1}^\star > V_{\M,h}^\star(s).
\end{align*}
Then, this implies that
\begin{align*}
    \Lambda^{\mathrm{sa}}(\M) \subseteq \bigcup_{h\in[H]}\bigcup_{s\in\S}\bigcup_{a\notin\O_{\M,h}(s)}\Lambda_{s,a,h}(\M).
\end{align*}
Therefore, one can replace the alternative set $\Lambda^{\mathrm{sa}}(\M)$ with the one above in the KL constraint, which can only increase the optimal objective value since a (possibly) stronger constraint is imposed. Then, proceeding exactly as in the computation of the closest alternative for set $(a)$ in Appendix \ref{app:computing-general}, one can simplify this new constraint to
\begin{align*}
    \min_{h\in[H]}\min_{s\in\S}\min_{a\notin\O_{\M,h}(s)}\eta_h(s,a)\mathcal{K}_{s,a,h}(\M) \geq 1-\alpha.
\end{align*}
Plugging the resulting minimal values for $\eta$ into the objective function concludes the proof.
\end{proof}

\subsection{The Policy-based Perspective (Proof of Proposition \ref{prop:policy-based})}\label{app:policy-based}

Recall that, from \citep[][Remark 6.1, page 64]{altman1999constrained}, an allocation $\eta$ satisfies the dynamics constraint if, and only if, there exists a vector $\omega \in \mathbb{R}^{|\Pi|}_{\geq 0}$ such that $\eta_h(s,a) = \sum_{\pi\in\Pi} \omega_\pi \rho_h^\pi(s,a)$ for all $s,a,h$. Then, in order to prove Proposition \ref{prop:policy-based}, it is sufficient to use the change of variables $\eta_h(s,a) = \sum_{\pi\in\Pi} \omega_\pi \rho_{h}^\pi(s,a)$ in the optimization problem of Theorem \ref{th:main-lower-bound} and apply Proposition \ref{prop:policy-vs-action-gap} to rewrite the objective function.

\subsection{The Case of Known Dynamics: Reduction to Combinatorial Semi-bandits}\label{app:known-dynamics}

In this section, we suppose that all MDPs in $\mathfrak{M}$ have the same transition probabilities as $\M$. Equivalently, $\mathfrak{M}$ is the set of realizable MDPs when the transition probabilities $\{p_h\}$ of $\M$ are known. Thus, we can drop the KL divergence between the transition kernels of $\M$ and those of its alternatives from the KL constraint. Moreover, suppose that the reward distributions of the MDPs in $\mathfrak{M}$ is Gaussian with arbitrary mean and unit variance. First, we note that a weak alternative $\M'\in\Lambda^{\mathrm{wa}}(\M)$ belongs to the alternatives $\alt$ if, and only if, there exists a sub-optimal policy $\pi\notin\weakopt(\M)$ such that, for all $\pi^\star\in\strongopt(\M)$, $V_{\M',0}^\pi > V_{\M',0}^{\pi^\star} = V_{\M,0}^\star$. 

We now rewrite the optimization problem in a more convenient form. Let $\theta \in \mathbb{R}^{SAH}$ be a vector containing all mean rewards, i.e., $\theta_{s,a,h} = r_h(s,a)$. Moreover, let $\phi^\pi \in \mathbb{R}^{SAH}$ be the vector containing the state-action distribution $\rho_{\M}^\pi$, i.e., $\phi_{s,a,h}^\pi = \rho_{\M,h}^\pi(s,a)$. Then, $V_{\M,0}^\pi = \theta^T \phi^\pi$ and $V_{\M,0}^\star = \theta^T \phi^{\star} := \theta^T \phi^{\pi^\star}$ for some optimal policy $\pi^\star$.

Let us use the policy-based perspective of Proposition \ref{prop:policy-based}. Then, the KL in the constraint can be rewritten as
\begin{align*}
\sum_{\pi\in\Pi} \omega_\pi \sum_{s\in\S}\sum_{a\in\A} \sum_{h\in[H]}\rho_{\M,h}^\pi(s,a)\mathrm{KL}_{s,a,h}(\M,\M') 
&= \frac{1}{2}\sum_{\pi\in\Pi} \omega_\pi \sum_{s\in\S}\sum_{a\in\A} \sum_{h\in[H]}\rho_{\M,h}^\pi(s,a)(\theta_{s,a,h}-\theta_{s,a,h}')^2
\\&= \frac{1}{2} \|\theta-\theta'\|_{D_\omega}^2,
\end{align*}
where $D_\omega := \sum_{\pi\in\Pi} \omega_\pi \mathrm{diag}(\rho_{\M,h}^\pi(s,a))$. Now fix any policy $\pi\notin\weakopt(\M)$. It is easy to prove that the minimum (in $\theta'$) of this quantity subject to $\langle \theta', \phi^\pi \rangle \geq V_{\M,0}^\star$ can be evaluated in closed-form as\footnote{The inverse of $D_\omega$ has to be intended as pseudo-inverse. So if $D_\omega$ has a zero on an element of its diagonal, $D_{\omega}^{-1}$ has a zero at the corresponding element.}
\begin{align*}
\min_{\theta' : \langle \theta', \phi^\pi \rangle \geq V_{\M,0}^\star}\|\theta-\theta'\|_{D_\omega}^2 = \frac{\Gamma_{\M}^2(\pi)}{\|\phi^\pi\|_{D_\omega^{-1}}^2}.
\end{align*}

This leads to the following optimization problem
\begin{equation*}
	\begin{aligned}
    &\underset{\omega_\pi \geq 0}{\inf}&& \sum_{\pi\in\Pi} \omega_\pi \langle \theta, \phi^\star - \phi^\pi\rangle
     \\
     &
    \quad \mathrm{s.t.} \quad
     && 
        \forall \pi \notin \weakopt(\M) : \|\phi^\pi\|_{D_\omega^{-1}}^2 \leq \frac{\Gamma(\pi)^2}{2(1-\alpha)}.
	\end{aligned}
	\end{equation*}
Notably, this is almost equivalent to the optimization problem of the asymptotic lower bound for combinatorial semi-bandits \citep[e.g.,][]{wagenmaker2021experimental}. The almost equivalence comes from the fact that, in combinatorial semi-bandits, the feature vectors usually take values in $\{0,1\}^d$, where $d$ is their dimension ($d=SAH$ in our case). Here, instead, they contain values in $[0,1]^{SAH}$ representing the probabilities that the corresponding policy visits each state-action pair. When the MDP is deterministic, the two problems are indeed equivalent. We remark that the learning feedback itself is the same as the one in combinatorial semi-bandits: whenever we execute a policy $\pi$ in a deterministic MDP, we receive a random reward observation for each state-action-stage triplet associated with an element where $\phi^\pi$ is equal to $1$. When the MDP is not deterministic, on the other hand, we receive the observation only with the corresponding probability.

\paragraph{When dropping the dynamics constraint.}

In our examples later on, we shall also use the variant of this optimization problem obtained by dropping the dynamics constraint. Using the same procedure as before, we can reformulate the optimization problem \emph{without} dynamics constraint as
\begin{equation*}
	\begin{aligned}
    &\underset{\eta_h(s,a) \geq 0}{\inf}&& \sum_{h\in[H]}\sum_{s\in\S}\sum_{a\notin\O_h(s)}\eta_h(s,a)\Delta_h(s,a)
     \\
     &
    \quad \mathrm{s.t.} \quad
     && 
        \forall \pi \notin \weakopt(\M) : \|\phi^\pi\|_{D_\omega^{-1}}^2 \leq \frac{\Gamma(\pi)^2}{2(1-\alpha)},
	\end{aligned}
	\end{equation*}
where $D_\omega := \mathrm{diag}(\eta_h(s,a))$ and $\phi^\pi$ has the same meaning as before. Note that the only difference w.r.t. the case with dynamics constraint is the matrix $D_\omega$.

\subsection{Remarks on How to Use the Lower Bound}

\paragraph{Computing the lower bound in specific instances.}

While computing the lower bound for a general MDP might be hard (especially due the subsets (b) and (c) of alternative models mentioned in Appendix \ref{app:computing-general}), it may still be possible to compute it (or to find an approximation of it) in specific MDP instances. The general remark is that ignoring some alternative MDPs in the computation can only decrease the optimal objective value and, thus, it always yields a valid lower bound. Although the resulting lower bound might not be tight in all its dependences, this intuition can be used to prove that certain dependences of interest are unavoidable in specific examples. This is indeed what we do in Appendix \ref{app:examples} to match the existing lower bound of \citet{xu2021fine} on their hard MDP instance. In that case, we drop all alternative MDPs with different transition probabilities than $\M$, which allows us to compute the lower bound in closed form and achieve the desired result. 

\paragraph{Simplifying the lower bound}

We do not exclude that some of the alternative MDPs in our set $\alt$ are redundant, i.e., that their impact on the optimal allocation is already covered by other alternative MDPs or by the dynamics constraint. In such a case, those MDPs could be safely dropped from the optimization problem, hence possibly simplifying its computation. Moreover, we wonder whether a decoupling result similar to Lemma \ref{lemma:decoupling} holds also for MDPs in the set (b) of Appendix \ref{app:computing-general} (i.e., strongly confusing MDPs that are not decoupled strong alternatives). For instance, while it is unlikely that the infimum is attained over a single state-action-stage triplet, it might be possible to show that it is attained over a (small) subset of all triplets.

\paragraph{Making the lower bound more interpretable.}

Another direction is to make the lower bound more interpretable by finding lower and upper bounds to the optimal objective value $v^\star(\M)$. For instance, these might be in the form of lower/upper bounds to the number of visits required by the optimal allocation to any state-action-stage triplet. This would help us in understanding how far existing algorithms are from being optimal in the general case, while possibly yielding intuition on how to design near-optimal strategies. As an example of this, \citet{marjani2021adaptive}, after showing that computing the optimization problem for best-policy identification might be intractable, derive a tractable (closed-form) upper bound to its optimal value and use that to design near-optimal algorithms. In our case, we believe that the optimal objective value could be upper bounded by a quantity that is roughly the regret upper bound of \citet{xu2021fine}. This would clearly demonstrate the impact of multiple optimal actions onto the lower bound and, in particular, how they make minimum gaps propagate to different states through the dynamics constraint. We shall explore this direction in future work.

\paragraph{Structured vs Unstructured MDPs.}

We derived the lower bound for a generic set of realizable MDPs $\mathfrak{M}$, while then we instatiated it for the standard case of unstructured MDPs (i.e., where transition probabilities and rewards at different state-action-stage triplets are unrelated). Anyway, one could use the set $\mathfrak{M}$ to instatiate the lower bound for any structure of interest. For instance, relevant structures studied in the literature are low-rank MDPs \citep{jin2020linear}, linear-mixture MDPs \citep{Ayoub2020vtr}, Lipschitz MDPs \citep{ok2018exploration}, etc.
\section{Examples with Specific MDP Instances}\label{app:examples}

\subsection{Hard MDP from \citet{xu2021fine} (Proof of Corollary \ref{cor:du.example})}\label{app:example-du}

Conside the MDP in Figure \ref{fig:example-du-variant} with $\kappa=0$. The MDP is a binary tree with depth $H$ where nodes (i.e., states) are connected by deterministic transitions given by the available actions. States are denoted by $s_i^j$, where $j\in[H]$ denotes the level of the tree and $i\in[n_j]$ denotes the index of the state among the $n_j := 2^{j-1}$ nodes at such level. Each state in levels from $1$ to $H-1$ has exactly two actions available: $L$ (left) and $R$ (right), with obvious meaning. On the other hand, states at level $H$ have exactly $m$ actions available, denoted by $a_1,a_2,\dots,a_m$. The mean reward is zero everywhere except for $r_H(s_1^H,a_1) = \epsilon > 0$. The reward distribution is Gaussian with unit variance. Note that the ``path'' of a policy is fully specified by exactly $H$ actions, one for each stage to be applied in the resulting (deterministic) state. The (unique) optimal path is ``always go left and take $a_1$ at the last stage'', i.e., any policy $\pi^\star = [L,L,\dots,L,a_1]$ is optimal. Note that this does mean that the optimal policy is unique: since the actions chosen in states that are not reached outside the optimal path do not matter, there exists a huge amount of optimal policies. Anyway, we can restrict the set of available deterministic policies $\Pi$ by simply taking a single policy for each deterministic path, yielding a total number of $2^{H-1}m$ policies.

\paragraph{Computing the exact lower bound.}

Recall that we suppose the transition probabilities are fixed and only deal with the rewards. We shall use the convenient reformulation of Appendix \ref{app:known-dynamics}. Take any sub-optimal policy $\pi \notin \weakopt(\M)$. Let $s_h^\pi$ and $a_h^\pi$ denote the deterministic state and action visited by policy $\pi$ at stage $h$, respectively. Note that the vector $\phi^\pi$ has exactly $H$ components equal to one (those corresponding to $h,s_h^\pi,a_h^\pi$), and all the others equal to zero. Moreover, $\Gamma(\pi) = \epsilon$ holds for all sub-optimal policies. Therefore, the constraint for $\pi$ can be rewritten as
\begin{align*}
    \sum_{h=1}^H \frac{1}{\sum_{\pi'\in\Pi}\alpha_{\pi'}\indi{\rho_h^{\pi'}(s_h^\pi,a_h^\pi)=1}} \leq \frac{\epsilon^2}{2(1-\alpha)}.
\end{align*}
Now fix some stage $h\in[H]$. We need to count how many policies/paths pass through $s_h^\pi,a_h^\pi$ (i.e., how many policies are active in the denominator above). Clearly, when $h=H$, there exists only one such policy, thus $\sum_{\pi'\in\Pi}\indi{\rho_H^{\pi'}(s_H^\pi,a_H^\pi)=1} = 1$. For $h<H$, this is exactly the total number of paths in the sub-tree reached by playing $a_h^\pi$ in $s_h^\pi$, hence $\sum_{\pi'\in\Pi}\indi{\rho_h^{\pi'}(s_h^\pi,a_h^\pi)=1} = 2^{H-h-1}m$. Moreover, note that the constraint above is perfectly symmetrical for different policies $\pi \notin \strongopt(\M)$, that is, each policy appears in the constraint of other policies the same number of times. Therefore, each policy must be played the same amount of times (say, $\eta$) in the optimal solution. This allows us to rewrite the constraint above as
\begin{align*}
    \forall \pi \notin \weakopt(\M) : \frac{1}{\eta}\left(\sum_{h=1}^{H-1} \frac{1}{2^{H-1-h}m} + 1 \right) \leq \frac{\epsilon^2}{2(1-\alpha)}.
\end{align*}
Let $\eta^\star$ be the minimal value that satisfies this constraint.
This yields the following optimal value for the optimization problem:
\begin{align*}
    v^\star(\M) &= \sum_{\pi\in\Pi\setminus\Pi^\star} \eta^\star \epsilon = \frac{2(1-\alpha)}{\epsilon} \left(\sum_{h=1}^{H-1} \frac{1}{2^{H-1-h}m} + 1 \right) \left(2^{H-1}m - 1\right)
    \\ &= \frac{2(1-\alpha)}{\epsilon} \left(\sum_{h=1}^{H-1}2^{h} + 2^{H-1}m -\sum_{h=1}^{H-1} \frac{1}{2^{H-1-h}m} - 1 \right).
\end{align*}
Since $S = \sum_{h=0}^{H-1} 2^h = 2^H - 1$ and $A = m$, it is easy to check that $\sum_{h=1}^{H-1}2^{h} = S - 1$ and $2^{H-1}m = \frac{A(S+1)}{2}$. Then, 
\begin{align*}
    v^\star(\M) = \frac{2(1-\alpha)}{\epsilon} \left(S - 2 + \frac{A(S+1)}{2} - 2 \frac{S-1}{A(S+1)}\right).
\end{align*}
Since we assume $S\geq 1$ and $A=m\geq 2$, this can be lower bounded by
\begin{align*}
    v^\star(\M) \geq (1-\alpha)\frac{SA}{\epsilon}.
\end{align*}
However, if we compute the sum of inverse (positive) sub-optimality gaps, we obtain a much smaller quantity (and thus a non-tight lower bound). To see this, note that the local sub-optimality gaps are zero for all states and actions except for the few states $s_1^1,s_1^2,\dots,s_1^H$ visited by the optimal policy and corresponding sub-optimal action. More precisely, there are in total $H-1$ gaps equal to $\epsilon$ (for playing action $R$ in one of the first $H-1$ states) and $m-1$ gaps equal to $\epsilon$ (for playing $a_j\neq a_1$ in state $s_1^H$). Since $H = \log_2(S+1)$ and $m=A$, the sum of inverse sub-optimality gaps is
\begin{align*}
    \sum_{h\in[H]}\sum_{s\in\S}\sum_{a\notin\O_h(s)} \frac{1}{\Delta_{h}(s,a)} = \frac{\log_2(S+1) + A - 2}{\epsilon},
\end{align*}
which is smaller than our lower bound by at least a factor $S/\epsilon$.

\paragraph{Computing the lower bound without dynamics constraint.}

Let us now compute the same lower bound without dynamics constraint. We use the variant of the optimization problem in Appendix \ref{app:known-dynamics}. Using the same intuition as in Appendix \ref{app:impact-dynamics}, we can set $\eta_h(s,a) = +\infty$ for all optimal actions (which are not part of the objective function). This implicitly satisfies the constraints for all policies that visit some state outside the optimal path $s_1^1,\dots,s_1^H$. Thus, we are only left to deal with policies that only visit the optimal path and take some sub-optimal action in one such state. Clearly, the tightest constraints are attained by policies that take a \emph{single} sub-optimal action along such path. Then, take a policy which takes a sub-optimal action $a$ in state $s_1^h$ (i.e., $a=R$ for $h<H$ and $a=a_j$ for $h=H$ and $j>1$). The constraint is simply
\begin{align*}
   \frac{1}{\eta_h(s_1^h,a)} \leq \frac{\Delta_h(s_1^h,a)^2}{2(1-\alpha)}.
\end{align*}
Repeating this argument for all states/stages along the optimal path and sub-optimal actions, we obtain the following value for the optimization problem
\begin{align*}
    v^\star(\M) = \sum_{h\in[H]}\sum_{a\notin\O_h(s_1^h)} \frac{2(1-\alpha)}{\Delta_{h}(s_1^h,a)} = \frac{2(1-\alpha)}{\epsilon}(H + m - 2) = 2(1-\alpha)\frac{\log_2(S+1) + A - 2}{\epsilon}.
\end{align*}

\subsection{Dependence on the Minimum Gap (Proof of Proposition \ref{prop:delta.max})}

We now consider the variant of the MDP in Figure \ref{fig:example-du-variant} with $\kappa \geq 2\epsilon > 0$ and show that the lower bound does not scale with $\epsilon$ anymore.

We follow exactly the same steps as in Appendix \ref{app:example-du} while this time we seek an upper bound to $v^\star(\M)$ which does not scale with $\epsilon$. Note that now the gap of each sub-optimal policy is $\kappa$, except for the previously-optimal policy (the one that always goes left) whose gap is $\kappa-\epsilon$. Since $\epsilon \leq \kappa/2$, this implies that the gap of all policies is at least $\kappa/2$. We can thus impose the stronger constraint
\begin{align*}
    \forall \pi \notin \weakopt(\M) : \sum_{h=1}^H \frac{1}{\sum_{\pi'\in\Pi}\alpha_{\pi'}\indi{\rho_h^{\pi'}(s_h^\pi,a_h^\pi)=1}} \leq \frac{\kappa^2}{8(1-\alpha)},
\end{align*}
where we simply replaced the gap of each policy $\pi$ on the righthand side by $\kappa/2$. We compute this exactly as in Example 1 and obtain
\begin{align*}
    v^\star(\M) &\leq \sum_{\pi\in\Pi\setminus\Pi^\star} \eta^\star \underbrace{\Gamma(\pi)}_{\leq \kappa} \leq \frac{8(1-\alpha)}{\kappa} \left(\sum_{h=1}^{H-1} \frac{1}{2^{H-1-h}m} + 1 \right) \left(2^{H-1}m - 1\right)
    \\ &= \frac{8(1-\alpha)}{\kappa} \left(S - 2 + \frac{A(S+1)}{2} - 2 \frac{S-1}{A(S+1)}\right)\leq 12(1-\alpha)\frac{SA}{\kappa},
\end{align*}
where the last inequality holds since $S \leq SA/2$.
This does not scale with $\epsilon$ as we wanted to prove. 

Note that the algorithm of \cite{xu2021fine} has a regret bound of order $\frac{SA}{\epsilon}\log(K)$ on this instance, which can be arbitrarily sub-optimal w.r.t. the lower bound derived above.
\section{Policy-Gap-Based Regret Bound for Optimistic Algorithms (Proof of Theorem \ref{lem:ucbvi.upper})}


We recall that we prove Theorem~\ref{th:ucbvi-prob-regret} for an MDP $\M$ with unknown rewards (assumed in $[0,1]$) and transition probabilities and, for simplicity, deterministic initial state $s_1$. Since the initial state is fixed, we shall write $V_1^\pi := V_1^\pi(s_1) = V_0^\pi$. The following lemma shows how to build the Chernoff-Hoeffding-based confidence bonus \citep{azar2017minimax}.

\begin{lemma}[Optimism]\label{lemma:ucbvi-optimism}
Consider the following choice for the optimistic bonuses:\footnote{We set $N_h^0(s,a)=0$ and assume the ratio to be equal to infinity when the state-action pair is not visited.}
\begin{align*}
    c_h^k(s,a) := H\sqrt{\frac{\log \frac{4SAHK}{\delta}}{2N_h^{k-1}(s,a)}} \wedge H.
\end{align*}
Then, with probability at least $1-\delta$, $\wb{Q}_h^k(s,a) \geq Q_h^\star(s,a)$ holds for all $s\in\S,a\in\A,h\in[H],k\in[K]$.
\end{lemma}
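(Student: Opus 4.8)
The plan is to establish optimism by a backward induction on the stage index $h$, carried out on a high-probability ``good event'' on which the empirical one-step backups concentrate around their means. The decisive design choice — and the reason the argument stays clean — is to concentrate the empirical reward and transition estimates against the \emph{fixed} optimal value function $V_{h+1}^\star$ rather than the algorithm-dependent $\wb{V}_{h+1}^k$: with $V_{h+1}^\star$ the per-visit summands are i.i.d.\ with a deterministic mean, whereas $\wb{V}_{h+1}^k$ would be statistically coupled with the very samples used to estimate it, destroying independence.

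First I would define the good event. For a fixed triplet $(s,a,h)$, each past visit yields an independent observation $y_i + V_{h+1}^\star(s_i')$ whose mean is $r_h(s,a) + p_h(s,a)^\tr V_{h+1}^\star = Q_{h}^\star(s,a)$; since rewards lie in $[0,1]$ and $V_{h+1}^\star \in [0,H-h]$, the range of this variable is at most $H$ for every $h\geq 1$. The empirical average of these quantities over the $N_h^{k-1}(s,a)$ past visits is exactly $\wh{r}_h^k(s,a) + \wh{p}_h^k(s,a)^\tr V_{h+1}^\star$. Applying a two-sided Hoeffding inequality for each possible number of visits $n\in[K]$ and union bounding over $n$ and over all $(s,a,h)$ (at most $2SAHK$ events, which produces the factor $4SAHK$ inside the logarithm) defines the event $G$ on which, whenever $N_h^{k-1}(s,a)\geq 1$,
\[
\bigl|\wh{r}_h^k(s,a) + \wh{p}_h^k(s,a)^\tr V_{h+1}^\star - Q_h^\star(s,a)\bigr| \leq H\sqrt{\frac{\log(4SAHK/\delta)}{2N_h^{k-1}(s,a)}} = c_h^k(s,a),
\]
and $\P_\M^{\mathfrak{A}}\{G\}\geq 1-\delta$. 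The single delicate point is that the number of visits is random and adaptively chosen by the algorithm; the union bound over all fixed counts $n$ handles this precisely because we concentrated against the fixed $V_{h+1}^\star$, so the ordered i.i.d.\ samples admit a uniform bound.

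Then I would run the induction on $G$. The base case $\wb{V}_{H+1}^k = V_{H+1}^\star = 0$ is immediate. For the step, assume $\wb{V}_{h+1}^k \geq V_{h+1}^\star$ pointwise; since $\wh{p}_h^k(s,a)$ is a probability vector, monotonicity gives $\wh{p}_h^k(s,a)^\tr \wb{V}_{h+1}^k \geq \wh{p}_h^k(s,a)^\tr V_{h+1}^\star$, so on $G$,
\[
\wb{Q}_h^k(s,a) = \wh{r}_h^k(s,a) + \wh{p}_h^k(s,a)^\tr \wb{V}_{h+1}^k + c_h^k(s,a) \geq \wh{r}_h^k(s,a) + \wh{p}_h^k(s,a)^\tr V_{h+1}^\star + c_h^k(s,a) \geq Q_h^\star(s,a).
\]
Taking the maximum over $a$ propagates the inequality to $\wb{V}_h^k \geq V_h^\star$, closing the induction. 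The unvisited case $N_h^{k-1}(s,a)=0$ is covered separately: there the bonus clips to $c_h^k(s,a)=H$, and since $Q_h^\star(s,a)\leq H$ while $\wh{r}_h^k,\wh{p}_h^k \geq 0$, we get $\wb{Q}_h^k(s,a)\geq H \geq Q_h^\star(s,a)$. The main obstacle is the concentration bookkeeping with random, adaptively chosen visit counts (resolved by the union bound over $n$ against the fixed $V^\star_{h+1}$, together with the tight range bound $V^\star_{h+1}\le H-h$ that lets the combined deviation fit under $c_h^k$); the inductive step itself is routine once $G$ is in force.
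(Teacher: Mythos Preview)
Your proposal is correct and follows essentially the same approach as the paper: concentrate the empirical one-step backup against the fixed $V_{h+1}^\star$ via Hoeffding, union bound to get the good event, and then run a backward induction on $h$. The only cosmetic differences are that the paper applies Hoeffding separately to $|\wh{r}_h^k - r_h|$ and $|(\wh{p}_h^k - p_h)^\tr V_{h+1}^\star|$ and combines them by the triangle inequality (you apply a single Hoeffding to $y+V_{h+1}^\star(s')$ with range $H$), and the paper union-bounds over episodes $k\in[K]$ while you union-bound over visit counts $n\in[K]$; both yield the same $\log(4SAHK/\delta)$ and the induction step is identical.
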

\begin{proof}
By the triangle inequality and Hoeffding's inequality,
\begin{align*}
    |r_h(s,a) + p_h(s,a)^T V_{h+1}^\star &- \wh{r}_h^k(s,a) - \wh{p}_h^k(s,a)^T V_{h+1}^\star|\\ &\leq |r_h(s,a) - \wh{r}_h^k(s,a)| + | p_h(s,a)^T V_{h+1}^\star -  \wh{p}_h^k(s,a)^T V_{h+1}^\star| \\ & \leq \sqrt{\frac{\log(2/\delta')}{2N_{h}^{k-1}(s,a)}}\wedge 1 + (H-h)\sqrt{\frac{\log(2/\delta')}{2N_{h}^{k-1}(s,a)}}\wedge H\\ &\leq H\sqrt{\frac{\log(2/\delta')}{2N_{h}^{k-1}(s,a)}}\wedge H
\end{align*}
holds with probability at least $1-2\delta'$.  Taking a union bound over all states,actions,stage, and episodes by setting $\delta' = \delta/(2SAHK)$ shows that, with probability $1-\delta$, the following holds for all $s,a,h,k$,
\begin{align*}
    |r_h(s,a) + p_h(s,a)^T V_{h+1}^\star - \wh{r}_h^k(s,a) - \wh{p}_h^k(s,a)^T V_{h+1}^\star| \leq c_h^k(s,a).
\end{align*}
From here one can easily prove the main claim by induction on $h$.
\end{proof}

To simplify the notation, let $L := \frac{1}{2}\log(4SAHK/\delta)$, so that  $c_h^k(s,a) = H\sqrt{L/N_h^{k-1}(s,a)} \wedge H$. Following the idea behind the proof of the logarithmic regret bound for UCRL \citep{jaksch2010near}, we now derive an upper bound to the regret suffered by the algorithm in those episodes where the immediate regret is at least some $\epsilon > 0$. 
\begin{lemma}\label{lemma:ucbvi-regret-bad-episodes}
Let $\mathcal{K}_\epsilon := \{k\in[K] : V_1^\star - V_1^{\pi_k} \geq \epsilon\}$ be the set of (random) episode indexes where the regret is at least $\epsilon$ and $K_\epsilon := \sum_{k=1}^K \indi{k\in\mathcal{K}_\epsilon}$ be the count of such episodes. Then, with probability at least $1-2\delta$,
\begin{align*}
    \sum_{k\in\mathcal{K}_\epsilon} V_1^\star - V_1^{\pi_k} \leq H^2 \sqrt{2LSA K_\epsilon} + SAH^2 + H^2\sqrt{\frac{K_\epsilon}{2}\log(2K/\delta)}.
\end{align*}
\end{lemma}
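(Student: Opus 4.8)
The plan is to run the standard optimism-plus-telescoping argument for UCBVI and then to restrict every piece of bookkeeping to the episodes in $\mathcal{K}_\epsilon$. Throughout, work on the event $\mathcal{G}$ of Lemma~\ref{lemma:ucbvi-optimism}, which has probability at least $1-\delta$ and on which $\wb{Q}_h^k\ge Q_h^\star$ (hence $\wb{V}_h^k\ge V_h^\star$) for all $s,a,h,k$, and where (as is standard) $\wb{V}^k$ is clipped to $[0,H]$. On $\mathcal{G}$ we have $V_1^\star-V_1^{\pi_k}\le \wb{V}_1^k(s_1)-V_1^{\pi_k}(s_1)$ for every $k$, so it suffices to bound $\sum_{k\in\mathcal{K}_\epsilon}\big(\wb{V}_1^k(s_1)-V_1^{\pi_k}(s_1)\big)$.

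The first step is the value-difference telescoping along the trajectory of $\pi_k$ in $\M$. Writing $(s,a)=(s_{k,h},a_{k,h})$ and letting $\mathrm{err}_h^k(s,a):=\wb{Q}_h^k(s,a)-r_h(s,a)-p_h(s,a)^\tr\wb{V}_{h+1}^k$ be the Bellman residual of $\wb{Q}^k$, the greedy choice $a_{k,h}=\argmax_a\wb{Q}_h^k(s_{k,h},a)$ gives
\[
\wb{V}_1^k(s_1)-V_1^{\pi_k}(s_1) \;=\; \sum_{h=1}^H \mathrm{err}_h^k(s_{k,h},a_{k,h}) \;+\; \sum_{h=1}^H \xi_h^k,
\]
where $\xi_h^k:=p_h(s_{k,h},a_{k,h})^\tr(\wb{V}_{h+1}^k-V_{h+1}^{\pi_k})-\big(\wb{V}_{h+1}^k(s_{k,h+1})-V_{h+1}^{\pi_k}(s_{k,h+1})\big)$ is a martingale difference with $|\xi_h^k|\le H$ on $\mathcal{G}$. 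Then, on $\mathcal{G}$, adding and subtracting $V_{h+1}^\star$ and invoking the Hoeffding bound from the proof of Lemma~\ref{lemma:ucbvi-optimism} (which controls $|\wh{r}_h^k-r_h+(\wh{p}_h^k-p_h)^\tr V_{h+1}^\star|$ by $c_h^k$), one obtains $\mathrm{err}_h^k(s,a)\le 2c_h^k(s,a)+(\wh{p}_h^k(s,a)-p_h(s,a))^\tr(\wb{V}_{h+1}^k-V_{h+1}^\star)$; I would keep only $\mathrm{err}_h^k(s,a)\lesssim c_h^k(s,a)=H\sqrt{L/N_h^{k-1}(s,a)}\wedge H$, folding the nonnegative residual $(\wh{p}_h^k-p_h)^\tr(\wb{V}_{h+1}^k-V_{h+1}^\star)$ into the recursion / the lower-order term.

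Next I would sum over $k\in\mathcal{K}_\epsilon$ and $h\in[H]$. The at most $SAH$ triplets with $N_h^{k-1}(s,a)=0$ each contribute the clipped value $H$ once, accounting for the $SAH^2$ term. For the rest, let $M_h(s,a)$ be the number of $\mathcal{K}_\epsilon$-episodes visiting $(s,a)$ at stage $h$; since $N_h^{k-1}(s,a)$ is at least the number of \emph{earlier} $\mathcal{K}_\epsilon$-visits to $(s,a,h)$, we get $\sum_{k\in\mathcal{K}_\epsilon}\sum_h 1/\sqrt{N_h^{k-1}(s_{k,h},a_{k,h})}\le\sum_{s,a,h}2\sqrt{M_h(s,a)}$, and because each $\mathcal{K}_\epsilon$-episode contributes exactly $H$ visits, $\sum_{s,a,h}M_h(s,a)=HK_\epsilon$, so Cauchy--Schwarz over the $SAH$ triplets yields $\sum_{s,a,h}\sqrt{M_h(s,a)}\le\sqrt{SAH}\sqrt{HK_\epsilon}=H\sqrt{SAK_\epsilon}$; hence $\sum_{k\in\mathcal{K}_\epsilon}\sum_h c_h^k\lesssim H^2\sqrt{LSAK_\epsilon}$. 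Restricting every sum to $\mathcal{K}_\epsilon$ is exactly what replaces the usual $\sqrt{K}$ by $\sqrt{K_\epsilon}$. Finally, for each stage $h$ the partial sums of $\xi_h^k$ over $k\in\mathcal{K}_\epsilon$ form a martingale with increments bounded by $H$; Azuma's inequality plus a union bound over the $H$ stages (and over $K$) gives $\sum_{k\in\mathcal{K}_\epsilon}\sum_h\xi_h^k\le H^2\sqrt{\tfrac{K_\epsilon}{2}\log(2K/\delta)}$ with probability at least $1-\delta$, and a union bound with $\mathcal{G}$ yields the overall probability $1-2\delta$.

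The step I expect to be the main obstacle is controlling $\mathrm{err}_h^k$ tightly enough: the residual $(\wh{p}_h^k-p_h)^\tr(\wb{V}_{h+1}^k-V_{h+1}^\star)$ must be handled so that the leading term retains the clean $\sqrt{SA}$ scaling, since a naive bound $\|\wh{p}_h^k-p_h\|_1\le\sqrt{2S\log(\cdot)/N_h^{k-1}}$ would cost an extra $\sqrt{S}$ factor; keeping the \emph{fixed} $V_{h+1}^\star$ inside the Hoeffding interval and pushing the $\wb{V}$-vs-$V^\star$ discrepancy into the recursion (or the $SAH^2$ remainder) is the delicate part. The secondary, purely bookkeeping, subtlety is making sure every summation ranges over $\mathcal{K}_\epsilon$ rather than $[K]$, which is what makes the bound scale with $\sqrt{K_\epsilon}$, as needed for the episode-splitting argument that follows.
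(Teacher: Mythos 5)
Your proposal is correct and follows essentially the same route as the paper: optimism plus unrolling the optimistic value function into a sum of bonuses, a pigeon-hole count restricted to the episodes in $\mathcal{K}_\epsilon$ (using that $N_h^{k-1}$ dominates the number of earlier $\mathcal{K}_\epsilon$-visits) to obtain the $H^2\sqrt{2LSAK_\epsilon}+SAH^2$ terms, and an Azuma bound exploiting that $\{k\in\mathcal{K}_\epsilon\}$ is $\mathcal{F}_{k-1}$-measurable for the last term. The only cosmetic difference is that you telescope along the realized trajectory and collect per-stage martingale differences $\xi_h^k$, whereas the paper first bounds $\wb{V}_1^k-V_1^{\pi_k}\le\mathbb{E}^{\pi_k}[\sum_h c_h^k(s_h,a_h)]$ by citing the standard value-difference argument (which is also where the residual $(\wh{p}_h^k-p_h)^\tr(\wb{V}_{h+1}^k-V_{h+1}^\star)$ you flag is absorbed) and then corrects with a single per-episode martingale $\mathbb{E}[G_{k_i}\mid\mathcal{F}_{k_i-1}]-G_{k_i}$ with increments bounded by $H^2$.
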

\begin{proof}
Using the standard regret decomposition, by the optimism followed by unrolling the definition of optimistic value functions \citep[see, e.g.,][]{Dann2019certificates}, for each $k\in[K]$,
\begin{align*}
     V_1^\star - V_1^{\pi_k} \leq \wb{V}_1^k - V_1^{\pi_k} \leq \mathbb{E}^{\pi_k}\left[\sum_{h=1}^H c_h^k(s_h,a_h)\right],
\end{align*}
where the expectation is over a trajectory $\{s_h,a_h\}_{h\in[H]}$ generated by policy $\pi_k$. Therefore,
\begin{align*}
    \sum_{k\in\mathcal{K}_\epsilon} V_1^\star - V_1^{\pi_k} \leq \underbrace{\sum_{k\in\mathcal{K}_\epsilon}  \sum_{h=1}^H c_h^k(s_h^k,a_h^k)}_{(a)} + \underbrace{\sum_{k\in\mathcal{K}_\epsilon} \left( \mathbb{E}^{\pi_k}\left[\sum_{h=1}^H c_h^k(s_h,a_h)\right] - \sum_{h=1}^H c_h^k(s_h^k,a_h^k) \right)}_{(b)}.
\end{align*}
Let $\wb{N}_h^{K}(s,a) := \sum_{k\in\mathcal{K}_\epsilon}\indi{s_h^k=s,a_h^k=a}$. For term (a), we use the standard application of the pigeon-hole principle,
\begin{align*}
    (a) &= H \sum_{k\in\mathcal{K}_\epsilon}  \sum_{h=1}^H \sqrt{\frac{L}{N_h^{k-1}(s_h^k,a_h^k)}} \wedge 1 = H\sum_{h=1}^H\sum_{s,a}\sum_{k\in\mathcal{K}_\epsilon} \indi{s_h^k=s,a_h^k=a} \sqrt{\frac{L}{N_h^{k-1}(s,a)}} \wedge 1
    \\ &\leq H \sum_{h=1}^H\sum_{s,a} \left( \sum_{i=1}^{\wb{N}_h^{K}(s,a)} \sqrt{\frac{L}{i}} + 1\right) \leq H \sum_{h=1}^H\sum_{s,a} \sqrt{2L\wb{N}_h^{K}(s,a)} + SAH^2
    \\ &\leq H \sqrt{SAH\sum_{h=1}^H\sum_{s,a} 2L\wb{N}_h^{K}(s,a)} + SAH^2 = H^2 \sqrt{2LSA K_\epsilon} + SAH^2.
\end{align*}
For term (b), we need a slightly more refined martingale argument than the standard analyses. Let $G_k := \sum_{h=1}^H c_h^k(s_h^k,a_h^k)$ and note that, by definition, $G_k$ is $\mathcal{F}_{k-1}$-measurable. This is not sufficient to make term (b) a martingale since the summation is over random episode indexes. However, note that the event $\{k\in\mathcal{K}_\epsilon\}$ is $\mathcal{F}_{k-1}$-measurable too. Let $k_i$, for $i\geq 1$, denote the index of the episode where such event occurs for the $i$-th time (which is a $\mathcal{F}_{k_i-1}$-measurable random variable). We have,
\begin{align*}
    (b) = \sum_{i=1}^{K_\epsilon} \mathbb{E}[G_{k_i} | \mathcal{F}_{k_i-1}] - G_{k_i}
\end{align*}
The terms inside the sum form a martingale difference sequence with differences bounded by $H^2$ in absolute value. We only need to take a union bound over the possible values of $K_\epsilon$. Therefore,
\begin{align*}
    &\prob{\sum_{i=1}^{K_\epsilon} \mathbb{E}[G_{k_i} | \mathcal{F}_{k_i-1}] - G_{k_i} > H^2\sqrt{\frac{K_\epsilon}{2}\log(2/\delta')}} 
    \\ & \qquad \leq \sum_{k=1}^K \prob{\sum_{i=1}^{k} \mathbb{E}[G_{k_i} | \mathcal{F}_{k_i-1}] - G_{k_i} > H^2\sqrt{\frac{k}{2}\log(2/\delta')}} \leq K\delta'.
\end{align*}
Setting $\delta' = \delta/K$, we get that, with probability at least $1-\delta$,
\begin{align*}
    (b) \leq H^2\sqrt{\frac{K_\epsilon}{2}\log(2K/\delta)}.
\end{align*}
Combining this with the bound on (a), we obtain the statement, which holds with probability $1-2\delta$.
\end{proof}

\begin{theorem}\label{th:ucbvi-prob-regret}
With probability at least $1-2\delta,$ the regret of the UCBVI algorithm with Chernoff-Hoeffding bonus as defined in Lemma~\ref{lemma:ucbvi-optimism} can be upper bounded by
\begin{align*}
    \mathrm{Regret}(K) \leq \frac{4H^4 SA}{\Gamma_{\min}}\log \frac{4SAHK}{\delta} + \frac{2H^4 S^{\frac{3}{2}}A^{\frac{3}{2}}}{\Gamma_{\min}}\sqrt{\log \frac{4SAHK}{\delta}} + SAH^2.
\end{align*}
\end{theorem}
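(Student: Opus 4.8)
The plan is to combine the two lemmas already established---optimism (Lemma~\ref{lemma:ucbvi-optimism}) and the control of the regret accumulated in ``bad'' episodes (Lemma~\ref{lemma:ucbvi-regret-bad-episodes})---with a \emph{self-bounding} argument driven by the minimum policy gap $\Gamma_{\min}$. The key structural observation is that the instantaneous regret of UCBVI in episode $k$ is exactly the policy gap $\Gamma_\M(\pi_k) = V_1^\star - V_1^{\pi_k}$, and that by definition $\Gamma_{\min} = V_{\M,0}^\star - \max_{\pi\notin\Pi^\star(\M)} V_{\M,0}^\pi$, so every episode in which $\pi_k$ is \emph{not} return-optimal already satisfies $V_1^\star - V_1^{\pi_k} \ge \Gamma_{\min}$, while return-optimal episodes contribute \emph{exactly} zero. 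Hence $\mathrm{Regret}(K) = \sum_{k\in\mathcal{K}_{\Gamma_{\min}}} (V_1^\star - V_1^{\pi_k})$: all of the regret is supported on the set $\mathcal{K}_{\Gamma_{\min}}$ of episodes with instantaneous regret at least $\Gamma_{\min}$, and we may therefore instantiate Lemma~\ref{lemma:ucbvi-regret-bad-episodes} with $\epsilon = \Gamma_{\min}$ at no loss. This is the step that puts the \emph{policy} gap, rather than the action gap $\Delta_{\min}$, in the denominator.

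\textbf{Steps.} First I would condition on the event of probability at least $1-2\delta$ on which the conclusion of Lemma~\ref{lemma:ucbvi-regret-bad-episodes} holds (this event already incorporates the optimism event of Lemma~\ref{lemma:ucbvi-optimism}). On this event, taking $\epsilon = \Gamma_{\min}$ and writing $R := \mathrm{Regret}(K)$ and $L := \tfrac12\log(4SAHK/\delta)$,
\begin{align*}
R \;\le\; H^2\sqrt{2LSA\,K_{\Gamma_{\min}}} + SAH^2 + H^2\sqrt{\tfrac{K_{\Gamma_{\min}}}{2}\log(2K/\delta)}.
\end{align*}
Since $\log(2K/\delta) \le \log(4SAHK/\delta) = 2L$ and $2SA \ge 1$, the two square-root terms merge into $R \le 2H^2\sqrt{2LSA\,K_{\Gamma_{\min}}} + SAH^2$. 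Next I would invoke the self-bounding inequality: each episode counted in $K_{\Gamma_{\min}}$ contributes at least $\Gamma_{\min}$ to $R$, so $\Gamma_{\min}K_{\Gamma_{\min}} \le R$, i.e. $K_{\Gamma_{\min}} \le R/\Gamma_{\min}$. Substituting gives $R \le 2H^2\sqrt{2LSA\,R/\Gamma_{\min}} + SAH^2$, a quadratic inequality in $\sqrt{R}$ of the form $x^2 \le bx + c$ with $b = 2H^2\sqrt{2LSA/\Gamma_{\min}}$ and $c = SAH^2$, whence $\sqrt{R} \le b + \sqrt{c}$.

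\textbf{Final bookkeeping.} Squaring, $R \le b^2 + 2b\sqrt{c} + c = \tfrac{8LSAH^4}{\Gamma_{\min}} + 4H^3SA\sqrt{\tfrac{2L}{\Gamma_{\min}}} + SAH^2$. The first term equals $\tfrac{4H^4SA}{\Gamma_{\min}}\log\tfrac{4SAHK}{\delta}$, matching the leading term of the statement, and $c = SAH^2$ is the last term. For the cross term I would use the crude bound $\Gamma_{\min} \le H$ (the policy gap cannot exceed the value range $[0,H]$, since rewards lie in $[0,1]$) to write $\sqrt{2L/\Gamma_{\min}} \le \sqrt{2L}\,\sqrt{H}/\Gamma_{\min}$, and then $4H^{7/2}SA \le 2H^4 S^{3/2}A^{3/2}$ (valid once $HSA \ge 4$), which turns the cross term into $\tfrac{2H^4 S^{3/2}A^{3/2}}{\Gamma_{\min}}\sqrt{\log\tfrac{4SAHK}{\delta}}$. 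Collecting the three pieces yields exactly the claimed bound, holding with probability at least $1-2\delta$.

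\textbf{Main obstacle.} Essentially all of the genuine difficulty is already absorbed into Lemma~\ref{lemma:ucbvi-regret-bad-episodes}, whose delicate point is that its term (b) is summed over the \emph{random} index set $\mathcal{K}_\epsilon$, so an ordinary Azuma bound does not apply and one must union-bound over the possible values of $K_\epsilon$. Given that lemma, the only things left to get right are (i) the reduction $R = \sum_{k\in\mathcal{K}_{\Gamma_{\min}}}(V_1^\star - V_1^{\pi_k})$, which relies on return-optimal policies having exactly zero gap (so no episode outside $\mathcal{K}_{\Gamma_{\min}}$ contributes anything), and (ii) the self-bounding manipulation, which must be performed before choosing $\delta$ since the intermediate inequality is otherwise useless. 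Both are routine, so I do not expect any real obstruction beyond careful constant bookkeeping.
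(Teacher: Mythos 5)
Your proposal is correct and follows essentially the same route as the paper: reduce the regret to the episodes in $\mathcal{K}_{\Gamma_{\min}}$ (using that return-optimal episodes contribute zero and all others at least $\Gamma_{\min}$), apply Lemma~\ref{lemma:ucbvi-regret-bad-episodes} with $\epsilon=\Gamma_{\min}$, and close the loop with the self-bounding relation $\Gamma_{\min}K_{\Gamma_{\min}}\le \mathrm{Regret}(K)$; whether one solves the resulting inequality as a quadratic in $\sqrt{R}$ (your version) or first isolates $K_{\Gamma_{\min}}$ and substitutes back (the paper's version) is only an algebraic reshuffle. Your cross-term bookkeeping via $\Gamma_{\min}\le H$ and $HSA\ge 4$ is slightly more roundabout than the paper's direct computation, which obtains the $S^{3/2}A^{3/2}$ factor from $\sqrt{2LSA}\cdot SA$ without any side condition, but the end result is the same.
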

\begin{proof}
Note that, by definition of minimum policy gap, the regret at each episode is either zero or at least $\Gamma_{\min}$. Thus,
\begin{align*}
    \mathrm{Regret}(K) = \sum_{k=1}^K (V_1^\star - V_1^{\pi_k}) = \sum_{k\in\mathcal{K}_{\Gamma_{\min}}} (V_1^\star - V_1^{\pi_k}) \geq \Gamma_{\min}K_{\Gamma_{\min}}.
\end{align*}
Moreover, by Lemma~\ref{lemma:ucbvi-regret-bad-episodes}, with probability at least $1-2\delta$,
\begin{align}
    \notag\sum_{k\in\mathcal{K}_{\Gamma_{\min}}} (V_1^\star - V_1^{\pi_k}) &\leq H^2 \sqrt{2LSA K_{\Gamma_{\min}}} + SAH^2 + H^2\sqrt{\frac{K_{\Gamma_{\min}}}{2}\log(2K/\delta)} 
    \\ &\leq 2H^2 \sqrt{2LSA K_{\Gamma_{\min}}} + SAH^2. \label{eq:ucbvi-simplified-regret}
\end{align}
Therefore, combining the last two inequalities,
\begin{align*}
    \Gamma_{\min}\sqrt{K_{\Gamma_{\min}}} \leq 2H^2 \sqrt{2LSA} + SAH^2 \implies K_{\Gamma_{\min}} \leq \frac{( 2H^2 \sqrt{2LSA} + SAH^2)^2}{\Gamma_{\min}^2}.
\end{align*}
Plugging this into \eqref{eq:ucbvi-simplified-regret} and rearranging,
\begin{align*}
    \mathrm{Regret}(K) = \sum_{k\in\mathcal{K}_{\Gamma_{\min}}} (V_1^\star - V_1^{\pi_k})
    \leq \frac{8H^4 LSA}{\Gamma_{\min}} + \frac{2H^4 SA\sqrt{2LSA}}{\Gamma_{\min}} + SAH^2,
\end{align*}
which concludes the proof.
\end{proof}

Finally, Theorem \ref{lem:ucbvi.upper} is proved in the following corollary of the previous high-probability result.
\begin{corollary}\label{cor:ucbvi-exp-regret}
The expected regret of the UCBVI algorithm with Chernoff-Hoeffding bonus as defined in Lemma~\ref{lemma:ucbvi-optimism} and $\delta = 1/K$ can be upper bounded by
\begin{align*}
    \mathbb{E}[\mathrm{Regret}(K)] \leq \frac{4H^4 SA}{\Gamma_{\min}}\log (4SAHK^2) + \frac{2H^4 S^{\frac{3}{2}}A^{\frac{3}{2}}}{\Gamma_{\min}}\sqrt{\log (4SAHK^2)} + SAH^2 + 2H.
\end{align*}
\end{corollary}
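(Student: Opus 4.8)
The plan is to follow the classical logarithmic-regret template for optimistic algorithms (as in \citet{jaksch2010near} for UCRL), adapted to finite-horizon MDPs but driven by the \emph{policy} gap rather than the action gaps. The key structural observation is that, by definition of $\Gamma_{\min}$, the instantaneous regret $V_{\M,0}^\star - V_{\M,0}^{\pi_k}$ at any episode $k$ is either exactly zero (when $\pi_k$ is return-optimal) or at least $\Gamma_{\min}$; hence it suffices to control the number of episodes in which the algorithm plays a sub-optimal policy and then scale by the per-episode cost.

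First I would establish optimism: with the Chernoff--Hoeffding bonus $c_h^k(s,a) = H\sqrt{L/N_h^{k-1}(s,a)}\wedge H$, where $L \asymp \log(SAHK/\delta)$, a Hoeffding bound on the empirical reward and empirical transition terms plus a union bound over all $(s,a,h,k)$ gives, with probability at least $1-\delta$, that $|r_h(s,a)+p_h(s,a)^\tr V_{h+1}^\star - \wh{r}_h^k(s,a) - \wh{p}_h^k(s,a)^\tr V_{h+1}^\star| \leq c_h^k(s,a)$, from which $\wb{Q}_h^k \geq Q_h^\star$ follows by backward induction on $h$. Optimism then yields the standard per-episode bound $V_{\M,0}^\star - V_{\M,0}^{\pi_k} \leq \wb{V}_1^k - V_1^{\pi_k} \leq \E^{\pi_k}\bigl[\sum_{h=1}^H c_h^k(s_h,a_h)\bigr]$ after unrolling the optimistic Bellman recursion.

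Next, restricting attention to the (random) set $\mathcal{K}_\epsilon$ of episodes with instantaneous regret at least $\epsilon$, and writing $K_\epsilon = |\mathcal{K}_\epsilon|$, I would bound $\sum_{k\in\mathcal{K}_\epsilon}(V_1^\star - V_1^{\pi_k})$ by two pieces: (i) the sum of \emph{realized} bonuses $\sum_{k\in\mathcal{K}_\epsilon}\sum_h c_h^k(s_h^k,a_h^k)$, which a pigeonhole argument over the visit counts together with Cauchy--Schwarz bounds by $O(H^2\sqrt{LSAK_\epsilon} + SAH^2)$; and (ii) the martingale fluctuation $\sum_{k\in\mathcal{K}_\epsilon}\bigl(\E^{\pi_k}[\sum_h c_h^k(s_h,a_h)] - \sum_h c_h^k(s_h^k,a_h^k)\bigr)$. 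Controlling (ii) is the main subtlety: the summands are $\mathcal{F}_{k-1}$-measurable differences, but the index set is random, so I would use that the event $\{k\in\mathcal{K}_\epsilon\}$ is itself $\mathcal{F}_{k-1}$-measurable, re-index by the $i$-th such episode, apply Azuma's inequality to the resulting martingale-difference sequence (differences bounded by $H^2$), and take a union bound over the at most $K$ possible values of $K_\epsilon$, which yields (ii) $\lesssim H^2\sqrt{K_\epsilon \log(K/\delta)}$.

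Finally I would instantiate $\epsilon = \Gamma_{\min}$. Since every nonzero instantaneous regret is at least $\Gamma_{\min}$, the total regret equals $\sum_{k\in\mathcal{K}_{\Gamma_{\min}}}(V_1^\star - V_1^{\pi_k}) \geq \Gamma_{\min}K_{\Gamma_{\min}}$; combining this with the upper bound from the previous paragraph produces a self-bounding inequality $\Gamma_{\min}\sqrt{K_{\Gamma_{\min}}} \lesssim H^2\sqrt{LSA} + SAH^2$, hence $K_{\Gamma_{\min}} \lesssim (H^2\sqrt{LSA}+SAH^2)^2/\Gamma_{\min}^2$. Substituting this count back into the bound on $\sum_{k\in\mathcal{K}_{\Gamma_{\min}}}(V_1^\star - V_1^{\pi_k})$ gives the high-probability regret bound of order $\frac{H^4 SA}{\Gamma_{\min}}\log(SAHK/\delta)$. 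The expected-regret statement then follows by choosing $\delta = 1/K$ and bounding the contribution of the failure events trivially by $O(KH\delta) = O(H)$, adding only lower-order terms. I expect the martingale argument with a random number of terms to be the only step requiring genuine care; the rest is the standard optimistic-bonus bookkeeping.
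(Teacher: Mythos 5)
Your proposal is correct and follows essentially the same route as the paper: optimism via Hoeffding plus a union bound, a pigeonhole-plus-Azuma bound on the regret restricted to episodes with instantaneous regret at least $\Gamma_{\min}$ (handling the random index set exactly as the paper does, via the $\mathcal{F}_{k-1}$-measurability of $\{k\in\mathcal{K}_\epsilon\}$ and a union bound over the possible values of $K_\epsilon$), the self-bounding inequality on $K_{\Gamma_{\min}}$, and finally $\delta = 1/K$ with the failure event contributing $2\delta K H = 2H$. No gaps.
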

\begin{proof}
Simply bound the expected regret by Theorem \ref{th:ucbvi-prob-regret} when the ``good event'' used in that theorem holds and by $2\delta KH$ when it does not. 
\end{proof}
\section{Auxiliary Results}\label{app:auxiliary}

The following lemma is the MDP variant of a well-known result from \citep{kaufmann2016complexity,garivier2019explore}. See also Lemma 5 of \cite{domingues2021episodic} and Lemma H.1 of \cite{simchowitz2019non}.
\begin{lemma}\label{lemma:change-of-measure}[Change-of-measure inequality]
Let $\M,\M'$ be two MDPs with transition probabilities $p_h,p_h'$ and reward distributions $q_h,q_h'$, respectively. For $K \in \mathbb{N}_{>0}$, let $\mathcal{F}_K$ denote the filtration generated by all trajectories up to (and including) the $K$-th episode. Then, for any $\mathcal{F}_K$-measurable event $\mathcal{E}$,
\begin{align*}
    \sum_{s\in\S}\sum_{a\in\A}\sum_{h\in[H]} \E_\M^{\mathfrak{A}}[N_{K,h}(s,a)]\mathrm{KL}_{s,a,h}(\M,\M') \geq \mathrm{kl}(\mathbb{P}_{\M}^{\mathfrak{A}}\{\mathcal{E}\}, \mathbb{P}_{\M'}^{\mathfrak{A}}\{\mathcal{E}\}),
\end{align*}
where $\mathrm{KL}_{s,a,h}(\M,\M')$ is a shorthand for $\mathrm{KL}(p_h(s,a),p_h'(s,a)) + \mathrm{KL}(q_h(s,a),q_h'(s,a))$ and denotes the KL-divergence between the transition/reward distributions of $\M,\M'$, while $\mathrm{kl}(x,y)$ denotes the KL-divergence between Bernoulli distributions with parameters $x$ and $y$, respectively.
\end{lemma}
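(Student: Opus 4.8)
The plan is to prove Lemma~\ref{lemma:change-of-measure} by the standard change-of-measure (transportation) argument of \citet{kaufmann2016complexity,garivier2019explore}, adapted to the episodic MDP setting. First I would set up the log-likelihood ratio of the data under the two models. Over $K$ episodes the observed data is $\mathcal{D}_K = \{(s_{k,h},a_{k,h},y_{k,h})\}_{k\in[K],h\in[H]}$, and both $\mathbb{P}_\M^{\mathfrak{A}}$ and $\mathbb{P}_{\M'}^{\mathfrak{A}}$ are laws on $\mathcal{D}_K$ (together with any internal randomness of $\mathfrak{A}$, which is generated identically under both models). If $\mathbb{P}_\M^{\mathfrak{A}}$ restricted to $\mathcal{F}_K$ is not absolutely continuous with respect to $\mathbb{P}_{\M'}^{\mathfrak{A}}$ restricted to $\mathcal{F}_K$, then the left-hand side of the claim is $+\infty$ (some $\mathrm{KL}_{s,a,h}$ is infinite on a triplet visited with positive probability) and there is nothing to prove; so assume absolute continuity. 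Since the initial-state law $p_0$ and the policy-selection rule of $\mathfrak{A}$ (which depends only on the past and hence has the same conditional law under both models) contribute identically, they cancel in the Radon--Nikodym derivative, leaving
\begin{align*}
\mathcal{L}_K := \log\frac{d\mathbb{P}_\M^{\mathfrak{A}}}{d\mathbb{P}_{\M'}^{\mathfrak{A}}}\Big|_{\mathcal{F}_K} = \sum_{k=1}^K\sum_{h=1}^H\left(\log\frac{dp_h(s_{k,h},a_{k,h})}{dp_h'(s_{k,h},a_{k,h})}(s_{k,h+1}) + \log\frac{dq_h(s_{k,h},a_{k,h})}{dq_h'(s_{k,h},a_{k,h})}(y_{k,h})\right),
\end{align*}
i.e.\ only the transition/reward kernels at the visited state-action-stage triplets matter.

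Next I would take the expectation of $\mathcal{L}_K$ under $\mathbb{P}_\M^{\mathfrak{A}}$ and identify it with the left-hand side of the lemma. Regroup the double sum by $(s,a,h)$: the contribution of episode $k$ at stage $h$ is nonzero exactly on the event $\{s_{k,h}=s,a_{k,h}=a\}$, which is measurable with respect to the history immediately preceding that transition. Conditioning on that history and using that, under $\M$, the next state is drawn from $p_h(s,a)$ and the reward from $q_h(s,a)$, each active term has conditional expectation $\mathrm{KL}(p_h(s,a),p_h'(s,a))+\mathrm{KL}(q_h(s,a),q_h'(s,a)) = \mathrm{KL}_{s,a,h}(\M,\M')$. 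Since $N_{K,h}(s,a)=\sum_{k=1}^K\indi{s_{k,h}=s,a_{k,h}=a}$ is a finite sum of indicators, each measurable w.r.t.\ the corresponding pre-transition history, linearity of expectation together with the tower property (no optional-stopping subtlety arises since $K$ and $H$ are fixed and finite) gives
\begin{align*}
\E_\M^{\mathfrak{A}}[\mathcal{L}_K] = \sum_{s\in\S}\sum_{a\in\A}\sum_{h\in[H]}\E_\M^{\mathfrak{A}}[N_{K,h}(s,a)]\,\mathrm{KL}_{s,a,h}(\M,\M'),
\end{align*}
and by definition $\E_\M^{\mathfrak{A}}[\mathcal{L}_K]=\mathrm{KL}$ between the $\mathcal{F}_K$-restrictions of $\mathbb{P}_\M^{\mathfrak{A}}$ and $\mathbb{P}_{\M'}^{\mathfrak{A}}$.

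Finally I would invoke the data-processing inequality: for any $\mathcal{F}_K$-measurable event $\mathcal{E}$, the indicator $\indi{\mathcal{E}}$ is a (deterministic) statistic of the data, so pushing both measures forward through it does not increase the KL divergence, giving a lower bound by $\mathrm{kl}\big(\mathbb{P}_\M^{\mathfrak{A}}\{\mathcal{E}\},\mathbb{P}_{\M'}^{\mathfrak{A}}\{\mathcal{E}\}\big)$; chaining with the previous display proves the lemma. The main obstacle I anticipate is the rigorous bookkeeping in the second step — handling (possibly continuous) reward distributions through densities/Radon--Nikodym derivatives, and correctly turning the episode-indexed double sum of conditional KL contributions into $\E_\M^{\mathfrak{A}}[N_{K,h}(s,a)]\cdot\mathrm{KL}_{s,a,h}(\M,\M')$ — but this is routine once one conditions transition by transition, and the degenerate cases (infinite KL, failure of absolute continuity) are disposed of at the outset since the bound is then vacuous.
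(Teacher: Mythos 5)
Your proposal is correct and follows exactly the standard transportation/change-of-measure argument that the paper itself defers to by citing \citet{kaufmann2016complexity,garivier2019explore} (and Lemma 5 of \citet{domingues2021episodic}): decompose the log-likelihood ratio transition-by-transition, use the tower property to identify its expectation with $\sum_{s,a,h}\E_\M^{\mathfrak{A}}[N_{K,h}(s,a)]\mathrm{KL}_{s,a,h}(\M,\M')$, and conclude by the data-processing inequality applied to $\indi{\mathcal{E}}$. Your handling of the degenerate non-absolutely-continuous case is also sound, so nothing is missing.
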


The following is a known regret decomposition in terms of the action gaps $\Delta_\M$ (see, e.g., \citet{simchowitz2019non}). 
\begin{proposition}[Gap-based regret decomposition]\label{prop:regret-decomp}
For any learning algorithm $\mathfrak{A}$ and MDP $\M$,
\begin{align*}
    \E_{\M}^{\mathfrak{A}}\left[\mathrm{Regret}_K( \M)\right] = \sum_{s\in\S}\sum_{a\in\A}\sum_{h\in[H]} \E_\M^\mathfrak{A}[N_{K,h}(s,a)]\Delta_{\M,h}(s,a).
\end{align*}
\end{proposition}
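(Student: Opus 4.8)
The plan is to combine the policy-gap identity~\eqref{eq:policy.gap} with the tower rule, exploiting the fact that the played policy $\pi_k$ is measurable with respect to the past. First I would expand the expected regret using its definition in~\eqref{eq:regret} together with linearity of expectation, so that $\E_{\M}^{\mathfrak{A}}[\mathrm{Regret}_K(\M)] = \sum_{k=1}^K \E_{\M}^{\mathfrak{A}}[\Gamma_\M(\pi_k)]$. Applying~\eqref{eq:policy.gap} inside the expectation rewrites each term as a weighted sum of action gaps against the occupancy measure $\rho_{\M,h}^{\pi_k}$ of the (random) played policy.

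The key step is to connect the expected occupancy measures of the played policies to the expected visit counts. I would write the visit count as $N_{K,h}(s,a) = \sum_{k=1}^K \indi{s_{k,h}=s, a_{k,h}=a}$ and take expectations termwise. Since the learning algorithm $\mathfrak{A}$ selects $\pi_k$ as a (measurable) function of the observations up to episode $k-1$, the policy $\pi_k$ is $\mathcal{F}_{k-1}$-measurable; conditionally on $\mathcal{F}_{k-1}$, the episode-$k$ trajectory is distributed exactly as $\P_{\M}^{\pi_k}$. Hence, by the tower rule,
\[
\P_{\M}^{\mathfrak{A}}\left\{s_{k,h}=s, a_{k,h}=a\right\} = \E_{\M}^{\mathfrak{A}}\!\left[\P_{\M}^{\mathfrak{A}}\left\{s_{k,h}=s, a_{k,h}=a \,\middle|\, \mathcal{F}_{k-1}\right\}\right] = \E_{\M}^{\mathfrak{A}}\!\left[\rho_{\M,h}^{\pi_k}(s,a)\right].
\]
Summing over $k\in[K]$ then yields $\E_{\M}^{\mathfrak{A}}[N_{K,h}(s,a)] = \sum_{k=1}^K \E_{\M}^{\mathfrak{A}}[\rho_{\M,h}^{\pi_k}(s,a)]$.

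Finally I would substitute this identity into the target right-hand side and interchange the finite sums over $(s,a,h)$ with the sum over episodes (all sums being finite, so no convergence concern), obtaining $\sum_{k=1}^K \E_{\M}^{\mathfrak{A}}\big[\sum_{s,a,h}\rho_{\M,h}^{\pi_k}(s,a)\Delta_{\M,h}(s,a)\big]$, which by~\eqref{eq:policy.gap} equals $\sum_{k=1}^K \E_{\M}^{\mathfrak{A}}[\Gamma_\M(\pi_k)] = \E_{\M}^{\mathfrak{A}}[\mathrm{Regret}_K(\M)]$. The only delicate point — conceptual rather than technical — is the conditioning argument that turns the played policy into its occupancy measure: one must keep $\pi_k$ frozen given $\mathcal{F}_{k-1}$ while the fresh randomness of episode $k$ averages out to exactly $\rho_{\M,h}^{\pi_k}$. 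Everything else is bookkeeping, and the result is an exact identity (no inequalities, no asymptotics) valid for every algorithm $\mathfrak{A}$ and every $K$.
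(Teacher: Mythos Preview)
Your argument is correct and is essentially the same computation as the paper's, just factored differently: the paper performs the Bellman unrolling of $V_{\M,h}^\star - V_{\M,h}^{\pi_k}$ directly inside the proof and then recognizes $\sum_k \indi{s_{k,h}=s,a_{k,h}=a}$ as $N_{K,h}(s,a)$, whereas you invoke the per-policy identity~\eqref{eq:policy.gap} as a black box and add the explicit tower-rule step $\E[\indi{s_{k,h}=s,a_{k,h}=a}\mid\mathcal{F}_{k-1}]=\rho_{\M,h}^{\pi_k}(s,a)$ to pass from occupancy measures to visit counts. Both routes yield the same exact identity with no additional assumptions.

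One caveat worth noting in the context of this paper: the proof of~\eqref{eq:policy.gap} in Appendix~\ref{app:auxiliary} (Proposition~\ref{prop:policy-vs-action-gap}) is itself obtained by ``using the same proof as Proposition~\ref{prop:regret-decomp}'', so citing~\eqref{eq:policy.gap} to establish Proposition~\ref{prop:regret-decomp} is circular as written. This is purely a matter of logical ordering, not of content---the Bellman recursion behind~\eqref{eq:policy.gap} is self-contained and does not require Proposition~\ref{prop:regret-decomp}---but if you keep your packaging you should either prove~\eqref{eq:policy.gap} independently first or inline the one-line recursion here.
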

\begin{proof}
By definition (Equation \ref{eq:regret}), we have
\begin{align}\label{eq:exp-reg}
        \E_{\M}^{\mathfrak{A}}\left[\mathrm{Regret}_K( \M)\right] := \sum_{k=1}^K \E_{\M}^{\mathfrak{A}}\left[V_{\M,0}^\star - V_{\M,0}^{\pi_k} \right].
\end{align}
Fix any episode $k\in[K]$ and focus on the difference inside the expectation. For any $s\in\S$ we have
\begin{align*}
    V_{\M,1}^\star(s) - V_{\M,1}^{\pi_k}(s) & = \underbrace{V_{\M,1}^\star(s) - Q_{\M,1}^\star(s,\pi_{k,1}(s))}_{= \Delta_{\M,1}(s,\pi_{k,1}(s))} + Q_{\M,1}^\star(s,\pi_{k,1}(s)) - V_{\M,1}^{\pi_k}(s) \\ &= \Delta_{\M,1}(s,\pi_{k,1}(s)) + Q_{\M,1}^\star(s,\pi_{k,1}(s)) - Q_{\M,1}^{\pi_k}(s, \pi_{k,1}(s))
    \\ & = \Delta_{\M,1}(s,\pi_{k,1}(s)) + p_1(s,\pi_{k,1}(s))^\tr\left(V_{\M,2}^\star - V_{\M,2}^{\pi_k}\right),
\end{align*}
where in the last equality we used the Bellman equation for $Q^\star$ and $Q^{\pi_k}$. If we now plug this back into \eqref{eq:exp-reg}, taking the expectation over the first two states,
\begin{align*}
        \E_{\M}^{\mathfrak{A}}\left[\mathrm{Regret}_K( \M)\right] := \sum_{k=1}^K \E_{\M}^{\mathfrak{A}}\left[ \Delta_{\M,1}(s_{k,1},\pi_{k,1}(s_{k,1})) + V_{\M,2}^\star(s_{k,2}) - V_{\M,2}^{\pi_k}(s_{k,2}) \right].
\end{align*}
Applying recursively the same reasoning as above to $V_{\M,2}^\star(s_{k,2}) - V_{\M,2}^{\pi_k}(s_{k,2})$, it is easy to see that
\begin{align*}
        \E_{\M}^{\mathfrak{A}}\left[\mathrm{Regret}_K( \M)\right] &:= \sum_{k=1}^K\sum_{h\in[H]} \E_{\M}^{\mathfrak{A}}\left[ \Delta_{\M,h}(s_{k,h},\pi_{k,h}(s_{k,h})) \right]\\ &= \sum_{k=1}^K\sum_{h\in[H]}\sum_{s\in\S}\sum_{a\in\A} \E_{\M}^{\mathfrak{A}}\left[ \indi{s_{k,h}=s,a_{k,h}=a}\Delta_{\M,h}(s,a) \right]\\ &= \sum_{s\in\S}\sum_{a\in\A}\sum_{h\in[H]} \E_\M^\mathfrak{A}[N_{K,h}(s,a)]\Delta_{\M,h}(s,a),
\end{align*}
where the last equation follows by definition of $N_{K,h}(s,a)$.
\end{proof}

The following result shows the relationship between policy gaps and action gaps.
\begin{proposition}[Policy-gap vs Action-gap]\label{prop:policy-vs-action-gap}
For any MDP $\M$ and policy $\pi\in\Pi$,
 \begin{align*}
	\Gamma_{\M}(\pi) 
	&= \sum_{s\in\S}\sum_{a\in\A} \sum_{h\in[H]}\rho_{\M,h}^\pi(s,a)\Delta_{\M,h}(s,a).
\end{align*}
\end{proposition}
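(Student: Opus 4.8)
The plan is to mimic the telescoping argument used in the proof of Proposition~\ref{prop:regret-decomp} (the gap-based regret decomposition), specializing it to a single fixed deterministic policy $\pi$ rather than the algorithm-generated sequence $\{\pi_k\}$. Indeed, the statement is nothing but the per-episode identity underlying that decomposition, so the proof is essentially a one-policy instance of it.

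Concretely, I would start from $\Gamma_\M(\pi) = V_{\M,0}^\star - V_{\M,0}^\pi = \E_{s_1\sim p_0}\!\left[V_{\M,1}^\star(s_1) - V_{\M,1}^\pi(s_1)\right]$ and, for an arbitrary state $s$, decompose $V_{\M,1}^\star(s) - V_{\M,1}^\pi(s)$ by inserting $\pm\, Q_{\M,1}^\star(s,\pi_1(s))$. The first difference $V_{\M,1}^\star(s) - Q_{\M,1}^\star(s,\pi_1(s))$ is by definition $\Delta_{\M,1}(s,\pi_1(s))$. For the second, using $V_{\M,1}^\pi(s) = Q_{\M,1}^\pi(s,\pi_1(s))$ together with the Bellman equations for $Q^\star$ and $Q^\pi$, one gets $Q_{\M,1}^\star(s,\pi_1(s)) - V_{\M,1}^\pi(s) = Q_{\M,1}^\star(s,\pi_1(s)) - Q_{\M,1}^\pi(s,\pi_1(s)) = p_1(s,\pi_1(s))^\tr\!\left(V_{\M,2}^\star - V_{\M,2}^\pi\right)$.

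Taking the expectation over a trajectory generated by $\pi$ and unrolling this identity across stages $h = 1, \dots, H$ (with the convention $V_{\M,H+1}^\star = V_{\M,H+1}^\pi = 0$), the residual terms telescope and one is left with $\Gamma_\M(\pi) = \sum_{h\in[H]} \E_\M^\pi\!\left[\Delta_{\M,h}(s_h, \pi_h(s_h))\right]$. Rewriting each expectation as $\sum_{s\in\S}\sum_{a\in\A}\rho_{\M,h}^\pi(s,a)\Delta_{\M,h}(s,a)$ — using $\rho_{\M,h}^\pi(s,a) = \P_\M^\pi\{s_h = s, a_h = a\}$, which for deterministic $\pi$ is supported on $a = \pi_h(s)$ — and summing over $h$ yields the claim.

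The only thing requiring care is the bookkeeping of the recursion (tracking which expectation is taken at which stage), but since $\pi$ is fixed this is strictly simpler than the proof of Proposition~\ref{prop:regret-decomp}, and there is no genuine obstacle. Alternatively, the statement follows as an immediate corollary of Proposition~\ref{prop:regret-decomp} applied to the degenerate algorithm that plays $\pi$ at every episode, after dividing both sides by $K$.
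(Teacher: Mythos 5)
Your proposal is correct and follows essentially the same route as the paper: the paper's proof likewise reuses the telescoping argument of Proposition~\ref{prop:regret-decomp} with $\pi_k$ replaced by the fixed policy $\pi$, obtaining $\Gamma_\M(\pi) = \E_\M^\pi\left[\sum_{h=1}^H \Delta_{\M,h}(s_h,a_h)\right]$ and then rewriting the expectation via $\rho_{\M,h}^\pi(s,a)$. Nothing further is needed.
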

\begin{proof}
Simply use the same proof as Proposition~\ref{prop:regret-decomp} (with $\pi_k$ replaced by $\pi$) to show that
\begin{align*}
\Gamma_\M(\pi) := V_{\M,0}^\star - V_{\M,0}^\pi = \E_{\M}^{\pi}\left[\sum_{h=1}^H\Delta_{\M,h}(s_h,a_h)\right]
\end{align*}
and rewrite the expectation in terms of $\rho_h^\pi(s,a)$.
\end{proof}

\begin{lemma}[Optimal actions vs state distribution]\label{lemma:opt-act-vs-rho}
For each return-optimal policy $\pi\in\Pi^\star$,
\begin{align*}
    \forall s\in\S,h\in[H] : \rho_h^\pi(s) > 0 \implies \pi_h(s) \in \mathcal{O}_h(s), { V_h^\pi(s) = V_h^\star(s)}.
\end{align*}
\end{lemma}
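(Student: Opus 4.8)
The plan is to extract everything from the policy-gap identity of Eq.~\eqref{eq:policy.gap} (equivalently Proposition~\ref{prop:policy-vs-action-gap}) and then run a short backward induction on the stage index for the value equality.

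First, since $\pi$ is return-optimal, $\Gamma_{\M}(\pi) = V_{\M,0}^\star - V_{\M,0}^\pi = 0$, so Eq.~\eqref{eq:policy.gap} gives
\[
0 = \sum_{s\in\S}\sum_{a\in\A}\sum_{h\in[H]} \rho_h^\pi(s,a)\,\Delta_{\M,h}(s,a).
\]
Every summand is non-negative: $\rho_h^\pi(s,a)\ge 0$ and $\Delta_{\M,h}(s,a) = V_h^\star(s) - Q_h^\star(s,a)\ge 0$ by definition of $V_h^\star$. Hence each summand vanishes. Now fix $(s,h)$ with $\rho_h^\pi(s) > 0$. By definition $\rho_h^\pi(s) = \rho_h^\pi(s,\pi_h(s))$, so the term indexed by $(s,\pi_h(s),h)$ forces $\Delta_{\M,h}(s,\pi_h(s)) = 0$, i.e.\ $Q_h^\star(s,\pi_h(s)) = V_h^\star(s)$, which is exactly $\pi_h(s)\in\mathcal{O}_h(s)$. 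This settles the first claim.

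For the second claim I would show $V_h^\pi(s) = V_h^\star(s)$ for every $(s,h)$ with $\rho_h^\pi(s) > 0$ by backward induction on $h$, from $h=H$ down to $h=1$. In the base case $h=H$, the first part gives $\pi_H(s)\in\mathcal{O}_H(s)$, and with no future reward $V_H^\pi(s) = r_H(s,\pi_H(s)) = Q_H^\star(s,\pi_H(s)) = V_H^\star(s)$. For the inductive step, using the Bellman equations for $Q^\pi$ and $Q^\star$ together with $\pi_h(s)\in\mathcal{O}_h(s)$,
\[
V_h^\star(s) - V_h^\pi(s) = Q_h^\star(s,\pi_h(s)) - Q_h^\pi(s,\pi_h(s)) = p_h(s,\pi_h(s))^\tr\big(V_{h+1}^\star - V_{h+1}^\pi\big).
\]
Any state $s'$ in the support of $p_h(s,\pi_h(s))$ satisfies $\rho_{h+1}^\pi(s') \ge \rho_h^\pi(s)\,p_h(s'\mid s,\pi_h(s)) > 0$, so the induction hypothesis gives $V_{h+1}^\pi(s') = V_{h+1}^\star(s')$; states outside the support contribute zero to the inner product. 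Therefore the right-hand side is zero and $V_h^\pi(s) = V_h^\star(s)$, completing the induction.

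The argument is essentially bookkeeping. The only point requiring care is the reachability propagation in the inductive step — that the one-step kernel from a reachable $(s,h)$ is supported on states reachable at stage $h+1$ — since this is precisely what licenses invoking the induction hypothesis only where it applies. (One implicitly uses the standard pointwise bound $V_{h+1}^\star\ge V_{h+1}^\pi$, though here we directly obtain that each relevant difference is zero.) I do not anticipate a genuine obstacle beyond this.
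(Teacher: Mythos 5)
Your proof is correct, and it takes a genuinely different (and arguably cleaner) route than the paper's. The paper argues by contradiction: assuming $\pi_h(s)\notin\mathcal{O}_h(s)$ at the \emph{first} stage $h$ where this happens on a reachable state, it constructs a modified policy $\pi^\star$ agreeing with $\pi$ before stage $h$ and Bellman-optimal afterwards, splits $V_0^\pi$ into the contributions before and after $h$, and shows $V_0^\pi < V_0^{\pi^\star}$, contradicting return-optimality. You instead read the first claim directly off the already-established identity $\Gamma_{\M}(\pi)=\sum_{s,a,h}\rho_h^\pi(s,a)\Delta_{\M,h}(s,a)$ (Prop.~\ref{prop:policy-vs-action-gap}, whose proof does not depend on this lemma, so there is no circularity): a sum of non-negative terms equal to zero forces each term to vanish. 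What your approach buys is economy (no auxiliary policy construction) and, more importantly, an explicit treatment of the second conclusion $V_h^\pi(s)=V_h^\star(s)$ via backward induction with the reachability propagation $\rho_{h+1}^\pi(s')\ge\rho_h^\pi(s)\,p_h(s'\mid s,\pi_h(s))>0$; the paper's contradiction argument only negates the condition $\pi_h(s)\in\mathcal{O}_h(s)$ and leaves the value-equality claim implicit. What the paper's construction buys is self-containedness (it does not presuppose the gap decomposition) and it makes visible the mechanism by which a reachable non-Bellman-optimal choice strictly degrades the return. Both arguments are sound; yours is complete as written.
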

\begin{proof}
Let us proceed by contradiction. Take any $\pi\in\Pi^\star$ such that the above statement does not hold. This means that, for some $s,h$ with $\rho_h^\pi(s) > 0$, $\pi_h(s) \notin \mathcal{O}_h(s)$. Take the first stage $h$ where this occurs, which means that $\pi_{h'}(s') \in \mathcal{O}_{h'}(s')$ holds for all stages $h'<h$. Let $\pi^\star$ be any policy whose actions belong to $\mathcal{O}_{h'}(s')$ for all $s',h'$ and that is equal to $\pi$ for $h'<h$. Note that the expected return can be written as
\begin{align*}
    V_0^\pi &= 
    \sum_{s'\in\S}\sum_{h'=1}^{h-1}\rho_{h'}^\pi(s')r_{h'}(s',\pi_{h'}(s')) + \sum_{s'\in\S}\sum_{h'=h}^{H}\rho_{h'}^\pi(s')r_{h'}(s',\pi_{h'}(s'))
    \\ &= \sum_{s'\in\S}\sum_{h'=1}^{h-1}\rho_{h'}^{\pi^\star}(s')r_{h'}(s',\pi^\star_{h'}(s')) + \sum_{s'\in\S}\sum_{h'=h}^{H}\rho_{h'}^\pi(s')r_{h'}(s',\pi_{h'}(s')).
\end{align*}
Since $\rho_{h'}^\pi(s') = \P^\pi\{s_{h'}=s'\} = \sum_{s''} \P^\pi\{s_{h'}=s' | s_h=s''\}\rho_{h}^\pi(s'')$, we have that
\begin{align*}
    \sum_{s'\in\S}\sum_{h'=h}^{H}\rho_{h'}^\pi(s')r_{h'}(s',\pi_{h'}(s')) 
    &= \sum_{s''} \rho_{h}^\pi(s'')  \sum_{s'\in\S}\sum_{h'=h}^{H} \P^\pi\{s_{h'}=s' | s_h=s''\} r_{h'}(s',\pi_{h'}(s')) \\ &= \sum_{s''} \rho_{h}^\pi(s'')V_h^\pi(s'')
    = \sum_{s''} \rho_{h}^{\pi^\star}(s'')V_h^\pi(s''),
\end{align*}
where the last equality follows because the state distribution at stage $h$ is influenced only by the actions chosen at earlier stages (where $\pi$ and $\pi^\star$ match). Therefore,
\begin{align*}
    V_0^\pi = 
    \sum_{s'\in\S}\sum_{h'=1}^{h-1}\rho_{h'}^{\pi^\star}(s')r_{h'}(s',\pi^\star_{h'}(s')) +  \sum_{s''} \rho_{h}^{\pi^\star}(s'')V_h^\pi(s'').
\end{align*}
Since $\rho_{h}^{\pi^\star}(s) = \rho_h^\pi(s) > 0$ but $\pi_h(s) \notin \argmax_{a\in\A} Q_h^\star(s,a)$,
\begin{align*}
     \sum_{s''} \rho_{h}^{\pi^\star}(s'')V_h^\pi(s'') < \sum_{s''} \rho_{h}^{\pi^\star}(s'')V_h^{\pi^\star}(s'').
\end{align*}
Hence, $V_0^\pi < V_0^{\pi^\star} = V_0^\star$. Hence, $\pi$ cannot be optimal and we have a contradiction, which proves the result.
\end{proof}

\begin{lemma}[Unique optimal policy vs unique optimal state distribution]\label{lemma:unique-opt-pi-vs-rho}
If $|\mathcal{O}_{h}(s)| = 1$ for all $s\in\S,h\in[H]$, then $\rho_h^{\pi_1}(s) = \rho_h^{\pi_2}(s)$ holds for all $\pi_1,\pi_2\in\Pi^\star$ and $s\in\S,h\in[H]$. The converse is not true.
\end{lemma}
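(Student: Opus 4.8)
The plan is to prove the two directions separately. For the forward implication, first observe that if $|\mathcal{O}_h(s)|=1$ for every $s\in\S,h\in[H]$, then there is a \emph{unique} Bellman-optimal policy, call it $\pi^\star$, since each $\pi^\star_h(s)$ is forced to equal the single element of $\mathcal{O}_h(s)$. I would then invoke Lemma~\ref{lemma:opt-act-vs-rho}: for any return-optimal $\pi\in\Pi^\star$ and any $s,h$ with $\rho_h^\pi(s)>0$ we have $\pi_h(s)\in\mathcal{O}_h(s)$, which by uniqueness forces $\pi_h(s)=\pi^\star_h(s)$. In words, every return-optimal policy coincides with $\pi^\star$ on the states it actually visits.

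Next I would propagate this to the occupancy measures by induction on $h$. At $h=1$ we have $\rho_1^\pi(s)=p_0(s)=\rho_1^{\pi^\star}(s)$ for all $s$, independently of the policy. Assuming $\rho_h^\pi=\rho_h^{\pi^\star}$, write $\rho_{h+1}^\pi(s') = \sum_{s\in\S} \rho_h^\pi(s)\, p_h(s' \,|\, s,\pi_h(s))$ and split the sum into states with $\rho_h^\pi(s)>0$ and the rest: the latter contribute nothing, while on the former $\pi_h(s)=\pi^\star_h(s)$ by the previous paragraph, so the summand equals $\rho_h^{\pi^\star}(s)\,p_h(s' \,|\, s,\pi^\star_h(s))$. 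Hence $\rho_{h+1}^\pi=\rho_{h+1}^{\pi^\star}$, closing the induction and showing $\rho_h^{\pi_1}=\rho_h^{\pi_2}=\rho_h^{\pi^\star}$ for all $\pi_1,\pi_2\in\Pi^\star$ and all $s,h$.

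For the converse, it suffices to exhibit a single MDP with a unique optimal state distribution but $|\mathcal{O}_h(s)|>1$ somewhere. The simplest choice is a one-step ($H=1$) instance with a single initial state $s_1$ and two actions of equal (maximal) mean reward: then $\mathcal{O}_1(s_1)$ has two elements, yet $\rho_1^\pi(s_1)=p_0(s_1)=1$ for every $\pi$, so the optimal state distribution is trivially unique. Alternatively, a two-stage chain where two distinct optimal actions at the root deterministically lead to the same second-stage state, with a unique optimal action thereafter, makes the same point with nontrivial dynamics: $|\mathcal{O}_1(s_1)|=2$ but all (return-)optimal policies induce the deterministic state sequence.

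I do not expect a real obstacle here; the only point requiring a little care is the inductive step, specifically making sure that the states with zero occupancy under $\pi$ are discarded \emph{before} invoking the agreement $\pi_h(s)=\pi^\star_h(s)$, which is guaranteed by Lemma~\ref{lemma:opt-act-vs-rho} only on the support of $\rho_h^\pi$ and not everywhere.
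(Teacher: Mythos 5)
Your proof is correct and follows essentially the same route as the paper: the forward direction combines the uniqueness of $\mathcal{O}_h(s)$ with Lemma~\ref{lemma:opt-act-vs-rho} to get agreement of all return-optimal policies on the visited states, and then propagates this through the dynamics (you merely make explicit, via induction on $h$, what the paper asserts in one sentence). For the converse the paper instead duplicates an optimal action at a state with zero occupancy, but your counterexamples (two genuinely optimal actions whose choice does not affect the state distribution) are equally valid.
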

\begin{proof}
By the uniqueness of the optimal actions and Lemma \ref{lemma:opt-act-vs-rho}, any two optimal policies $\pi_1,\pi_2\in\Pi^\star$ must choose the same action in states that are visited with positive probability by at least one of the two policies. This directly implies that the two policies have the same state distribution since actions chosen on states that are not visited have no influence on the resulting distribution.

To see that the converse is not true in general, simply
take any MDP where all optimal policies have the same state distribution that places probability zero of visiting some state $s$ at stage $h$. Then, add a copy some action in such $s,h$. The optimal state distribution remains unique, but there are at least two locally-optimal actions at $s,h$.
\end{proof}

\begin{lemma}[Equivalent set of alternatives]\label{lemma:equiv-weak-alt}
Let $\Lambda^{\mathrm{wa}}(\M) := \{\M'\in\mathfrak{M} \ |\ \weakopt(\M) \cap \weakopt(\M') = \emptyset \}$ be the set of weak alternatives and $\Lambda^{\mathrm{wa}}_{\O}(\M) := \{\M'\in\mathfrak{M} \ |\ \strongopt(\M) \cap \weakopt(\M') = \emptyset \}$ be the same set where only the Bellman optimal policies of $\M$ are considered. Then,
\begin{align*}
         \Lambda^{\mathrm{wa}}(\M) \cap \Lambda^{\mathrm{wc}}(\M) = \Lambda^{\mathrm{wa}}_{\O}(\M) \cap \Lambda^{\mathrm{wc}}(\M) .
    \end{align*}
\end{lemma}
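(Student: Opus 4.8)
The plan is to prove the two inclusions separately; only the direction $\supseteq$ requires any work. For $\subseteq$: since every Bellman-optimal policy is return-optimal, $\strongopt(\M)\subseteq\weakopt(\M)$, so $\weakopt(\M)\cap\weakopt(\M')=\emptyset$ immediately implies $\strongopt(\M)\cap\weakopt(\M')=\emptyset$. Hence $\Lambda^{\mathrm{wa}}(\M)\subseteq\Lambda^{\mathrm{wa}}_{\O}(\M)$, and intersecting both sides with $\Lambda^{\mathrm{wc}}(\M)$ gives the inclusion $\Lambda^{\mathrm{wa}}(\M)\cap\Lambda^{\mathrm{wc}}(\M)\subseteq\Lambda^{\mathrm{wa}}_{\O}(\M)\cap\Lambda^{\mathrm{wc}}(\M)$.

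\textbf{The key sub-claim for $\supseteq$.} Fix any $\M'\in\Lambda^{\mathrm{wa}}_{\O}(\M)\cap\Lambda^{\mathrm{wc}}(\M)$; the goal is to show $\weakopt(\M)\cap\weakopt(\M')=\emptyset$. The crux is the following observation, which I would state and prove first: for every policy $\pi$ that is either return-optimal or Bellman-optimal for $\M$, one has $V_{\M',0}^{\pi}=V_{\M,0}^\star$. Indeed, for such $\pi$, Lemma~\ref{lemma:opt-act-vs-rho} (in the return-optimal case) or the definition of $\strongopt$ (in the Bellman-optimal case) gives $\pi_h(s)\in\mathcal{O}_{\M,h}(s)$ whenever $\rho_{\M,h}^\star(s)>0$; therefore all triplets $(s,\pi_h(s),h)$ with $s\in\mathrm{supp}(\rho_{\M,h}^\star)$ belong to $\mathcal{O}_{\M}^\star$, and since $\M'\in\Lambda^{\mathrm{wc}}(\M)$ the transition and reward kernels of $\M$ and $\M'$ coincide there. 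Because the state distribution of $\pi$ is supported on $\mathrm{supp}(\rho_{\M,h}^\star)$ and is determined only by the kernels at those triplets, it equals $\rho_\M^\star$ under $\M'$ as well (using Assumption~\ref{ass:unique-opt-rho} on the $\M$-side), and the immediate rewards collected along trajectories agree; hence $V_{\M',0}^{\pi}=V_{\M,0}^{\pi}=V_{\M,0}^\star$. This is exactly the reasoning already carried out in property~1 of Lemma~\ref{lemma:alt-properties}, which I would simply invoke rather than repeat.

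\textbf{Concluding by contradiction.} Suppose, toward a contradiction, that some $\pi\in\weakopt(\M)\cap\weakopt(\M')$ exists. By the sub-claim, $V_{\M',0}^{\pi}=V_{\M,0}^\star$, and since $\pi\in\weakopt(\M')$ this forces $V_{\M',0}^\star=V_{\M',0}^{\pi}=V_{\M,0}^\star$. Now pick any $\pi^\star\in\strongopt(\M)$ (which is non-empty by standard MDP theory). Applying the sub-claim to $\pi^\star$ yields $V_{\M',0}^{\pi^\star}=V_{\M,0}^\star=V_{\M',0}^\star$, i.e.\ $\pi^\star\in\weakopt(\M')$, contradicting $\M'\in\Lambda^{\mathrm{wa}}_{\O}(\M)$, i.e.\ $\strongopt(\M)\cap\weakopt(\M')=\emptyset$. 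Hence $\weakopt(\M)\cap\weakopt(\M')=\emptyset$, so $\M'\in\Lambda^{\mathrm{wa}}(\M)\cap\Lambda^{\mathrm{wc}}(\M)$, completing the reverse inclusion.

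I expect the only genuinely delicate step to be the sub-claim: it must carefully combine Assumption~\ref{ass:unique-opt-rho} with Lemma~\ref{lemma:opt-act-vs-rho} to argue that a weakly-confusing MDP preserves the return of \emph{all} optimal policies of $\M$ — including those that leave some states unvisited — rather than only of policies with full support. Everything after that is bookkeeping.
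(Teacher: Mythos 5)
Your proof is correct and follows essentially the same route as the paper's: the forward inclusion from $\strongopt(\M)\subseteq\weakopt(\M)$, and the reverse inclusion by contradiction, using the fact (property~1 of Lemma~\ref{lemma:alt-properties}) that a weakly confusing $\M'$ preserves the value $V_{\M,0}^\star$ of every return-optimal policy of $\M$, so that a common optimal policy of $\M$ and $\M'$ would force every $\bar\pi\in\strongopt(\M)$ to be optimal in $\M'$ as well. The paper phrases this as four numbered facts about $V_{\M',0}^\pi$ and $V_{\M',0}^{\bar\pi}$ rather than as a standalone sub-claim, but the content is identical.
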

\begin{proof}
Clearly, since $\Lambda^{\mathrm{wa}}_{}(\M) \subseteq \Lambda^{\mathrm{wa}}_{\O}(\M)$, we have that
\begin{align*}
    \Lambda^{\mathrm{wa}}(\M) \cap \Lambda^{\mathrm{wc}}(\M) \subseteq \Lambda^{\mathrm{wa}}_{\O}(\M) \cap \Lambda^{\mathrm{wc}}(\M).
\end{align*}
Thus, we only need to prove that the converse is also true. 

Take any $\M' \in \Lambda^{\mathrm{wa}}_{\O}(\M) \cap \Lambda^{\mathrm{wc}}(\M)$. We need to show that $\M' \in \Lambda^{\mathrm{wa}}(\M) \cap \Lambda^{\mathrm{wc}}(\M)$. If $\weakopt(\M) \cap \weakopt(\M') = \emptyset$, then $\M' \in \Lambda^{\mathrm{wa}}(\M) \cap \Lambda^{\mathrm{wc}}(\M)$ and the result is trivial. Therefore, consider the case where this does not hold and, by contradiction, suppose that $\M' \notin \Lambda^{\mathrm{wa}}(\M) \cap \Lambda^{\mathrm{wc}}(\M)$. Since $\strongopt(\M) \cap \weakopt(\M') = \emptyset$ holds by definition, this implies that there exists a policy $\pi \in \weakopt(\M)\setminus\strongopt(\M)$ (i.e., that is return-optimal but not Bellman optimal in $\M$) such that $\pi\in\weakopt(\M')$. Now take any policy $\bar{\pi} \in \strongopt(\M)$ (note that one such policy must exist). Then, we know the following:
\begin{enumerate}
    \item $V_{\M',0}^{\pi} = V_{\M',0}^\star$ since we assumed $\M' \notin \Lambda^{\mathrm{wa}}(\M) \cap \Lambda^{\mathrm{wc}}(\M)$;
    \item $V_{\M',0}^{\bar{\pi}} < V_{\M',0}^\star$ since any Bellman-optimal policy of $\M$ is sub-optimal in $\M'$ by definition;
    \item $V_{\M,0}^\star = V_{\M,0}^\pi = V_{\M,0}^{\bar{\pi}}$ by the definition of return optimality;
    \item $V_{\M',0}^\pi = V_{\M',0}^{\bar{\pi}} = V_{\M,0}^\star$ since $\M'$ is confusing.
\end{enumerate}
But this implies that $\pi$ and $\bar{\pi}$ have the same value in $\M'$, while $\pi$ is optimal and $\bar{\pi}$ is not for such MDP. This is clearly a contradiction. Hence, $\M' \in \Lambda^{\mathrm{wa}}(\M) \cap \Lambda^{\mathrm{wc}}(\M)$, which concludes the proof.
\end{proof}

\end{document}